\theoremstyle{plain}
\newtheorem{theorem}{Theorem}[section]
\newtheorem{proposition}[theorem]{Proposition}
\newtheorem{lemma}[theorem]{Lemma}
\theoremstyle{definition}
\theoremstyle{remark}
\icmltitlerunning{Robust Imitation Learning against Variations in Environment Dynamics}
\begin{document}

\twocolumn[
\icmltitle{Robust Imitation Learning against Variations in Environment Dynamics}




\begin{icmlauthorlist}
\icmlauthor{Jongseong Chae}{kaist}
\icmlauthor{Seungyul Han}{unist}
\icmlauthor{Whiyoung Jung}{kaist}
\icmlauthor{Myungsik Cho}{kaist}
\icmlauthor{Sungho Choi}{kaist}
\icmlauthor{Youngchul Sung}{kaist}

\end{icmlauthorlist}

\icmlaffiliation{kaist}{School of Electrical Engineering, KAIST, Daejeon, South Korea.}
\icmlaffiliation{unist}{Artificial Intelligence Graduate School, UNIST, Ulsan, South Korea}

\icmlcorrespondingauthor{Seungyul Han}{syhan@unist.ac.kr}

\icmlkeywords{Machine Learning, Imitation Learning, Robust Imitation Learning, Multiple Environments Setting, Robust Reinforcement Learning}

\vskip 0.3in
]



\printAffiliationsAndNotice{}  

\begin{abstract}
In this paper, we propose a robust imitation learning (IL) framework that improves the robustness of IL when environment dynamics are perturbed. The existing IL framework trained in a single environment can catastrophically fail with perturbations in environment dynamics because it does not capture the situation that underlying environment dynamics can be changed. Our framework effectively deals with environments with varying dynamics by imitating multiple experts in sampled environment dynamics to enhance the robustness in general variations in environment dynamics. In order to robustly imitate the multiple sample experts, we minimize the risk with respect to the Jensen-Shannon divergence between the agent's policy and each of the sample experts. Numerical results show that our algorithm significantly improves robustness against dynamics perturbations compared to conventional IL baselines.
\end{abstract}

\section{Introduction}
\label{section:introduction}

Reinforcement Learning (RL) is a framework that produces optimal policies for  tasks. Deep neural networks enable RL to handle complex tasks in various simulation environments \cite{rl2:DQN, rl3:DDPGs, rl4:TD3, rl5:SAC, RL6:TRPO, RL7:PPO}. However, current RL still has limitations for deployment into the real world. Two of the main  limitations are \textit{robustness} and \textit{design of reward function}. A typical RL algorithm interacts with a single environment and evaluates the policy with the interaction environment, so the policy becomes specialized to the trained environment and mostly fails when the underlying dynamics are perturbed from the trained environment. In the real world, underlying dynamics are highly likely to be perturbed. For example, consider autonomous driving with RL. The physical dynamics of an autonomous driving car including handling, braking, the road friction coefficient of a rainy day change from those of a clear day. 
To cope with such uncertainty, one could consider learning an expert policy for each of all possible environment dynamics for a given task like car driving, estimating the realized dynamics, and using one of the learned expert policies for the estimated dynamics.  However, estimating  the realized environment dynamics is difficult   because the dynamics of the environment depend on many correlated environment parameters such as gravity, mass, aging, etc. Furthermore, learning a policy for each of all possible dynamics perturbations is infeasible when the dynamics vary continuously.

Robust Reinforcement Learning (Robust RL) is a framework that produces a robust policy against such environment perturbations for a given task.  The aim is to learn a policy that works  well in all possible dynamics perturbations  without estimating the  perturbation. Typical  robust RL  allows the agent to interact in multiple environments \cite{robust3:worst1,robust4:worst2,robust5:worst3} and the policy optimizes the worst case of the expected returns in the multiple interaction environments. This agent can work well both in all the interaction environments and even in an unseen environment with similar dynamics.  Even if such dynamics variation can be handled by robust RL, there still remains the issue of reward function design for many real-world control problems including our example of autonomous driving, since robust RL relies on a well-designed  reward function.
When we observe a human drive, it is difficult to know what reward the driver has for each of the driver's actions. 

Imitation Learning (IL) has been developed to cope with such situations by learning a policy for a given task without a reward function \cite{il1:BC,il3:IRL,il4:Apprentice}. IL uses demonstrations generated from an expert for the task instead of a reward function, and the agent tries to mimic the expert.   GAIL is one of the popular IL algorithms and tries to mimic an expert by matching occupancy measure, which is the unnormalized distribution of state-action pairs  \cite{il5:GAIL}. 
Up to now, however, most IL algorithms have been proposed for a single interaction environment with perfect or non-perfect expert demonstration to yield a policy that is specialized   to the single interaction environment.

In this paper, we consider robust IL to learn a robust policy by IL against continuous environment dynamics perturbation and propose a novel IL framework to learn  a robust policy performing well  over a range of continuous dynamics variation based on demonstrations only at a few sampled dynamics from the continuum, which does not require demonstrations from all the continuum and thus significantly reduces the amount of required  demonstrations. 
The detail of the proposed framework will follow in the upcoming sections.

\section{Related Works}
\label{section:related_works}

\textbf{Imitation Learning:} IL aims to learn a policy by imitating an expert. 
Behavior Cloning (BC) \cite{il1:BC} is an approach of IL  based on   supervised learning. 
\citet{il2:DRIL} alleviated the covariate shift problem of BC. 
Another approach is adversarial imitation learning \cite{il5:GAIL,il6:GAIFO} in which the agent imitates an expert by matching a positive measure.
\citet{il7:AIRL} recovered the reward function using expert demonstration. 
Cross-domain IL \cite{il8:I2L,il24:state_alignment,il25:IRL_dynamics_mismatch,il26:CDIfO} considered the IL problem under dynamics mismatch between the expert and the learner.

The existing robust IL works addressed the IL problem with non-perfect demonstrations \cite{il12:robustIL4,il11:robustIL3} or improved the stability of IL \cite{il9:robustIL1,il10:robustIL2}, and their settings are different from our setting in this paper.
Meta-IL \cite{il13:metaIL1,il14:metaIL2,il15:metaIL3,il16:metaIL4} and Meta-IRL \cite{il17:metaIRL1,il18:metaIRL2} can learn a new task using a few demonstrations by leveraging experiences from similar tasks, whereas our framework doesn't require any demonstrations for test tasks.  
Multi-task IRL \cite{il19:multitaskIRL} proposed a Maximum Causal Entropy IRL framework for multi-task IRL and meta-learning to infer multiple reward functions for each task. \citet{il20:multitaskILsuite} proposed a multi-task benchmark suite for evaluating the robustness of IL algorithms. 
ADAIL \cite{il23:ADAIL} can learn an adaptive policy for environments of varying dynamics, but it assumed that collecting expert demonstrations in multiple
environments is infeasible and used many simulation environments for domain randomization and
environment encoding.

\textbf{Robust Reinforcement Learning:}  Robust RL produces a robust policy over environment perturbations. Robust-MDP \cite{robust1:mdp1,robust2:mdp2} extends uncertainty transition set on  MDP. \citet{robust3:worst1,robust4:worst2,robust5:worst3} estimated the worst case of the expected return among multiple perturbed environments. \citet{robust6:rarl1} addressed the Robust RL problem by using the adversary. \citet{robust7:rarl2,robust8:rarl3} formalized criteria of robustness to action uncertainty.

\section{Preliminaries}
\label{section:preliminaries}

\subsection{Markov Decision Process}
\label{subsection:markov_decision_process}

An MDP is denoted by a tuple $\mathcal{M}=<\mathcal{S},\mathcal{A},\mathcal{P}, r, \gamma>$, where $\mathcal{S}$ is the state space, $\mathcal{A}$ is the action space, $\mathcal{P} : \mathcal{S}\times\mathcal{A}\times\mathcal{S}\to\mathbb{R}^+$ is the transition probability, $r : \mathcal{S}\times\mathcal{A}\to\mathbb{R}$ is the reward function, and  $\gamma\in(0,1)$ is the discount factor. A policy $\pi$ is a (stochastic) mapping $\pi:\mathcal{S}\mapsto\mathcal{A}$. The return $G_t$ is a discounted cumulative sum reward from time step $t$, i.e., $G_t=\sum_{i=t}^{\infty}\gamma^{i-t}r(s_i,a_i)$. The goal  is to learn a policy $\pi$ to maximize the expected return $J(\pi)=\mathbb{E}_{s_0\sim \mu_0, \tau \sim \pi} \left[G_0\right]$ \cite{rl1:sutton}, where $\tau = \{s_0,a_0,s_1,a_1,\ldots\}$ is an episode trajectory and $\mu_0(s)$ denotes the initial state distribution. The occupancy measure $\rho_{\pi}(s,a)=\pi(a|s)\sum_{t=0}^{\infty}\gamma^t\text{Pr}(s_t=s|\pi, \mathcal{P})$ is the unnormalized state-action distribution induced by policy $\pi$, and $\mu_{\pi}(s)=\sum_{t=0}^{\infty}\gamma^t\text{Pr}(s_t=s|\pi, \mathcal{P})$ is the unnormalized state distribution induced by policy $\pi$.

\subsection{Generative Adversarial Imitation Learning}
\label{subsection:generative_adversarial_imitation_learning}

In IL, the agent does not receive a reward for its action. Instead,  the agent learns a policy based on the demonstration of an expert without knowing the explicit expert policy. 
Typically, expert demonstration is  given as a trajectory generated by the expert's policy, $\tau_{E} = \{s_0,a_0,s_1,a_1,\ldots\}$. Generative adversarial imitation learning (GAIL) \cite{il5:GAIL} is one of the popular IL methods using expert demonstration. Based on \cref{proposition1}, GAIL seeks a policy of which occupancy measure is close to that of the expert so that the agent's policy $\pi$ is close to the expert's policy $\pi_E$.

\begin{proposition}[Theorem 2 of \citet{il4:Apprentice} \& Proposition 3.1 of \citet{il5:GAIL}]  \label{proposition1}
In a single environment, the occupancy measure $\rho_{\pi}(s,a)$ satisfies the following Bellman flow constraint for each $(s,a)\in\mathcal{S}\times \mathcal{A}$:
{\scriptsize\begin{align}
    \rho_{\pi}(s,a)=\mu_0(s)\pi(a|s)+\gamma\int_{(s',a')}\mathcal{P}(s|s',a')\rho_{\pi}(s',a')\pi(a|s)
    \label{eq:bellman_flow_constraints2}
\end{align}
}and  the policy $\pi$ whose  occupancy measure is $\rho_{\pi}$ is unique. That is,  the occupancy measure and the policy are in an \textbf{one-to-one} relationship.
\end{proposition}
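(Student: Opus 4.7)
The plan is to prove the two assertions separately: first derive the Bellman flow equation directly from the definition of $\rho_\pi$, then construct an explicit inverse map from occupancy measures to policies to establish the bijection.

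For the flow equation, I would start by writing the unnormalized state distribution as $\mu_\pi(s) = \sum_{t=0}^\infty \gamma^t \Pr(s_t=s\mid\pi,\mathcal{P})$ so that $\rho_\pi(s,a) = \pi(a\mid s)\mu_\pi(s)$. Isolating the $t=0$ term gives $\Pr(s_0=s) = \mu_0(s)$, while for $t\geq 1$ the law of total probability together with the Markov property yields
\begin{align*}
\Pr(s_t=s\mid\pi,\mathcal{P}) = \int_{(s',a')} \mathcal{P}(s\mid s',a')\,\pi(a'\mid s')\,\Pr(s_{t-1}=s'\mid\pi,\mathcal{P}).
\end{align*}
Shifting the index, summing the resulting geometric series, and recognizing $\pi(a'\mid s')\mu_\pi(s') = \rho_\pi(s',a')$ produces $\mu_\pi(s) = \mu_0(s) + \gamma\int \mathcal{P}(s\mid s',a')\rho_\pi(s',a')$. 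Multiplying through by $\pi(a\mid s)$ gives precisely \cref{eq:bellman_flow_constraints2}.

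For the one-to-one correspondence, the forward direction $\pi \mapsto \rho_\pi$ is immediate from the definition. For the inverse direction, given any $\rho$ satisfying the Bellman flow constraint, I would define $\pi_\rho(a\mid s) := \rho(s,a)/\int_{a'}\rho(s,a')\,da'$ on states with positive denominator (and set it to an arbitrary distribution elsewhere, which is harmless since such states contribute zero mass). The proof then has two pieces: (i) show $\rho_{\pi_\rho} = \rho$ by verifying that both satisfy the same Bellman flow equation and invoking uniqueness of its fixed point (the flow operator is a $\gamma$-contraction in an appropriate norm, so the solution is unique); and (ii) show that if $\rho_{\pi_1} = \rho_{\pi_2}$, then $\pi_1 = \pi_2$ almost everywhere on the support of $\mu_{\pi_1}$, because both can be recovered as $\rho_\pi(s,a)/\mu_\pi(s)$.

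The main obstacle I anticipate is the uniqueness step (i): arguing that the Bellman flow equation has a unique solution given $\pi$. This needs a contraction or monotonicity argument, exploiting the discount factor $\gamma < 1$ so that the operator $T\rho(s,a) := \mu_0(s)\pi(a\mid s) + \gamma\pi(a\mid s)\int \mathcal{P}(s\mid s',a')\rho(s',a')$ is a contraction on $L^1$. The remaining bookkeeping (measure-zero issues on unreachable states, continuous action spaces) is routine and does not affect the essential argument, since the bijection is only meaningful on the set of admissible occupancy measures.
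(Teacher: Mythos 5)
Your proof is correct, but there is nothing in this paper to compare it against: Proposition 3.1 is stated as an imported result (Theorem 2 of Syed et al.; Proposition 3.1 of Ho \& Ermon) and the paper supplies no proof of it --- the appendix proofs concern only Theorems 5.1 and 5.2. Measured against the cited literature, your argument is essentially the standard one. The derivation of the flow equation by splitting off the $t=0$ term, applying the Markov property, and re-indexing the geometric sum is the textbook route, and your inverse construction $\pi_\rho(a\mid s)=\rho(s,a)/\int_{a'}\rho(s,a')\,da'$ together with uniqueness of the fixed point of the flow operator is the same skeleton as Syed et al.'s Theorem 2; the only real difference is that they work in finite state-action spaces, where uniqueness follows from invertibility of $I-\gamma P_\pi^{\top}$ (a Neumann-series argument), whereas your $L^1$ $\gamma$-contraction handles continuous spaces directly --- which is in fact the setting this paper needs for MuJoCo. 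The two caveats you flagged are the right ones: the contraction argument requires restricting to nonnegative solutions of finite total mass (integrating the constraint then pins the mass at $1/(1-\gamma)$; the flow equation alone admits a trivial infinite-mass solution), and the ``one-to-one'' claim holds only modulo the behavior of the policy on states of zero visitation measure --- a qualification that the proposition as stated (and the cited works) silently elide.
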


The policy $\pi$ induces the occupancy measure $\rho_{\pi}$, and $\rho_{\pi}$ maps to the unique policy $\pi$. Therefore, GAIL reproduces the expert's policy from the policy update \eqref{eq:gail_algorithm_objective}, which  matches the occupancy measures of the agent's policy and the expert's policy:
{\small
\begin{align}
 &\min_{\pi}\mathcal{D}_{JS}(\bar{\rho}_{\pi},\bar{\rho}_{\pi_E})       \label{eq:gail_algorithm_objective}\\
         &\overset{(a)}{=}\min_\pi \mathbb{E}_{\rho_{\pi_E}}\left[\log \frac{{\rho}_{\pi_E}}{{\rho}_\pi+\rho_{\pi_E}}\right]+\mathbb{E}_{\rho_{\pi}}\left[\log\frac{{\rho}_{\pi}}{\rho_\pi+\rho_{\pi_E}}\right] \nonumber\\
        &\overset{(b)}{=}\min_{\pi}\max_D\mathbb{E}_{\rho_{\pi_E}}\left[\log D(s,a)\right]+\mathbb{E}_{\rho_{\pi}}\left[\log(1-D(s,a))\right]\nonumber
\end{align}
}where $\mathcal{D}_{JS}$ denotes the Jensen-Shannon (JS) divergence, and $\bar{\rho}_\pi$ and $\bar{\rho}_{\pi_E}$ are the normalized occupancy distributions from ${\rho}_\pi$ and ${\rho}_{\pi_E}$, respectively. Here, (a) is valid since the constant normalizer is irrelevant in minimization, and  (b) is valid because the maximizing $D$ value is given by  $D(s,a)=\frac{\rho_{\pi_E}(s,a)}{\rho_\pi(s,a)+\rho_{\pi_E}(s,a)}$, where  discriminator $D$ distinguishes whether a given  pair $(s,a)$ is from expert or not.

{\bf Gradient Penalty:} A variant of GAIL \cite{il21:DAC} uses the gradient penalty (GP) proposed by \citet{GP:improvedWGAN} as a regularization term to enhance the stability of IL. The discriminator update of GAIL with GP is given by{\small
\begin{align}
    \label{eq:GAIL+GP}
    &\max_D \mathbb{E}_{\rho_{\pi_E}}\left[\log D(s,a)\right]+\mathbb{E}_{\rho_{\pi}}[\log(1-D(s,a))] \\
    &\hspace{0.7cm}+ \kappa\mathbb{E}_{\hat{x}}\left(\|\nabla_{\hat{x}} D(\hat{x})\|_2-1\right)^2\nonumber,
\end{align}
}where $\hat{x}\sim(\epsilon\rho_{\pi}+(1-\epsilon)\rho_{\pi_E})$ with $\epsilon\sim\text{Uniform}[0,1]$, and $\kappa$ is the regularization coefficient to control the GP term. We will call this GAIL+GP.

\section{Motivation}
\label{subsection:motivation_for_using_multiple_environments}

The existing IL methods typically interact with a single nominal environment and try to imitate an expert  that is specialized at the single nominal environment. 
For further discussion, we define three types of environment: the \emph{interaction} environment, the \emph{demonstration} environment and the \emph{test} environment. The interaction environment is the one with which the agent interacts to obtain policy samples during the training,  the  demonstration environment is the one from which the expert demonstration is generated to  train the agent, and the test environment is the actual test environment for the trained agent policy. The interaction environment and the demonstration environment are the same for conventional IL with a single nominal environment (SNE). 
We will refer to this IL training setting as SNE/SNE (interaction environment / demonstration environment). In most cases, IL trained in the SNE/SNE setting fails  when the  actual test environment dynamics are perturbed from the nominal dynamics, as seen in \cref{figure:gail_wg,figure:gail_wm}.  In \cref{figure:gail_wg,figure:gail_wm}, the x-axis value denotes the ratio (in percentage) of the gravity (or mass) of the test environment to that of the nominal interaction/demonstration environment and the y-axis shows the mean return of the policy trained under the SNE/SNE setting at the corresponding $x$ value. It is seen that the performance degrades severely as the test environment dynamics deviate from the nominal interaction/demonstration environment dynamics.

\begin{figure}[t]
    \centering
    \begin{subfigure}[b]{0.48\linewidth}
        \centering
        \includegraphics[width=\textwidth]{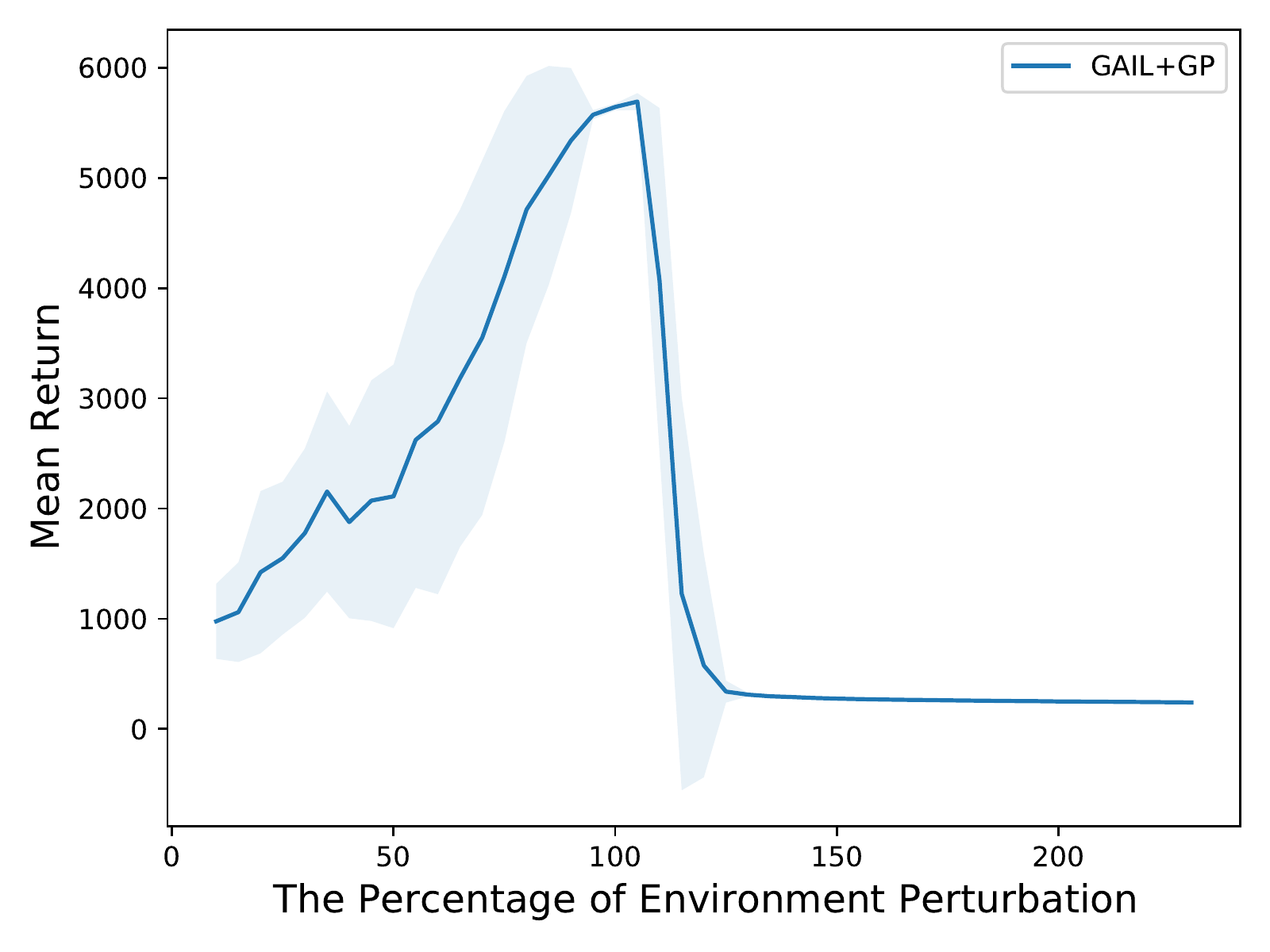}
        \vskip -0.05in
        \captionsetup{justification=centering}
        \caption{Walker2d+Gravity:\\GAIL+GP\label{figure:gail_wg}}
    \end{subfigure}
    \begin{subfigure}[b]{0.48\linewidth}
        \centering
        \includegraphics[width=\textwidth]{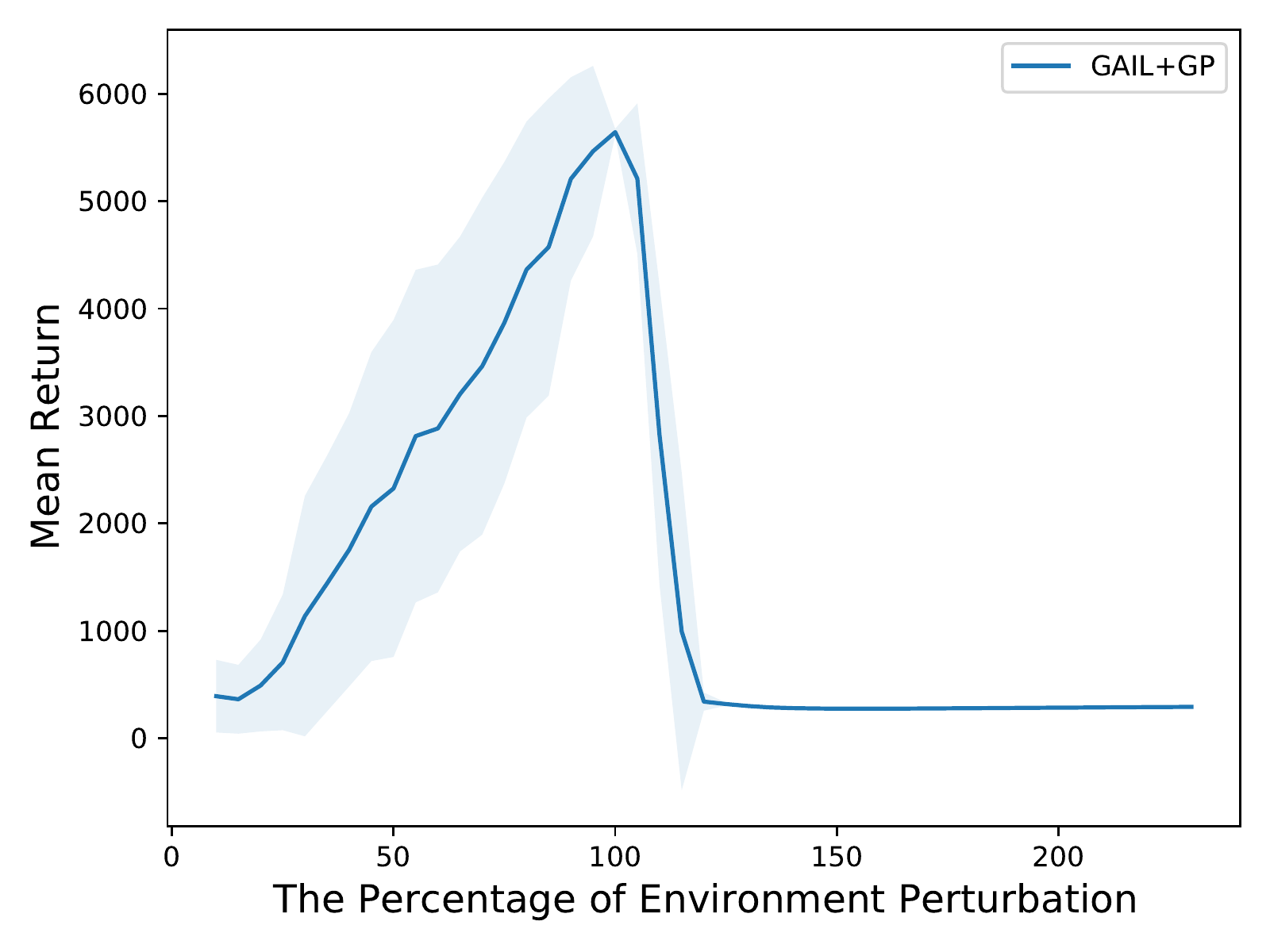}
        \vskip -0.05in
        \captionsetup{justification=centering}
        \caption{Walker2d+Mass:\\GAIL+GP\label{figure:gail_wm}}
    \end{subfigure}
    \begin{subfigure}[b]{0.48\linewidth}
        \centering
        \includegraphics[width=\textwidth]{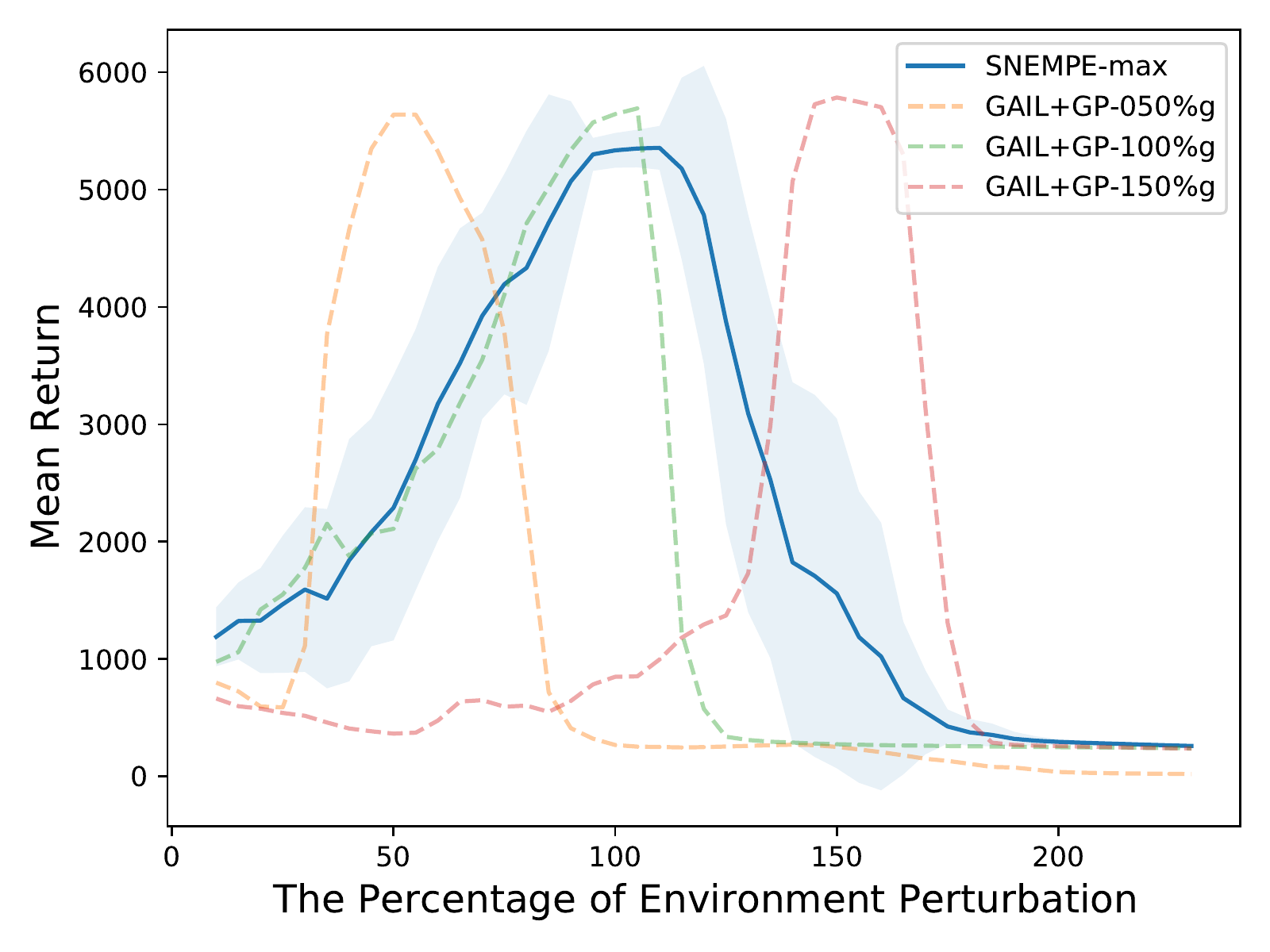}
        \vskip -0.05in
        \captionsetup{justification=centering}
        \caption{Walker2d+Gravity:\\SNEMPE-max \label{figure:snmpe_wg}}
    \end{subfigure}
    \begin{subfigure}[b]{0.48\linewidth}
        \centering
        \includegraphics[width=\textwidth]{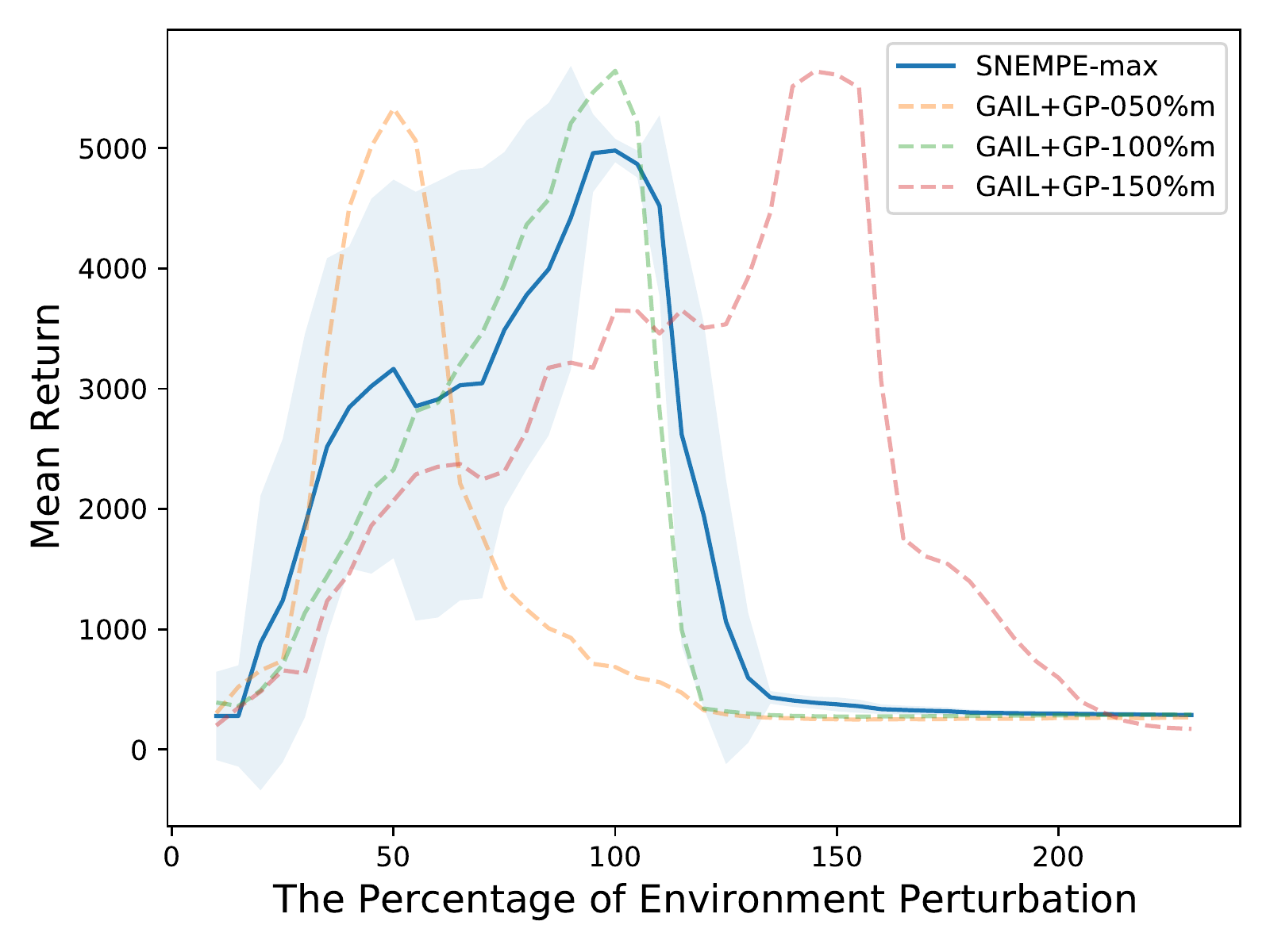}
        \vskip -0.05in
        \captionsetup{justification=centering}
        \caption{Walker2d+Mass:\\SNEMPE-max \label{figure:snmpe_wm}}
    \end{subfigure}
    \vskip -0.1in
    \caption{Performance of the policy trained in a single nominal environment against perturbation (left column -  gravity perturbation and  right column - mass perturbation): (a,b) - performance of GAIL+GP, and (c,d) - SNEMPE-max. \label{figure:figure1_results}} \vskip -0.2in
\end{figure}
To handle such performance degradation, robust RL samples a few environments with perturbed dynamics. For example, in \cref{figure:snmpe_wg}, three environments with gravity 50\%, 150\% and nominal  100\% are sampled.  Then, robust RL allows the agent to interact with these  multiple sampled environments (MPE) so that the agent's policy can capture the various dynamics of the multiple environments. Then, robust RL typically solves  $\max_{\pi}\min_{\mathcal{P}^i\in P}\mathbb{E}_{\pi}[G_t|\mathcal{P}^i]$, where $P=\{\mathcal{P}^i\}$ is the selected environment set.  By maximizing the worst-case  expected return, the agent's policy can capture the varying dynamics in the selected environment set $P=\{\mathcal{P}^i\}$.  However,
robust RL requires a well-designed reward function, which we want to avoid.  

Now, consider robust IL. One simple approach is to apply the above robust RL principle to the IL setting. Here, we obtain  expert demonstrations from multiple sampled demonstration environments  and have a single policy interacting with the single nominal interaction environment. Then, we use discriminators to distinguish the policy samples from each of the multiple sampled expert demonstrations, and train the policy to follow the worst-case, i.e., the expert demonstration that is farthest from the policy sample based on the discriminator outputs. The performance of so learned policy in the perturbed test environment is shown in \cref{figure:snmpe_wg,figure:snmpe_wm} (the corresponding performance is denoted as SNEMPE-max). It is seen that the policy learned in such way improves robustness compared with conventional SNE/SNE IL in \cref{figure:gail_wg,figure:gail_wm}, but the performance is not satisfactory. This degradation implies that policy interaction with the single nominal environment is not enough to capture the dynamics variation even with expert demonstrations from multiple sampled demonstration environments.  Thus, in order to fully capture the dynamics variation, we first sample a few environments with different dynamics from the continuous dynamics distribution and use these multiple sampled environments not only for expert demonstrations but also for policy interaction during the training. We refer to this setting as the MPE/MPE IL setting. In the remainder of this paper, we propose an efficient IL framework based on the MPE/MPE IL setting to yield a policy that  performs robustly against continuous environment dynamics variation based only on a few sampled dynamics for training.

\section{Robust Imitation Learning against Variations in Environment Dynamics}
\label{section:robust_imitation_learning_in_multiple_environments_setting}

\subsection{Problem Formulation}
\label{subsection:problem_setting}

We consider a collection of MDPs $\mathcal{C}=\{\mathcal{M}=<\mathcal{S},\mathcal{A},\mathcal{P}_\zeta,$ $r, \gamma>, ~\zeta \in Z\}$, where the state and action spaces are the same for all members of the collection, the reward is unavailable to the agent,  the transition probability modeling the dynamics is parameterized with  parameter $\zeta$, and the dynamics parameter $\zeta$ can be continuously varied or perturbed from the nominal value $\zeta_0$ within the set $Z$. Among this continuous collection, we sample $N$ MDPs  with dynamics parameters $\zeta_1,\zeta_2,\cdots,\zeta_N$. We denote these $N$ environments with dynamics $\mathcal{P}_{\zeta_1},\cdots,\mathcal{P}_{\zeta_N}$ (simply denoted as $\mathcal{P}^1,\cdots,\mathcal{P}^N$) by $\mathcal{E}_1,\cdots,\mathcal{E}_N$.  We assume that there exists an expert $\pi_E^i$ for each environment $\mathcal{E}_i$, the expert $\pi_E^i$ generates expert demonstration for the agent, but the expert policy $\pi_E^i$  itself is not available to the agent. We also assume that the agent can interact with each of all sampled environments $\mathcal{E}_1,\cdots,\mathcal{E}_N$, and  the initial state distributions of all interaction environments are the same as $\mu_0(s)$.   Thus, according to our definition in the previous section, $\mathcal{E}_1,\cdots,\mathcal{E}_N$ are both demonstration and interaction environments, and the setting is MPE/MPE. Note that the expert demonstrations at $\mathcal{E}_1,\cdots,\mathcal{E}_N$
are partial information about the entire MDP collection $\mathcal{C}$. Our goal is for the agent to learn a policy $\pi$ that performs well for all members in the MDP collection $\mathcal{C}$ based only  on the expert demonstrations from and agent interaction with the sampled environments $\mathcal{E}_1,\cdots,\mathcal{E}_N$.  We will refer to this problem as {Robust Imitation learning with Multiple perturbed Environments} (RIME).

Let us introduce a few more notations. 
$\rho_{\pi}^i(s,a)=\pi(a|s)\sum_{t=0}^{\infty}\gamma^t\text{Pr}(s_t=s|\pi, \mathcal{P}^i)$ denotes  the occupancy measure of $\pi$ in the $i$-th interaction environment $\mathcal{E}_i$. $\mu_{\pi}^i(s)=\sum_{t=0}^{\infty}\gamma^t\text{Pr}(s_t=s|\pi, \mathcal{P}^i)$ denotes the unnormalized state marginal of $\pi$ in the $i$-th interaction environment $\mathcal{E}_i$. For simplicity, we denote $\rho_{\pi_E^j}^j(s,a)$ and $\mu_{\pi_E^j}^j(s)$ by $\rho_E^j(s,a)$ and $\mu_E^j(s)$, respectively. 
The  expert demonstration $\tau_E^i$ is given by the state-action pair trajectory from expert policy $\pi_E^i$ specialized in the $i$-th demonstration environment $\mathcal{E}_i$ with dynamics $\mathcal{P}^i$.  $D_{ij}(s,a):\mathcal{S}\times\mathcal{A}\to[0,1]$ is a discriminator that distinguishes whether a state-action pair $(s,a)$ is from policy $\pi$ interacting with $\mathcal{E}_i$ or from expert $\pi_E^j$.

\begin{figure}[t]
    \centering
    \includegraphics[width=\columnwidth]{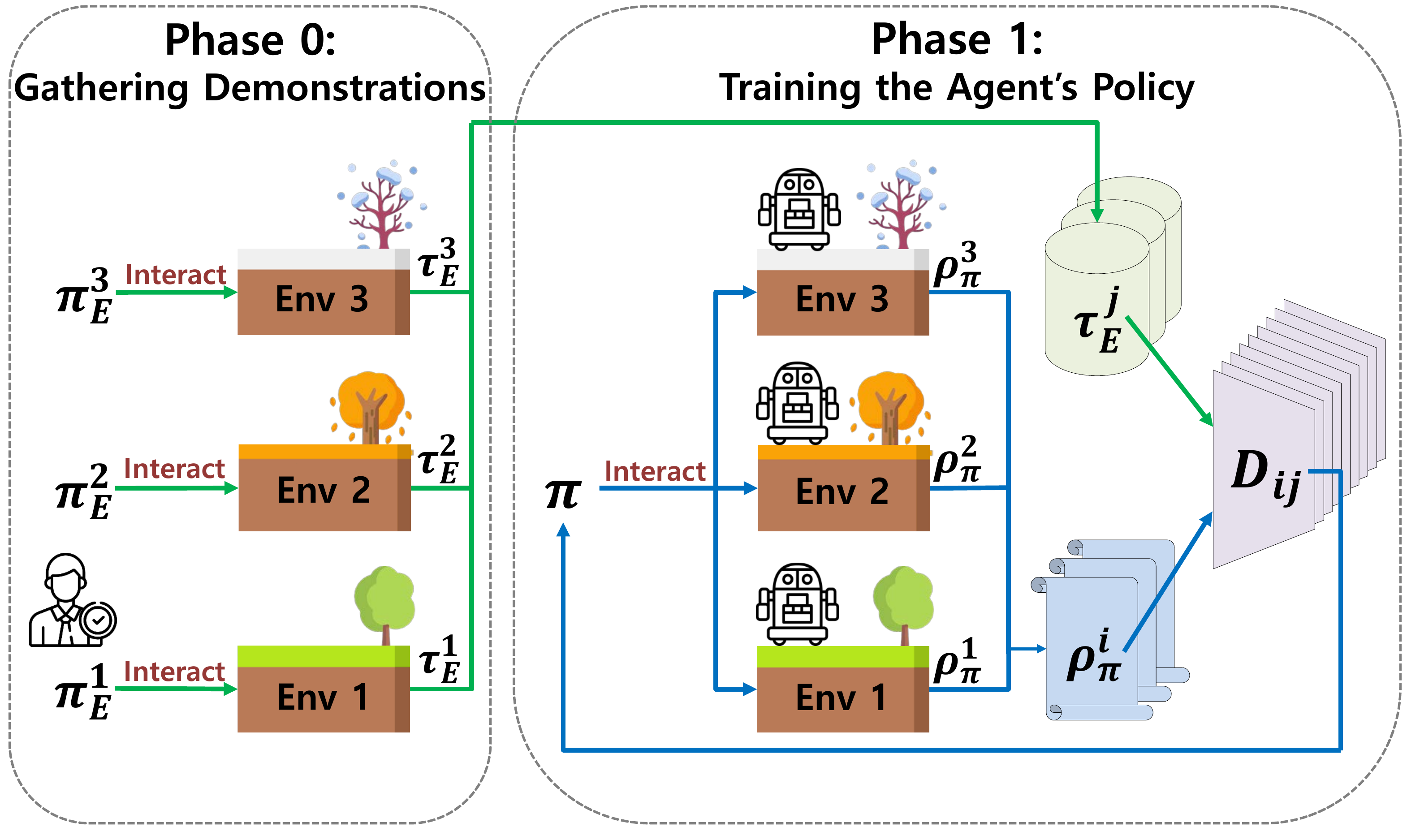}
    \vskip -0.1in
    \caption{Overview of our algorithm. The blue line is the flow of policy samples $\rho_{\pi}^i$, and the green line is the flow of expert demonstrations $\tau_E^j\sim\rho_E^j$. \label{figure:overview}}
    \vskip -0.15in
\end{figure}

\subsection{Direct Optimization in the  Policy Space}
\label{subsection:directly_optimizing_on_policy_space}

In order to solve the RIME problem,
one can consider the occupancy matching technique which is used in GAIL. As mentioned in \cref{subsection:generative_adversarial_imitation_learning}, in the single environment setting, the occupancy measure $\rho_{\pi}$ satisfies the Bellman flow constraint \eqref{eq:bellman_flow_constraints2}, and there exists a one-to-one mapping between  the occupancy measure and the policy.  By this relationship, the agent's policy can imitate the expert by matching its occupancy measure close to that of the expert.
In the multiple environment setting, however, the situation is not so simple as in the single environment case. Suppose that the agent policy $\pi$  interacts uniformly with $N$ environments $\mathcal{E}_1,\cdots,\mathcal{E}_N$ with the same state-action space but different transition probabilities $\mathcal{P}^1,\cdots,\mathcal{P}^N$.
Then, the occupancy measure of $\pi$  becomes the mixture, i.e.,  $\rho_{\pi}=\frac{1}{N}\sum_{i=1}^{N}\rho_{\pi}^i$, and the corresponding Bellman flow equation is given by
\begin{align}
    \label{eq:multi_environment_occupancy_measure}
    &\rho_{\pi}(s,a)=\frac{1}{N}\sum_{i=1}^{N}\rho_{\pi}^i(s,a)=\mu_0(s)\pi(a|s) \nonumber\\
    & ~~~~~+\gamma \frac{1}{N}\sum_{i=1}^{N}\int_{(s',a')}\mathcal{P}^i(s|s',a')\rho_{\pi}^i(s',a')\pi(a|s).
\end{align}
There exists a distinct characteristic in (\ref{eq:multi_environment_occupancy_measure}) from the single-environment equation    (\ref{eq:bellman_flow_constraints2}). 
For simplicity of exposition, suppose that the state space $\mathcal{S}$ and the action space $\mathcal{A}$ are discrete and finite with cardinalities $|\mathcal{S}|$ and $|\mathcal{A}|$, respectively. In the case of (\ref{eq:bellman_flow_constraints2}), we have a linear system of equations with $|\mathcal{S}||\mathcal{A}|$ unknowns $\rho_\pi(s,a),~(s,a)\in \mathcal{S}\times \mathcal{A}$ and $|\mathcal{S}||\mathcal{A}|$ equations. Hence, we have a unique solution $\rho_\pi(s,a)$ if the kernel $\mathcal{P}(s|s',a')$ satisfies certain Markov chain conditions.  In the case of (\ref{eq:multi_environment_occupancy_measure}), on the other hand, we have $N|\mathcal{S}||\mathcal{A}|$ unknowns $\rho^i_\pi(s,a),~i=1,\cdots,N$ but $|\mathcal{S}||\mathcal{A}|$ equations. So, the system is underdetermined,  there exist  infinitely many solutions for the set $\{\rho^i_\pi(s,a),~i=1,\cdots,N\}$, and hence the mixture $\rho_\pi = (1/N)\sum_i \rho_\pi^i$ can be infinitely many. Thus, the mapping from $\pi$ to $\rho_\pi$ can be one-to-many, so there is no guarantee to recover $\pi$ from $\rho_\pi$ unless we prove $\{\rho_\pi\} \cap \{\rho_{\pi'}\} = \emptyset, ~\forall \pi,\pi'$ such that $\pi \ne \pi'$. Hence, there is no guarantee for  policy recovery from  occupancy measure matching, and  we need to consider a new approach to the RIME problem.

Our approach is not to use  the occupancy measure as in GAIL but to use  the policy distribution itself.  For the considered MPE/MPE setting, we propose the following objective function to solve the RIME problem:
{\small
\begin{align}
    \label{eq:objective_for_RIME}
    \min_{\pi}\mathbb{E}_{s\sim\frac{1}{N}\sum_{i=1}^{N}\mu_{\pi}^i}\left[\:\sum_{j=1}^{N}\lambda_j(s)\cdot \mathcal{D}(\pi(\cdot|s), \pi_E^j(\cdot|s))\:\right],
\end{align}
}where  $\mathcal{D}$ is some divergence between two policy distributions, and $\sum_j\lambda_j(s)=1$. The objective function (\ref{eq:objective_for_RIME}) means that we want to design the agent policy $\pi$ to 
appropriately
imitate all expert policies $\pi_E^1,\cdots,\pi_E^N$ on the state samples generated by the agent policy  interacting with all interaction environments. Here, $\lambda_j(s)$ is the weight to determine how much $\pi_E^j(\cdot|s)$ is imitated.  Such an objective has been considered for \emph{integration of expert machines} \cite{AmariBook} and is well suited to our purpose.  The key difference between (\ref{eq:objective_for_RIME}) and (\ref{eq:gail_algorithm_objective}) is that in (\ref{eq:gail_algorithm_objective}), the distance between the occupancy measures of  the agent and the expert is minimized based on \cref{proposition1}, whereas in (\ref{eq:objective_for_RIME}) the distance between the policy distribution of the agent and those of the multiple experts is minimized, not requiring the occupancy measures. However, the key challenge to the objective function   (\ref{eq:objective_for_RIME}) is that the expert policies $\pi_E^1,\cdots, \pi_E^N$ are not available but only their demonstrations are at hand. The following theorem is the first step to  circumvent this difficulty. 

\begin{theorem}
    \label{theorem1}
    If $\rho_{\pi}^i(s,a)>0$, $\lambda_j(s)>0$ for any $i,j\in \{1,\cdots,N\}$, $\gamma\in(0,1)$, and $\mathcal{D}$ in \eqref{eq:objective_for_RIME} is the Jensen-Shannon divergence, then the objective function \eqref{eq:objective_for_RIME} is expressed as
    {\small
    \begin{align}
        \label{eq:result_theorem_for_RIME}
        \min_{\pi}\sum_{i=1}^{N}\sum_{j=1}^{N}\max_{D_{ij}}\Big\{\mathbb{E}_{(s,a)\sim\rho_{\pi}^i}\left[\frac{\lambda_j(s)}{2N}\log(1-D_{ij}(s,a))\right]\nonumber\\
        +\mathbb{E}_{s\sim\mu_{\pi}^i,a\sim\pi_{E}^j}\left[\frac{\lambda_j(s)}{2N}\log(D_{ij}(s,a))\right]\Big\}+\frac{\log2}{1-\gamma},
    \end{align}
    }where $D_{ij}$ is a discriminator that distinguishes whether  $(s,a)$ is from policy $\pi$ interacting with $\mathcal{E}_i$ or from expert $\pi_E^j$
\end{theorem}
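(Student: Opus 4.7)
My plan is to reduce Theorem~1 to the same variational (Fenchel-dual) representation of the Jensen--Shannon divergence that underlies the standard GAN/GAIL derivation, but applied pointwise in $s$ rather than globally on occupancy measures. Concretely, I will invoke
$$\mathcal{D}_{JS}(p,q)\;=\;\log 2\;+\;\tfrac12\sup_{D}\Bigl\{\mathbb{E}_{x\sim p}[\log D(x)]+\mathbb{E}_{x\sim q}[\log(1-D(x))]\Bigr\},$$
obtained by writing $\mathcal{D}_{JS}$ as the average of the two KL divergences to the midpoint $m=(p+q)/2$ and noting that the supremum is attained at $D^\ast(x)=p(x)/(p(x)+q(x))$. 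Instantiating $p=\pi_E^j(\cdot|s)$ and $q=\pi(\cdot|s)$ produces a variational form for each summand of the integrand in~(5).

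Next I will unfold the outer state expectation of (5). Reading $\mu_\pi^i$ as the (unnormalized) discounted state measure, whose total mass equals $1/(1-\gamma)$, the objective becomes
$$\frac{1}{N}\sum_{i=1}^N\int_s \mu_\pi^i(s)\sum_{j=1}^N\lambda_j(s)\,\mathcal{D}_{JS}\!\bigl(\pi(\cdot|s),\pi_E^j(\cdot|s)\bigr)\,ds.$$
Substituting the variational identity splits this into two pieces. The $\log 2$ contribution simplifies via $\sum_j\lambda_j(s)=1$ and $\int\mu_\pi^i(s)\,ds=1/(1-\gamma)$ to
$$\frac{\log 2}{N}\sum_{i=1}^N\int_s\mu_\pi^i(s)\,ds\;=\;\frac{\log 2}{1-\gamma},$$
which is exactly the constant appearing in~(6).

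For the supremum piece I introduce a separate discriminator $D_{ij}(s,a)$ for every pair $(i,j)$ and perform a sup-integral interchange. This is the only delicate step, but it is justified by separability: the integrand couples $D_{ij}$ only through its value at the single point $(s,a)$, so the functional optimum coincides with the pointwise optimum $D_{ij}^\ast(s,a)=\pi_E^j(a|s)/(\pi(a|s)+\pi_E^j(a|s))$, which lies in $(0,1)$ under the hypotheses $\rho_\pi^i>0$ and $\lambda_j>0$ (ensuring both full support of $\pi(\cdot|s)$ on the relevant states and a genuine maximizer). After pulling $\sup_{D_{ij}}$ outside the state integral, I rewrite the $a$-expectations in occupancy-measure notation, identifying $\mu_\pi^i(s)\pi(a|s)\,ds\,da = d\rho_\pi^i(s,a)$ for the $\log(1-D_{ij})$ term and $\mu_\pi^i(s)\pi_E^j(a|s)\,ds\,da$ for the $\log D_{ij}$ term, and absorb the $1/(2N)$ prefactor into $\lambda_j(s)/(2N)$. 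Summing over $i,j$ and adding back the constant $\log 2/(1-\gamma)$ yields~(6); all remaining manipulations are bookkeeping of measures and constants.
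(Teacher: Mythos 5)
Your proposal is correct and follows essentially the same route as the paper's proof: both decompose $\mathcal{D}_{JS}$ into a $\log 2$ constant plus a variational term optimized pointwise by $D_{ij}^\ast(s,a)=\pi_E^j(a|s)/(\pi(a|s)+\pi_E^j(a|s))$, use $\sum_j\lambda_j(s)=1$ together with $\int_s\mu_\pi^i(s)\,ds=1/(1-\gamma)$ to obtain the constant $\log 2/(1-\gamma)$, and then rewrite the action expectations via $\mu_\pi^i(s)\pi(a|s)=\rho_\pi^i(s,a)$. The only differences are presentational: you cite the GAN variational identity and the total-mass fact up front (the paper proves the latter as a separate lemma via monotone convergence), and you flag the sup--integral interchange explicitly where the paper simply invokes the pointwise maximizer of $f\mapsto a\log f+b\log(1-f)$.
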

\begin{proof}
    See in \cref{appendix:proof_for_theorem}
\end{proof}

\subsection{Practical Methodology}
\label{subsection:practical_methodology}

Due to the second term $\mathbb{E}_{s\sim\mu_{\pi}^i(s), a\sim\pi_E^j(\cdot|s)}[\cdot]$ in \eqref{eq:result_theorem_for_RIME}, which is eventually replaced with sample expectation in implementation, we still require the expert policies $\pi_E^j,~j=1,\cdots,N$. However, $\pi_E^j$ is not available. One way to circumvent this is to reproduce the expert policy $\pi_E^j$ via Behavior Cloning or GAIL+GP by using its  demonstration $\tau_E^j$. However, we found that this method is not so effective. This is due to the classical generalization problem. That is,  the reproduced expert policy $\hat{\pi}_E^j$ based on $\tau_E^j$ does not cover all states induced by $\pi$  (i.e., $s\sim \mu_\pi^i$). For some states, $\hat{\pi}_E^j$ gives inappropriate actions to the agent policy, and these actions lead to learning failure. (The detailed description and experimental results of this approach are in \cref{appendix:description_for_reproduced_expert_policy}.)
To circumvent this,  using importance sampling,  we modify  \eqref{eq:result_theorem_for_RIME} as follows: 

{\small
\begin{align}
    \label{eq:modified_result_for_RIME}
    \min_{\pi}\sum_{i=1}^{N}\sum_{j=1}^{N}\max_{D_{ij}}\Big\{\mathbb{E}_{(s,a)\sim\rho_{\pi}^i}\left[\lambda_j(s)\log(1-D_{ij}(s,a))\right]\nonumber\\+\mathbb{E}_{(s,a)\sim\rho_{E}^j}\left[\frac{\mu_{\pi}^i(s)}{\mu_{E}^j(s)}\lambda_j(s)\log(D_{ij}(s,a))\right]\Big\}, 
\end{align}
}where the last constant term $\log 2/(1-\gamma)$ and the constant scaling factor $1/2N$ in \eqref{eq:result_theorem_for_RIME} are removed. The difference of  (\ref{eq:modified_result_for_RIME}) from  (\ref{eq:result_theorem_for_RIME}) is that for the  expectation in the  second term,  the sample pair $(s,a)$ is drawn from the expert trajectory, which facilitates implementation. Instead, we need the importance sampling ratio $\frac{\mu_{\pi}^i(s)}{\mu_{E}^j(s)}$. 
However, computing $\mu_\pi^i(s)$ and $\mu_E^j(s)$ for a continuous state space by the Bellman flow equation is difficult because we have an infinitely large space, and also the transition dynamics are unknown in the model-free case. In addition, computing $\mu_\pi^i(s)$ and $\mu_E^j(s)$ based on samples is also difficult unless we assume a predefined model distribution.  One can consider applying histogram-based neural network approaches but then  again faces  the generalization issue.
Hence, instead of  computing $\mu_{\pi}^i(s)$ and $\mu_{E}^j(s)$, we directly estimate the ratio $\frac{\mu_{\pi}^i(s)}{\mu_{E}^j(s)}$ by using f-divergence \cite{IS1:LFIW} (detailed implementation and experimental results are in \cref{appendix:ablation_IS}). However, we found that properly estimating $\frac{\mu_{\pi}^i(s)}{\mu_{E}^j(s)}$ and  setting $\frac{\mu_{\pi}^i(s)}{\mu_{E}^j(s)}$ simply to $1$  have almost the same results for most tasks. Thus, for algorithm simplicity,  we  set the importance ratio to $1$ without estimating the ratio. Indeed, similar approaches were used in \cite{il21:DAC,il22:MAILwithCP}.

With the importance sampling ratio set to $1$, the optimization over $\pi$ and $D_{ij}$ in \eqref{eq:modified_result_for_RIME} is tractable. We can apply alternating optimization over $\pi$ and $D_{ij}$. First, consider optimization over $\pi$ for given $D_{ij}$. Note that $\pi$ affects only the first term $\mathbb{E}_{\rho_{\pi}^i}[\cdot]$ in \eqref{eq:modified_result_for_RIME}.
In the first term, we have the weighting factor $\lambda_j(s)$ such that $\sum_{j=1}^N \lambda_j(s)=1$, and determining proper $\lambda_j(s)$ is  cumbersome. Thus, exploiting the fact $\sum_{j=1}^N \lambda_j(s)=1$, we can rewrite the first term for given $D_{ij}$ by pushing $\sum_{j=1}^N$ into the expectation based on the linearity of expectation,  and obtain its upper bound as
{\small
\begin{align}
        &\min_{\pi}\sum_{i=1}^{N}\mathbb{E}_{\rho_{\pi}^i}\left[\sum_{j=1}^{N}\lambda_j(s)\log(1-D_{ij}(s,a))\right]\nonumber\\
    &\leq\min_{\pi}\sum_{i=1}^{N}\mathbb{E}_{\rho_{\pi}^i}\left[\max_{j}\log(1-D_{ij}(s,a))\right],\label{eq:final_objective_policy}
\end{align}
}where $\mathbb{E}_{\rho_{\pi}^i}[\cdot]$ denotes $\mathbb{E}_{(s,a)\sim\rho_{\pi}^i}[\cdot]$, and the inequality is valid because $\sum_{j=1}^N \lambda_j(s)[\cdot]$ can be considered as an expectation ($\max_{D_{ij}}$ does not appear since $D_{ij}$ is given for this step). Then, we optimize the upper bound of the objective function \eqref{eq:final_objective_policy} for policy $\pi$.

Next, consider the optimization of $D_{ij}$ for given $\pi$. This optimization is simplified due to the following theorem:
\begin{theorem}
    \label{theorem2}
    The following maximization problem without the $\lambda_j(s)$ term has the same optimal solution for $D_{ij}$ as \eqref{eq:modified_result_for_RIME} with $\mu_{\pi}^i(s)/\mu_E^j(s)$ set to 1 for given $\pi$:
    {\small
    \begin{align*}
        \max\limits_{D_{ij}}\left\{\mathbb{E}_{\rho_{\pi}^i}\left[\log(1-D_{ij}(s,a))\right]+\mathbb{E}_{\rho_{E}^j}\left[\log(D_{ij}(s,a))\right]\right\}.
    \end{align*}
    }  
\end{theorem}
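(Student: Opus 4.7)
The plan is to reduce both maximizations to a pointwise problem in $D_{ij}(s,a)$ and observe that they give the same pointwise optimizer. I would begin by noting that for fixed $i,j$ and fixed policy $\pi$, the discriminator $D_{ij}$ appears only inside the single bracket indexed by $(i,j)$ in \eqref{eq:modified_result_for_RIME}, since the double sum decouples across discriminators. Hence the optimization over $D_{ij}$ can be carried out independently for each pair $(i,j)$, and for each pair it reduces to
\[
\max_{D_{ij}} \Big\{ \mathbb{E}_{(s,a)\sim\rho_{\pi}^i}[\lambda_j(s)\log(1-D_{ij}(s,a))] + \mathbb{E}_{(s,a)\sim\rho_E^j}[\lambda_j(s)\log(D_{ij}(s,a))] \Big\},
\]
after substituting $\mu_{\pi}^i(s)/\mu_E^j(s)=1$.

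Next, I would rewrite both expectations as integrals against the measures $\rho_\pi^i$ and $\rho_E^j$ on $\mathcal{S}\times\mathcal{A}$, and pull the common factor $\lambda_j(s)$ outside the integrand at each point. Treating $D_{ij}$ as a function over $\mathcal{S}\times\mathcal{A}$ with values in $[0,1]$, the objective becomes
\[
\int_{\mathcal{S}\times\mathcal{A}} \lambda_j(s)\Big[\rho_{\pi}^i(s,a)\log(1-D_{ij}(s,a)) + \rho_E^j(s,a)\log(D_{ij}(s,a))\Big]\,d(s,a),
\]
which can be maximized pointwise in $D_{ij}(s,a)$. At each $(s,a)$ the inner bracket is a strictly concave function of $D_{ij}(s,a)\in(0,1)$ (using $\rho_\pi^i>0$ from the theorem's hypothesis and $\rho_E^j\ge 0$), so differentiation yields the unique maximizer
\[
D_{ij}^\star(s,a) = \frac{\rho_E^j(s,a)}{\rho_{\pi}^i(s,a)+\rho_E^j(s,a)},
\]
which does \emph{not} depend on $\lambda_j(s)$.

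Finally, I would apply the same pointwise argument to the $\lambda_j$-free problem stated in Theorem~\ref{theorem2}; the integrand is identical except for the missing scalar $\lambda_j(s)$, and the pointwise optimizer is again $\rho_E^j(s,a)/(\rho_\pi^i(s,a)+\rho_E^j(s,a))$. Hence the two problems share the same optimal $D_{ij}$. The only subtlety worth flagging is the positivity condition $\lambda_j(s)>0$ (already assumed in Theorem~\ref{theorem1}): because $\lambda_j(s)$ is a strictly positive multiplicative scalar at every state, scaling the pointwise concave objective by it preserves its argmax, but the claim would fail if $\lambda_j(s)$ were allowed to vanish on sets where $\rho_\pi^i$ or $\rho_E^j$ are supported. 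I do not anticipate a genuine technical obstacle here; the argument is essentially the same pointwise derivation used in \cref{proposition1} applied separately to each $(i,j)$.
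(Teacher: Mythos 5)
Your proof is correct and follows essentially the same route as the paper's: both reduce each maximization to a pointwise problem in $D_{ij}(s,a)$, note that the integrand has the form $g\log(1-D_{ij})+h\log(D_{ij})$ with maximizer $h/(g+h)$, and observe that the strictly positive factor $\lambda_j(s)$ cancels, yielding the common optimizer $\rho_E^j(s,a)/\bigl(\rho_\pi^i(s,a)+\rho_E^j(s,a)\bigr)$. Your explicit remark that the claim hinges on $\lambda_j(s)>0$ is a sensible flag but does not change the substance; the paper inherits this condition from Theorem~\ref{theorem1}.
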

\begin{proof}
    See in \cref{appendixA:proof_for_theorem2}
\end{proof}
Based on \cref{theorem2} and gradient penalty (GP), we finally derive the objective function of $D_{ij}$  for given $\pi$ as follows:
{\small
\begin{align}
    \max_{D_{ij}}\Big\{&\mathbb{E}_{\rho_{\pi}^i}\left[\log(1-D_{ij}(s,a))\right]+\mathbb{E}_{\tau_{E}^j}\left[\log(D_{ij}(s,a))\right]\nonumber\\
    &+\kappa\mathbb{E}_{\hat{x}}\left(\|\nabla_{\hat{x}} D_{ij}(\hat{x})\|_2-1\right)^2\Big\},
    \label{eq:final_objective_discriminator}
\end{align}
}where $\hat{x}=(s,a)\sim(\epsilon\rho_{\pi}^i+(1-\epsilon)\tau_E^j)$ with $\epsilon\sim\text{Unif}[0,1]$, and $\kappa$ is the weight to control the GP term. 
Note that in \eqref{eq:final_objective_discriminator} we added a gradient penalty term mentioned in \cref{subsection:generative_adversarial_imitation_learning} for stable learning, and $\mathbb{E}_{\rho_E^j}$ is replaced with  $\mathbb{E}_{\tau_E^j}$.

Note that the number of discriminators $D_{ij}$ is given by $N^2$, and increases quadratically as the number $N$ of environments increases.
We can reduce this number by using discriminator weight sharing which makes the discriminator models  share a subset of their weights \cite{WS:CoGAN}. 
The discriminators $D_{i1},\cdots,D_{iN}$ share the weights of their input and hidden layers, and hence they can be implemented as one discriminator with $N$ output nodes. We call this Weight-Shared Discriminator (WSD). For WSD $D_i^{\text{Weight-Shared}}$, the $j$-th output of its $N$ output nodes corresponds to the output of $D_{ij}$, and its objective is given by $\sum_j V_{ij}$, where $V_{ij}$ is the individual objective for $D_{ij}$ in \eqref{eq:final_objective_discriminator}.
Using  WSDs $D_i^{\text{Weight-Shared}}$, $i=1,\cdots,N$, the complexity of discriminators is reduced and is almost $\sim N$.

\subsection{Comparison with Occupancy Measure Matching}
\label{subsection:comparison_amom}

Even without guarantee of the recovery of policy distribution from the occupancy measure in the case of MPE, we can still apply the occupancy measure matching technique to  MPE/MPE. In this case, a reasonable objective is given by 
{\small
\begin{align} \label{eq:OMMEobj}
    \min_{\pi}\sum_{j=1}^{N}\lambda_j\mathcal{D}_{JS}(\bar{\rho}_{\pi},\bar{\rho}_E^j),
\end{align}
}where $\sum_j\lambda_j=1$,  and $\bar{\rho}_\pi$ and $\bar{\rho}_{E}^j$ are the normalized {occupancy} distributions obtained from ${\rho}_\pi$ and ${\rho}_{E}^j$, respectively. (Other objectives are also considered in \cref{section:experiments}.) Then, we can derive an upper bound of \eqref{eq:OMMEobj} as follows:
{\small \begin{align*}
   & \min_{\pi}\sum_{j=1}^{N}\lambda_j\mathcal{D}_{JS}(\bar{\rho}_{\pi},\bar{\rho}_E^j)\\
    & \leq\min_{\pi}\sum_{i=1}^{N}\sum_{j=1}^{N}\frac{\lambda_j(1-\gamma)}{2N}\max_{D_{ij}}\left\{\mathbb{E}_{\rho_{\pi}^i}\left[\log(1-D_{ij}(s,a))\right]\right.\\
    & ~~~~~~~~~~~~~~~+\left.\mathbb{E}_{\rho_E^j}\left[\log D_{ij}(s,a)\right]\right\}+\log2,
\end{align*}
}where the derivation of this upper bound is in \cref{appendix:description_for_a_matching_occupancy_measures}. 
Now consider the optimization of $\pi$ for given $D_{ij}$ in this case. Again, in order to handle $\lambda_j$, we can replace $\sum_j \lambda_j$ with $\max_j$ to yield another upper bound. Then, the objective function of $\pi$ for given $D_{ij}$ is given by

{\small 
\begin{equation} \label{eq:OMMEobj2}
\min\limits_{\pi}\sum_{i=1}^{N}\max_j\mathbb{E}_{\rho_{\pi}^i}[\log(1-D_{ij}(s,a))].
\end{equation}
}We refer to this method as Occupancy measure Matching in Multiple Environments (OMME). The key difference of the objective \eqref{eq:OMMEobj2} from the proposed one in  \eqref{eq:final_objective_policy} is that the operation $\max_j$ is outside the expectation $\mathbb{E}_{\rho_{\pi}^i}[\cdot]$. Note that the order is not interchangeable since $\max_j$ is a nonlinear operation. We will see that this seemingly-slight difference makes a significant performance difference in \cref{section:experiments}.

\begin{table*}[h]
    \caption{Mean return / minimum return over the dynamics parameter range [50\%,150\%] in the $2$ sampled environment case}
    \label{2env_table}
    \centering
    \begin{footnotesize}
    \begin{tabular}{l||c|c|c|c}
    \toprule
    Algorithm & Hopper +Gravity & Walker2d +Gravity & HalfCheetah +Gravity & Ant +Gravity \\
    \midrule
    RIME (ours) & \textbf{2886.7} / \textbf{2332.4} & \textbf{4577.1} / \textbf{4260.9} & \textbf{4268.9} / \textbf{3712.0} & \textbf{4402.2} / \textbf{3909.9} \\
    {RIME+{WSD} (ours)} & \textbf{2857.8} / \textbf{2333.2} & \textbf{4539.3} / \textbf{4235.8} & \textbf{4292.9} / \textbf{3802.5} & \textbf{4388.7} / \textbf{3871.8} \\
    OMME & 2020.3 / 1354.3 & 4467.2 / 3868.7 & 3854.4 / 3352.9 & 3787.8 / 2715.1 \\
    GAIL-mixture & 1797.9 / 959.3 & 3286.7 / 1256.7 & 3688.6 / 2998.4 & 3614.1 / 2856.5 \\
    GAIL-single & 1616.4 / 844.7 & 3210.0 / 1289.0 & 3571.8 / 2673.1 & 3314.3 / 2316.7 \\
    BC & 1129.7 / 648.8 & 971.4 / 313.0 & 1299.5 / -18.3 & 2333.8 / 1988.6 \\
    \midrule\midrule
    Algorithm     & Hopper +Mass & Walker2d +Mass & HalfCheetah +Mass & Ant +Mass \\
    \midrule
    RIME (ours) & \textbf{3535.7} / \textbf{3255.6} & \textbf{4597.0} / 3965.8 & 3959.0 / \textbf{3156.7} & \textbf{4554.5} / \textbf{4417.5} \\
    {RIME+{WSD} (ours)} & \textbf{3499.4} / \textbf{3238.4} & \textbf{4564.7} / \textbf{4174.9} & \textbf{4071.7} / \textbf{3254.7} & \textbf{4539.6} / \textbf{4439.5} \\
    OMME & 3008.6 / 2741.4 & 4046.6 / 3460.0 & 3533.5 / 2732.0 & 4494.6 / 4343.2 \\
    GAIL-mixture & 2334.5 / 1333.5 & 3493.8 / 1425.3 & 2794.7 / 1951.0 & \textbf{4504.9} / 4301.7 \\
    GAIL-single & 2194.1 / 1266.6 & 3031.5 / 1220.4 & 3164.9 / 1685.6 & 4031.1 / 3767.4 \\
    BC & 726.5 / 453.9 & 962.6 / 607.4 & 474.2 / -132.9 & 3923.7 / 3519.0 \\
    \bottomrule
    \end{tabular}
    \end{footnotesize}
\end{table*}
\begin{figure*}[t]
    \centering
    \begin{subfigure}[b]{0.23\textwidth}
        \centering
        \includegraphics[width=\textwidth, height=3.1cm]{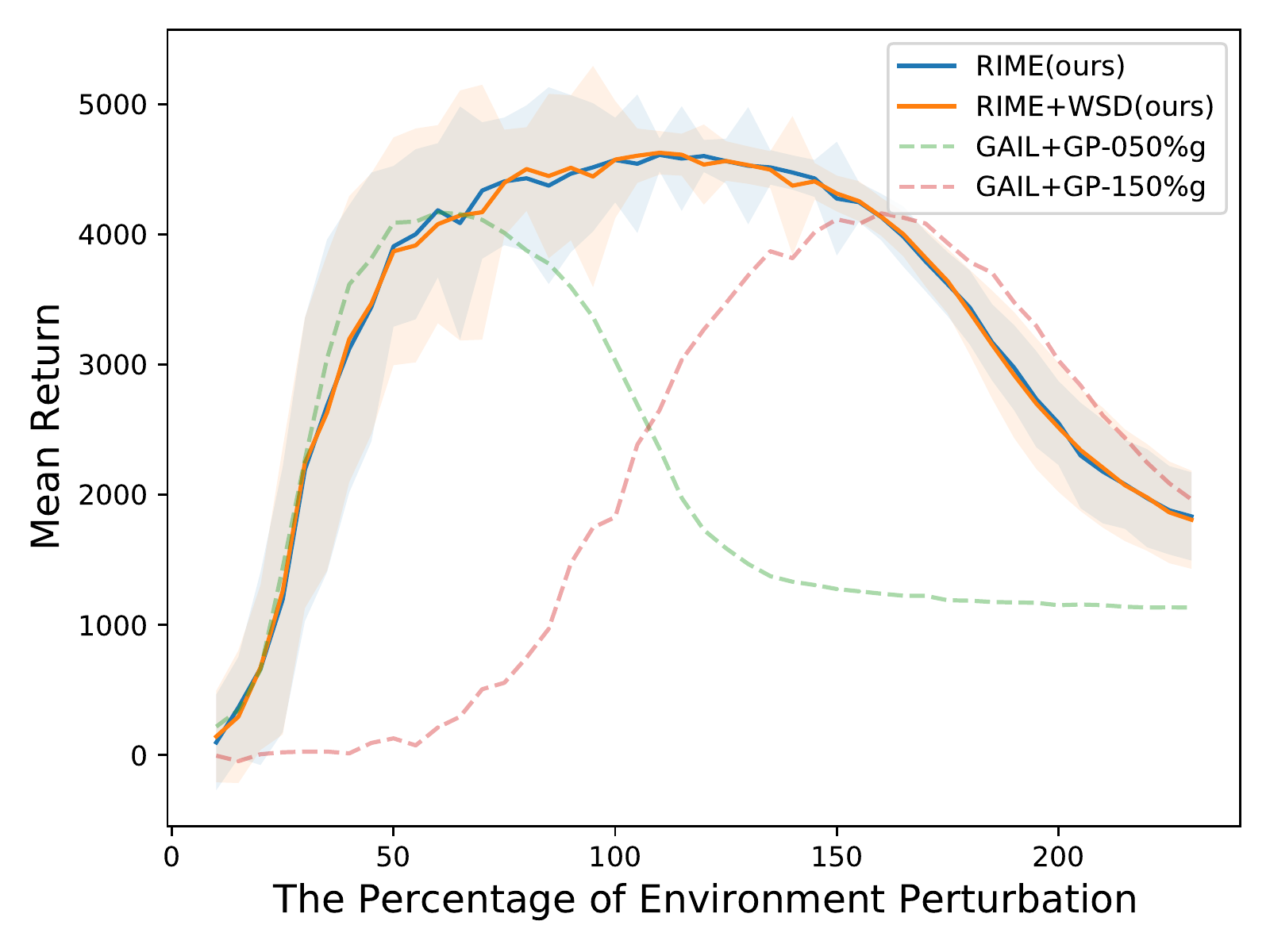}
        \vskip -0.05in
        \captionsetup{justification=centering}
        \caption{Ant+Gravity: performance\label{figure:2envs_results_ag_perf}}
    \end{subfigure}
    \begin{subfigure}[b]{0.23\textwidth}
        \centering
        \includegraphics[width=\textwidth, height=3.1cm]{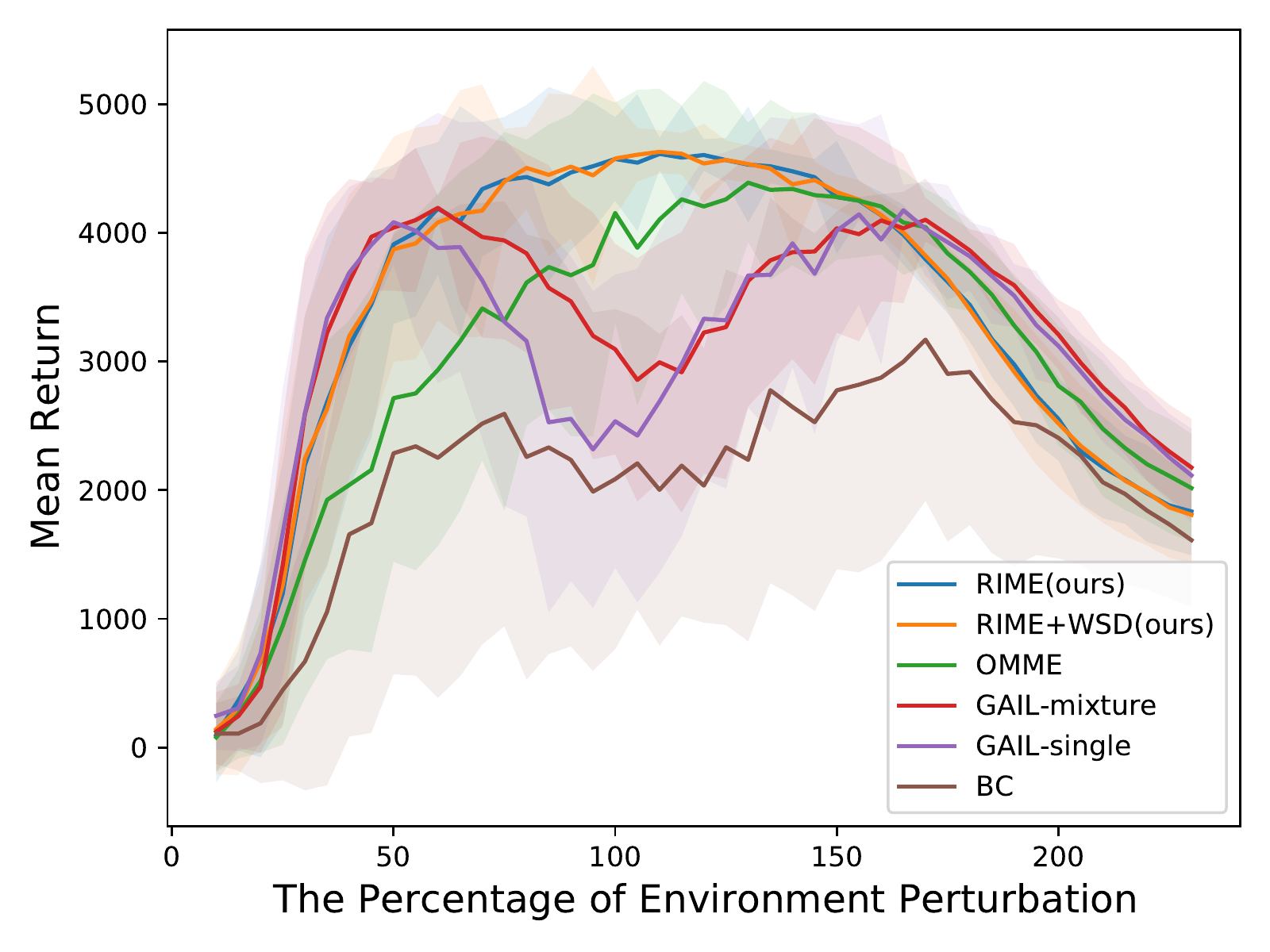}
        \vskip -0.05in
        \captionsetup{justification=centering}
        \caption{Ant+Gravity: comparisons\label{figure:2envs_results_ag_compare}}
    \end{subfigure}
    \begin{subfigure}[b]{0.25\textwidth}
        \centering
        \includegraphics[width=\textwidth, height=3.1cm]{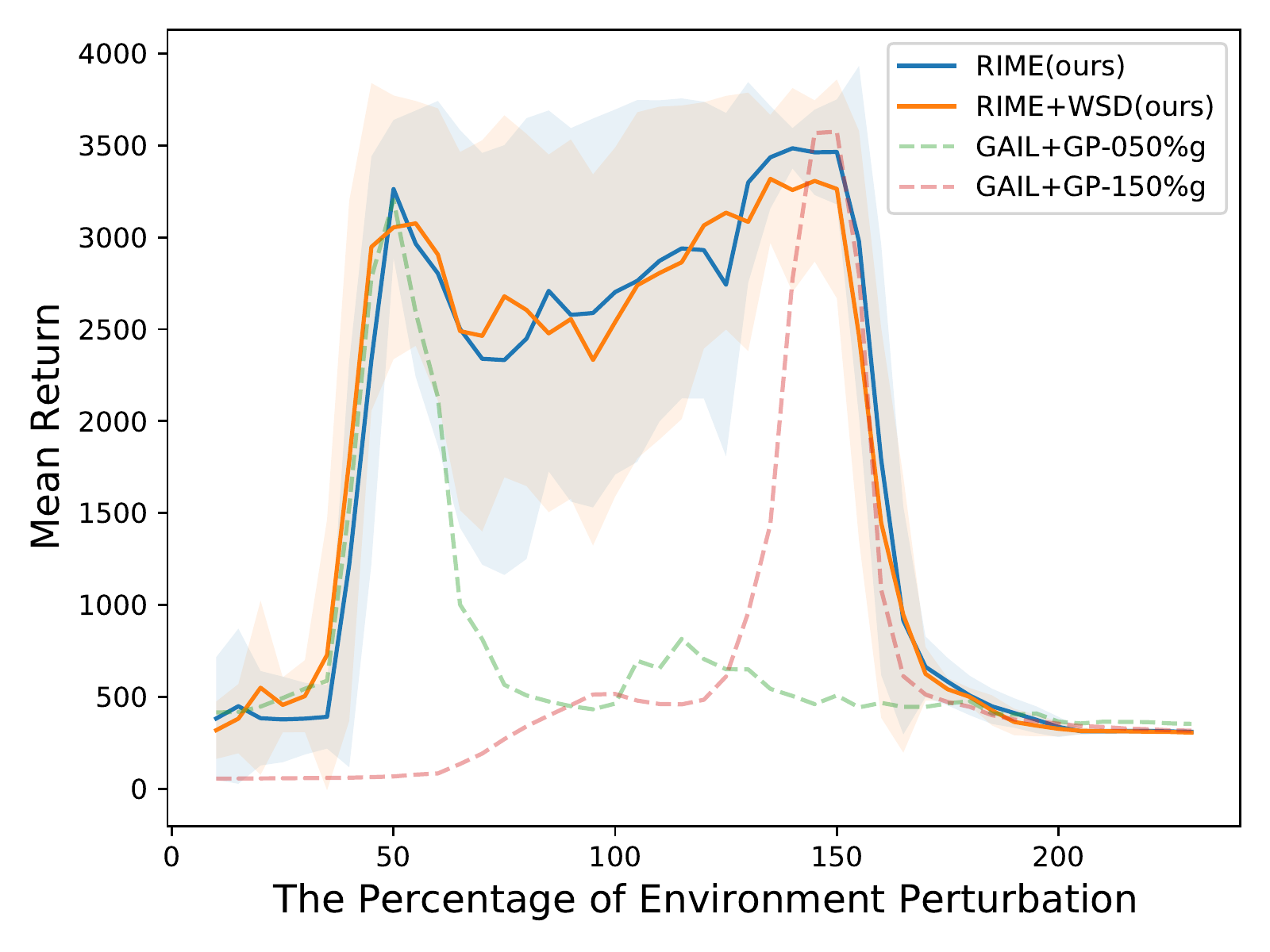}\
        \vskip -0.05in
        \captionsetup{justification=centering}
        \caption{Hopper+Gravity: performance\label{figure:2envs_results_hog_perf}}
    \end{subfigure}
    \begin{subfigure}[b]{0.25\textwidth}
        \centering
        \includegraphics[width=\textwidth, height=3.1cm]{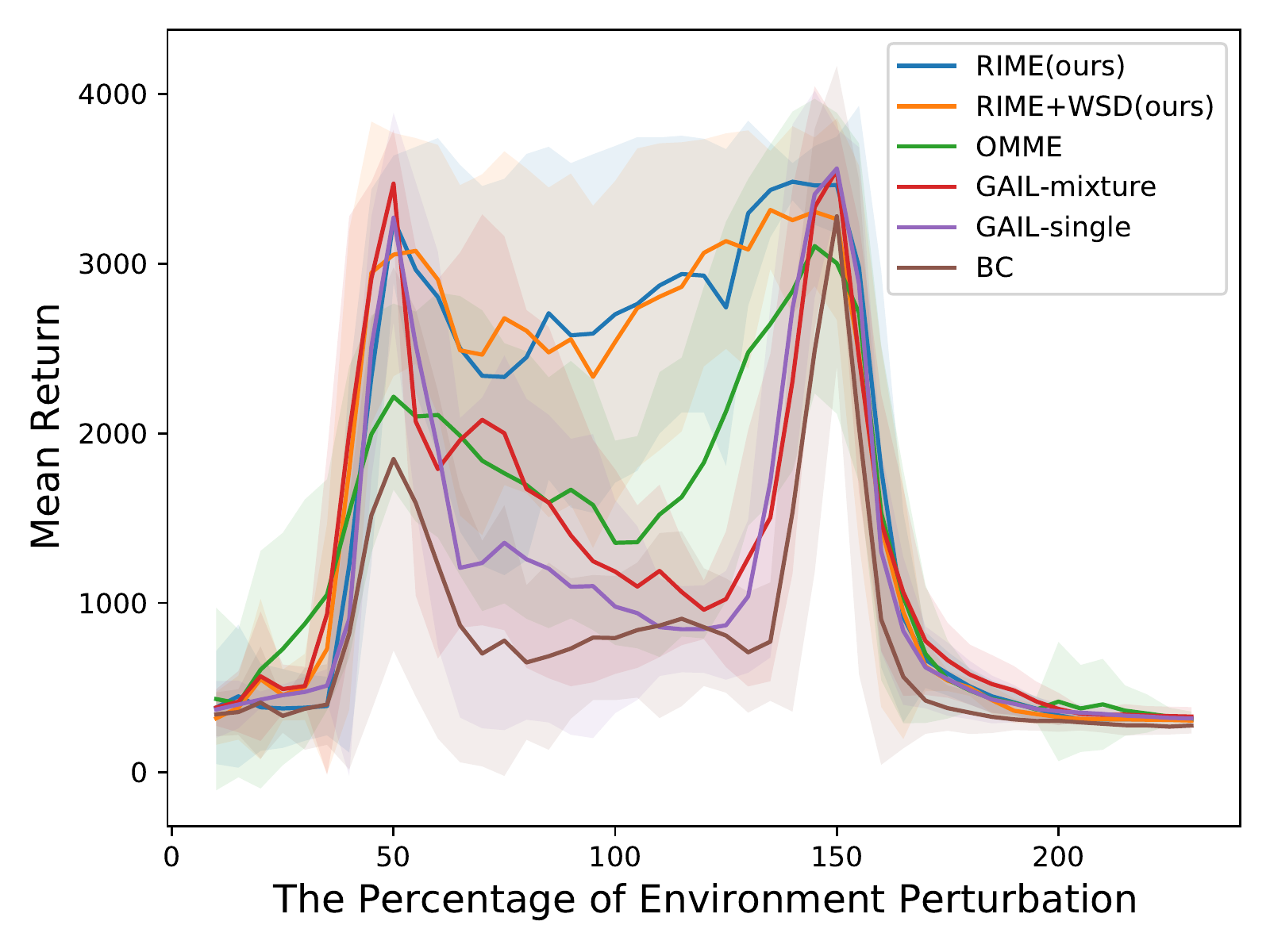}
        \vskip -0.05in
        \captionsetup{justification=centering}
        \caption{Hopper+Gravity: comparisons\label{figure:2envs_results_hog_compare}}
    \end{subfigure}
    \vskip -0.1in
    \caption{Performance on the actual test environment with gravity-perturbed dynamics (the graphs with mass perturbation are in \cref{Appendix:results_in_2_learning_environmnets})  \label{figure:2envs_results}}
    \vskip -0.2in
\end{figure*}

\section{Experiments}
\label{section:experiments}

\subsection{Experimental Settings}
\label{subsection:experimental_settings}

We considered our algorithm together with  the following baselines:
\\
- Behavior Cloning (BC): The policy is trained by supervised learning until validation errors of all expert demonstrations stop decreasing.
\\   
- GAIL-mixture: It is a variant of GAIL+GP for MPE. In this case, we have a single discriminator, and this discriminator distinguishes between all $\bar{\rho}_{\pi}^i$'s and all $\bar{\rho}_E^j$'s. Its objective function for $\pi$ is $\min_{\pi}\mathcal{D}_{JS}(\sum_i\bar{\rho}_{\pi}^i/N,\sum_j\bar{\rho}_{E}^j/N)$. 
\\
- GAIL-single: It is another variant of GAIL+GP for MPE.  In this case, we have  multiple discriminators, and the objective function for $\pi$ is  $\min_{\pi}\sum_i\mathcal{D}_{JS}(\bar{\rho}_{\pi}^i,\bar{\rho}_E^i)$.
\\
- OMME (closest to our algorithm): this is described already. The objective function is given by \eqref{eq:OMMEobj} with \eqref{eq:OMMEobj2}.

Detailed description of the baselines, implementation, expert demonstrations are in \cref{appendix:experimental_information}.  
We considered two versions of the proposed algorithm: RIME and RIME+{WSD}. The only difference between RIME and RIME+{WSD} is the implementation of discriminators $D_{ij}$.  RIME has the discriminators $D_{ij}$ with the objective function \eqref{eq:final_objective_discriminator} and hence the number of the discriminator networks is $N^2$. On the other hand,  RIME+{WSD} uses  {weight-shared discriminator $D_i^{\text{Weight-Shared}}$} described at the end of Section \ref{subsection:practical_methodology}.

We experimented the considered algorithms  on  MuJoCo tasks: Hopper, Walker2d, HalfCheetah and Ant \cite{env:mujoco}.
Each expert demonstration contains $50$ trajectories (i.e., episodes) of state-action pairs generated by the expert and one episode has 1000 timesteps.
We considered   gravity or mass for the considered tasks as our dynamics perturbation parameter $\zeta$. The nominal value $\zeta_0$ means 100\% gravity or mass for each MuJoCo task.  We  trained all algorithms with $10$M timesteps in the case of  experiments with a 1-D dynamics  parameter  and with $5$M timesteps in the case of experiments with 2-D dynamics parameters, and the algorithm for updating the policy is PPO \cite{RL7:PPO,RL8:GAE}.

\begin{table*}[h]
    \caption{Mean return / minimum return over the dynamics parameter range [50\%,150\%] in the $3$ sampled environment case}
    \label{3env_table}
    \centering
    \begin{footnotesize}
    \begin{tabular}{l||c|c|c|c}
    \toprule
    Algorithm & Hopper +Gravity & Walker2d +Gravity & HalfCheetah +Gravity & Ant +Gravity \\
    \midrule
    RIME (ours) & 3164.4 / 2315.5 & \textbf{5197.1} / \textbf{4820.6} & \textbf{5012.8} / 4599.2 & \textbf{4290.8} / \textbf{3485.3} \\
    {RIME+{WSD} (ours)} & \textbf{3281.1} / \textbf{2764.5} & \textbf{5231.7} / \textbf{4894.4} & \textbf{5025.7} / \textbf{4727.9} & \textbf{4168.6} / 3338.2 \\
    OMME & 2878.8 / 2260.6 & 5106.9 / 4484.9 & 4693.4 / 4516.8 & 3689.2 / 2091.5 \\
    GAIL-mixture & 2905.0 / 2289.9 & 4549.7 / 2543.5 & 4746.0 / 4297.6 & 3882.4 / 3431.8 \\
    GAIL-single & 2533.9 / 1276.7 & 4104.0 / 2342.8 & 4512.7 / 4033.1 & 3619.5 / 3088.7 \\
    BC & 798.6 / 448.2 & 791.6 / 594.9 & 1621.9 / 509.8 & 2188.6 / 1129.2 \\
    \midrule\midrule
    Algorithm & Hopper +Mass & Walker2d +Mass & HalfCheetah +Mass & Ant +Mass \\
    \midrule
    RIME (ours) & \textbf{3597.1} / \textbf{3244.6} & \textbf{4752.9} / \textbf{4198.7} & \textbf{5248.2} / 4637.3 & \textbf{4506.1} / \textbf{4384.6} \\
    {RIME+{WSD} (ours)} & \textbf{3585.7} / \textbf{3198.2} & 4704.2 / \textbf{4260.7} & \textbf{5308.5} / \textbf{4868.8} & \textbf{4417.0} / 4202.1 \\
    OMME & 3109.3 / 2815.8 & 4495.2 / 3782.7 & 4802.9 / 4077.7 & 4268.0 / 4036.8 \\
    GAIL-mixture & 3100.1 / 2525.6 & \textbf{4824.1} / 3605.3 & 4237.1 / 3223.3 & 4368.6 / 4075.5 \\
    GAIL-single & 1526.0 / 1011.7 & 4663.1 / 3667.6 & 4088.0 / 2901.7 & 4055.3 / 3603.3 \\
    BC & 410.7 / 162.7 & 704.5 / 354.0 & 1046.7 / -385.2 & 4057.1 / 3740.1 \\
    \bottomrule
    \end{tabular}
    \end{footnotesize}
\end{table*}
\begin{figure*}[t]
    \centering
    \begin{subfigure}[b]{0.19\textwidth}
        \centering
        \includegraphics[width=\linewidth, height=3.1cm]{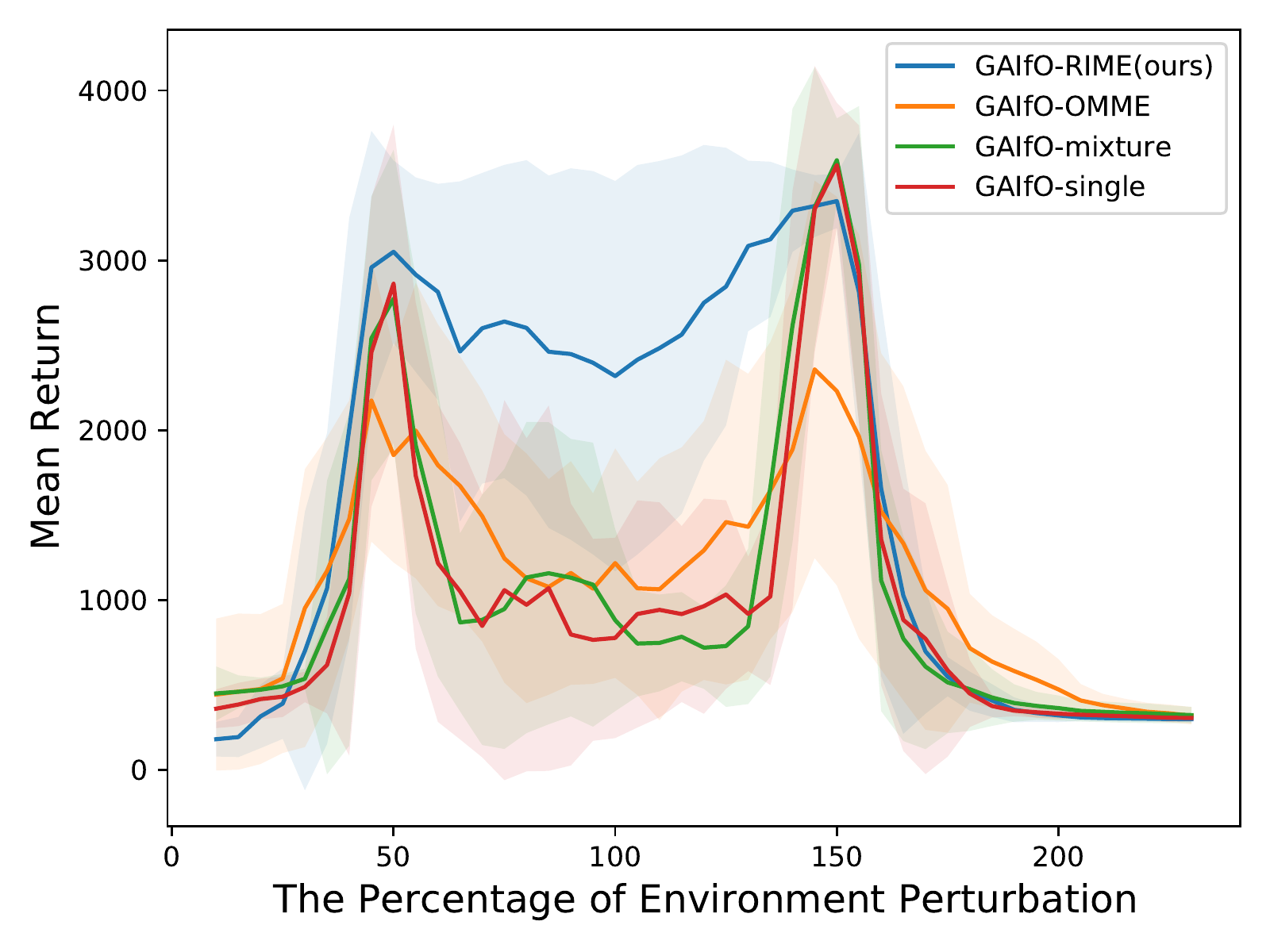}
        \vskip -0.05in
        \captionsetup{justification=centering}
        \caption{Hopper+Gravity:\\variants of GAIfO\label{fig:main_ablation_gaifo1}}
    \end{subfigure}
    \begin{subfigure}[b]{0.19\textwidth}
        \centering
        \includegraphics[width=\linewidth, height=3.1cm]{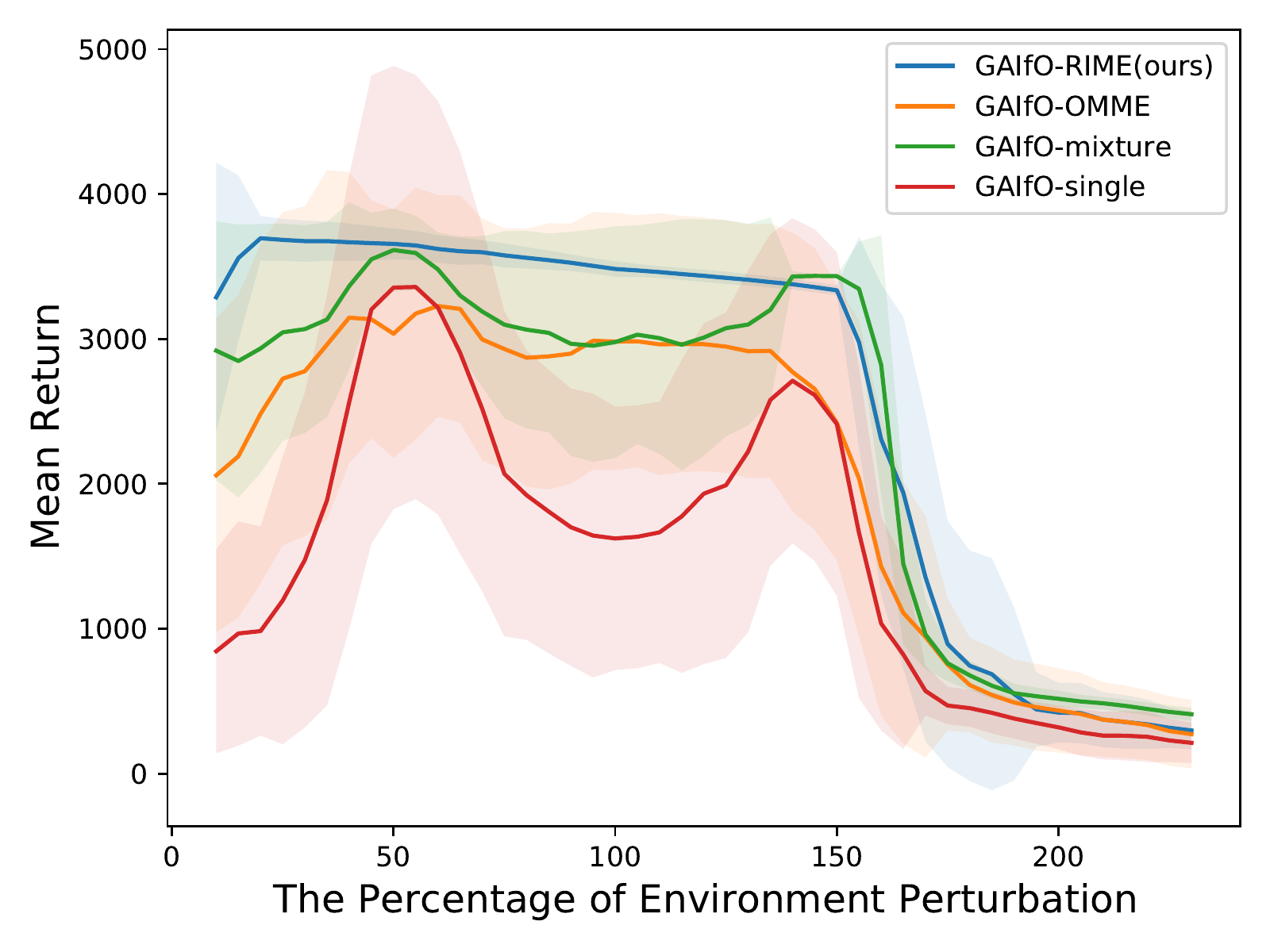}
        \vskip -0.05in
        \captionsetup{justification=centering}
        \caption{Hopper+Mass:\\variants of GAIfO\label{fig:main_ablation_gaifo2}}
    \end{subfigure}
    \begin{subfigure}[b]{0.19\textwidth}
        \centering
        \includegraphics[width=\linewidth, height=3.1cm]{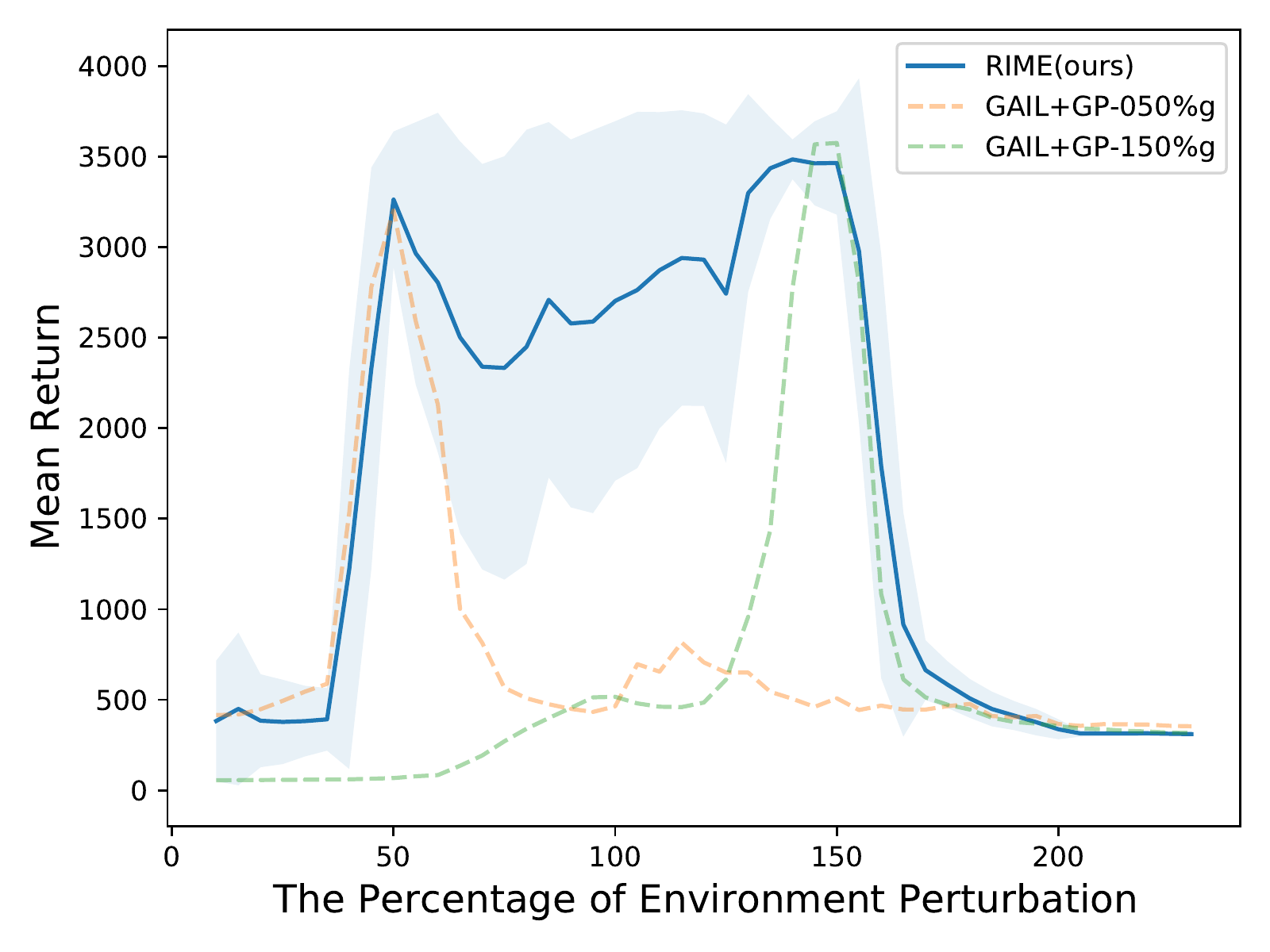}
        \vskip -0.05in
        \captionsetup{justification=centering}
        \caption{RIME in the 2 sampled env. setting\label{fig:main_ablation_wrtN1}}
    \end{subfigure}
    \begin{subfigure}[b]{0.19\textwidth}
        \centering
        \includegraphics[width=\linewidth, height=3.1cm]{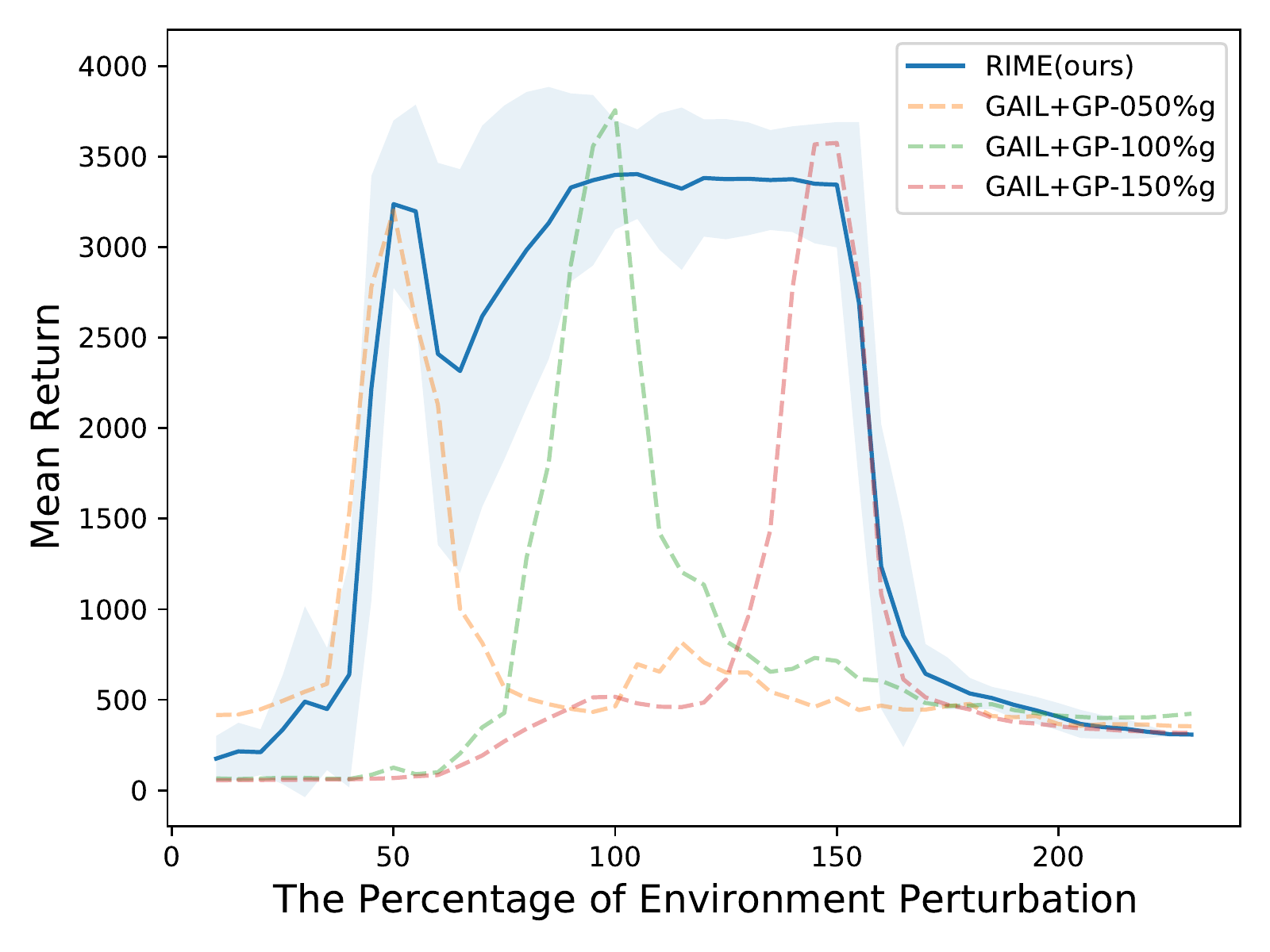}
        \vskip -0.05in
        \captionsetup{justification=centering}
        \caption{RIME in the 3 sampled env. setting\label{fig:main_ablation_wrtN2}}
    \end{subfigure}
    \begin{subfigure}[b]{0.19\textwidth}
        \centering
        \includegraphics[width=\linewidth, height=3.1cm]{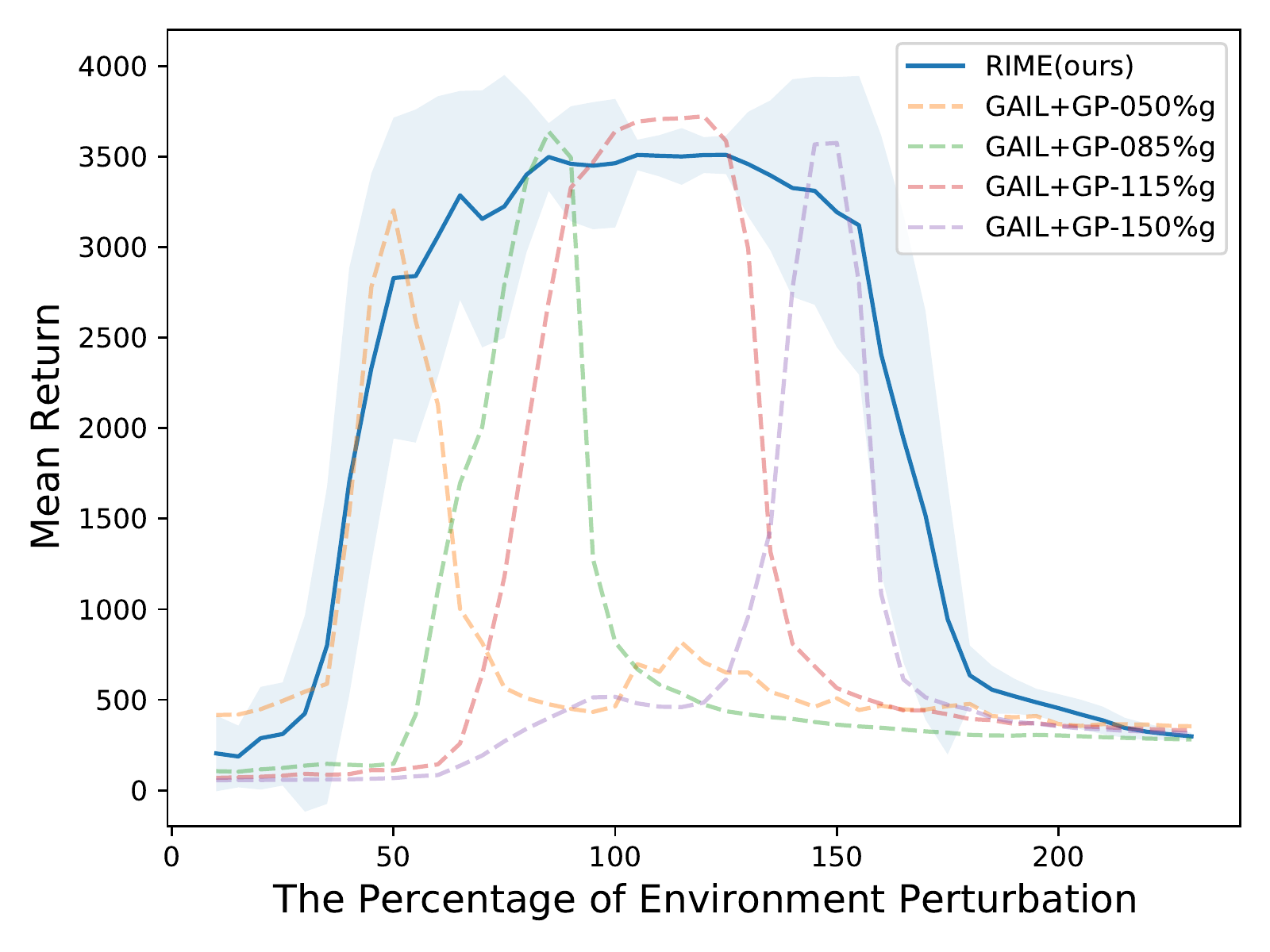}
        \vskip -0.05in
        \captionsetup{justification=centering}
        \caption{RIME in the 4 sampled env. setting\label{fig:main_ablation_wrtN3}}
    \end{subfigure}
    \vskip -0.1in
    \caption{(a-b): Performances when the algorithms use  state-only expert demonstration (the graphs for other tasks are in \cref{appendix:ablation_GAIfO}), (c-e): The performance of RIME with respect to $N$ for Hopper+Gravity task \label{figures:ablation_GAIfO}}
    \vskip -0.2in
\end{figure*}

\subsection{Results}
\label{subsection:results_of_algorithms}

For the same task, we  conducted 3 experiments.  The first two correspond to the case in which a single dynamics parameter (gravity or mass) is perturbed from the nominal value, and the third is the case in which both gravity and mass parameters are perturbed.  The setting for the first is  $N=2$ sampled environments with sampled gravity (or mass) parameters  $50\%\zeta_0$ and $150\%\zeta_0$, and the setting for the second is  $N=3$ sampled environments with sampled gravity (or mass) parameters  $50\%\zeta_0, ~100\% \zeta_0$ and $150\%\zeta_0$.  In the third case, we sampled the joint dynamics of gravity and mass as
$50\%\zeta_{0,g}50\%\zeta_{0,m}, ~50\%\zeta_{0,g}150\%\zeta_{0,m}, ~150\%\zeta_{0,g}50\%\zeta_{0,m}$ and $150\%\zeta_{0,g}150\%\zeta_{0,m}$ with $N=4$. Note that in the third case, we want to cover the variation from 50\% to 150\% for both parameters and only sampled the four corner points in the joint gravity-mass parameter space.

With the sampled $N$ environments, we trained the agent by applying the IL algorithms considered in Section  \ref{subsection:experimental_settings}. Then, in the 1-D perturbation case,  we tested the trained agent policy in  each of  test environments of which dynamics parameter $\zeta$ varies from 10\%$\zeta_0$ to 230\%$\zeta_0$ with 5\%$\zeta_0$ step, i.e., $10\%\zeta_0, 15\%\zeta_0, \cdots, 230\%\zeta_0$. 
In the 2-D perturbation case,  we tested the trained algorithms for each of test environments with dynamics parameters $[50\%, 70\%, \cdots, 150\%]\zeta_{0,g} \times [50\%, 70\%, \cdots, 150\%]\zeta_{0,m}$.

\textbf{IL with 2 Sampled Environments (50\%, 150\%):}
\cref{figure:2envs_results} shows the result in the case of 2 sampled environments with $\zeta=50\%\zeta_0$ and $150\%\zeta_0$. \cref{figure:2envs_results_ag_perf,figure:2envs_results_ag_compare} show the test environment performance of the trained policies of the considered algorithms on the Ant+Gravity task, where the gravity parameter varies. As seen in \cref{figure:2envs_results_ag_perf}, GAIL+GP trained at 50\%$\zeta_0$ and GAIL+GP trained at 150\%$\zeta_0$  perform well only around the trained dynamics. On the other hand, the proposed algorithm (RIME) performs well across all dynamics variation range between the two trained points. It is seen that in the middle the performance of RIME is even better than the peak of the single-environment-specialized  GAIL+GP policy. \cref{figure:2envs_results_ag_compare} shows the performance of other MPE IL algorithms. It is seen that other MPE IL algorithms' performance degrades for the unseen dynamics. Note that the performance sensitivity with respect to the dynamics parameter is mild in the case of Ant+Gravity. \cref{figure:2envs_results_hog_perf,figure:2envs_results_hog_compare} show the test environment performance for Hopper+Gravity in which the performance sensitivity with respect to the dynamics parameter is high. As seen in \cref{figure:2envs_results_hog_perf}, in this case, GAIL+GP can perform only well in a very narrow region around the trained point. On the other hand, the proposed method performs well in the full unseen region between the two trained points. Note that the test performance of the proposed algorithm is superb in the unseen region as compared to other MPE IL baselines, as seen in \cref{figure:2envs_results_hog_compare}.

Table \ref{2env_table} summarizes the robustness performance. We tested each algorithm at the test dynamics $50\%\zeta_0$, $55\%\zeta_0$, $\cdots$, $150\%\zeta_0$ with 5\% quantization between the two sampled dynamics values 50\% and 150\%. We then averaged the performance over the test values and took the minimum performance over the test values. So, when the average and minimum values are equal, the test performance is flat across the tested region, showing the robustness over the variation. It is seen that the proposed algorithm is superior to other algorithms.

\begin{table*}[h]
    \caption{Mean return / minimum return over the dynamics parameter range $[50\%g,150\%g]\times[50\%m,150\%m]$ in the $4$ sampled environments case with 2-dimensional perturbation parameters}
    \label{4env_table}
    \centering
    \begin{footnotesize}
    \begin{tabular}{l||c|c|c|c}
    \toprule
    Algorithm & Hopper + (G\&M) & Walker2d + (G\&M) & HalfCheetah + (G\&M) & Ant + (G\&M) \\
    \midrule
    RIME (ours) & \textbf{3043.3} / \textbf{2430.8} & 4463.4 / 3824.1 & \textbf{3721.3} / 2753.1 & \textbf{4671.7} / \textbf{4233.5} \\
    {RIME+{WSD} (ours)} & 2936.9 / \textbf{2331.6} & \textbf{4646.4} / \textbf{4000.2} & \textbf{3717.9} / \textbf{2891.7} & \textbf{4651.4} / \textbf{4304.5} \\
    OMME & 2573.4 / 1986.4 & 4488.8 / 3029.3 & 3498.5 / 2502.2 & \textbf{4625.3} / 3594.5 \\
    GAIL-mixture & 1636.4 / 712.0 & 3907.8 / 1245.1 & 3018.6 / 1982.3 & 3994.8 / 2746.1 \\
    GAIL-single & 1684.9 / 840.0 & 3844.8 / 2484.2 & 3199.1 / 2072.6 & 3799.7 / 2194.1 \\
    BC & 500.2 / 317.2 & 330.0 / 211.0 & 1289.3 / 30.2 & 1728.2 / 1032.7 \\
    \bottomrule
    \end{tabular}
    \end{footnotesize}
\end{table*}
\begin{figure*}[h]
    \centering
    \includegraphics[width=\textwidth]{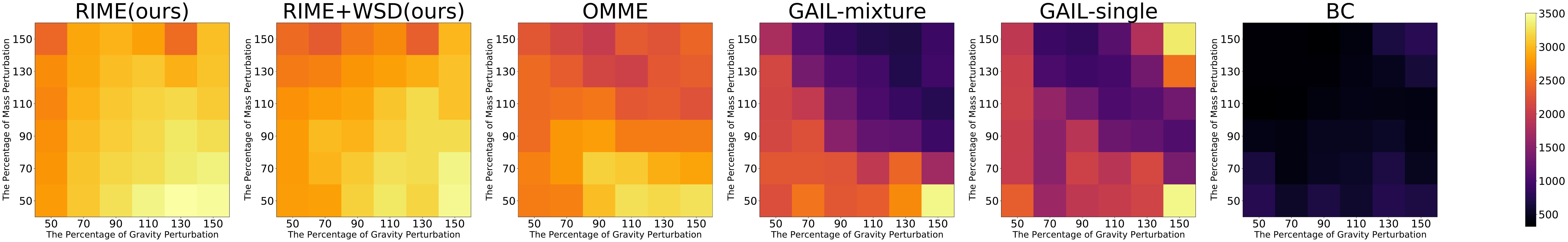}
    \vskip -0.1in
    \caption{Performance on the  test environment with both gravity and mass perturbation for Hopper (the graphs for other tasks are in \cref{Appendix:results_in_4_learning_environmnets})}
    \label{figure:4envs_results_hopper_perf}
    \vskip -0.2in
\end{figure*}

\textbf{IL with 3 Sampled Environments (50\%, 100\%, 150\%):}
Next, we tested the algorithms trained based on $N=3$ with dynamics parameters $50\%\zeta_0$ and $100\%\zeta_0$ and $150\%\zeta_0$.
This setting has more densely-sampled environments  compared to $N=2$. Table \ref{3env_table} shows the corresponding result. (Table \ref{3env_table} was constructed in a similar way to Table \ref{2env_table}.) It is seen that the proposed algorithm is superior to others for a variety of tasks with wide ranges of perturbation.

\textbf{2-D Perturbation Parameter Case:}  \cref{4env_table} summarizes the robustness performance of the algorithms on the test environments with 2-D perturbation (gravity and mass). \cref{figure:4envs_results_hopper_perf} shows the mean-return color plot for the performance of the  algorithms for the Hopper task. It is seen that our proposed algorithm   performs well within the entire 2-D parameter space [50\%,150\%]$\zeta_{0,g}$ $\times$ [50\%,150\%]$\zeta_{0,m}$ by only sampling the four corner points. With this result, we conjecture that even for  higher dimensional perturbation, the proposed method with sampled environments only at the corner points performs well.
Additional experimental results are available in \cref{Appendix:additional_experiments}.

\subsection{Ablation Studies}
\label{subsection:ablation_studies}

\textbf{State-only Expert Demonstration:} \citet{il6:GAIFO} stated that  demonstrations from various resources lack the information on expert's action and addressed the  problem of Imitation from Observation (IfO). We  tested the proposed RIME algorithm  and GAIL variants  in a situation in which   state-only expert demonstrations are available. We trained the algorithms by using state-only demonstrations, which are variants of GAIfO, in the case of $N=2$ sampled environments. The result is shown in \cref{fig:main_ablation_gaifo1,fig:main_ablation_gaifo2}. It is seen that  RIME performs well across the test environment perturbation. This result  indicates that our method can appropriately recover  experts’ preference over the state space.

\textbf{Impact of the Sample Size of Expert Demonstration:}
In the previous section, we used expert demonstrations containing 50 trajectories. However, there may not be sufficient expert demonstrations in the real world. Thus, we performed experiments by reducing the expert demonstration samples gradually from 50 trajectories. Due to space limitation, the result is in  \cref{appendix:ablation_the_size_of_expert_demonstrations}. There, we can see that the proposed robust IL algorithm works quite well even if the expert demonstration length decreases.

\textbf{Tendency over $N$:} From Tables \ref{2env_table} and \ref{3env_table}, we observe that the mean or minimum return performance of the proposed algorithm did not improve monotonically as $N$ changes from 2 to 3. In certain cases, mean return or minimum return slightly decreased as $N$ increases from 2 to 3, although the decrease is not severe. 
For example, in the case of Hopper+Gravity, the mean and minimum return values of 2886.7 and 2332.4 change to 3164.4 and 2315.5, as $N$ increases from 2 to 3.
In order to check the performance tendency with respect to $N$, we further tested the proposed algorithm trained with $N=4$ sampled environments  $\mathcal{E}_1,\cdots,\mathcal{E}_4$ with dynamics parameters $\zeta_1=050\%\zeta_0$, $\zeta_2=085\%\zeta_0$, $\zeta_3=115\%\zeta_0$ and $\zeta_4=150\%\zeta_0$.
\cref{fig:main_ablation_wrtN1,fig:main_ablation_wrtN2,fig:main_ablation_wrtN3} show the performance tendency as $N$ increases. It is hard to say that at every step of $N$ the performance increases as $N$ increases, but there exists a tendency of improvement as $N$ increases. Note that the test performance for $N=4$ is smooth across the variation.

The source code of the proposed algorithm is available at \url{https://github.com/JongseongChae/RIME}.

\section{Conclusion}
\label{section:conclusion}

In this paper, we have considered two issues for the deployment of RL for real-world control problems such as autonomous driving:  robustness and  proper reward design. To address these issues, we have introduced a new framework for robust IL based on multiple environments with dynamics parameters sampled from the continuous range of dynamics parameter variation.  Since it is not obvious that one can recover the policy from the occupancy measure in the case of multiple environments, we have approached the problem by directly optimizing the agent policy in the policy space.  We have formulated the problem as  minimization of the weighted average of divergences from the agent policy to the multiple expert policies. Through a series of manipulations, we have shown that the proposed objective function can be  expressed eventually  as a formula with implementable familiar operations such as expectation,  max and discrimination.  We have evaluated the robustness of the resulting algorithm on MuJoCo tasks by varying  gravity or/and mass parameter(s). Numerical results show that the proposed IL algorithm shows superior performance in robustness across a wide range of dynamics parameter variation based only on training with a few sampled environment dynamics.

\section*{Acknowledgement}

This work was supported by Institute of Information communications Technology Planning Evaluation (IITP) grant
funded by the Korea government (MSIT) (No.2022-0-00469, {Development of Core Technologies for Task-oriented Reinforcement Learning for Commercialization of Autonomous Drones}). Dr. Seungyul Han is currently with Artificial Intelligence Graduate School of UNIST and his work is partly supported by Artificial Intelligence Graduate School support (UNIST), IITP grant funded by the Korea government (MSIT) (No.2020-0-01336).

\bibliography{RIME_reference}
\bibliographystyle{icml2022}

\newpage
\appendix
\onecolumn
\section{Proofs}
\label{appendix:proofs}

\subsection{Proof of Theorem 5.1}
\label{appendix:proof_for_theorem}
\textbf{Theorem 5.1} 
    If $\rho_{\pi}^i(s,a)>0$, $\lambda_j(s)>0$ for any $i,j\in\{1,\cdots,N\}$, $\gamma\in(0,1)$, and $\mathcal{D}$ in eq. \eqref{eq:objective_for_RIME} in the main paper is the Jensen-Shannon divergence, then eq. \eqref{eq:objective_for_RIME} in the main paper is expressed as
    \begin{align*}
        \min_{\pi}\sum_{i=1}^{N}\sum_{j=1}^{N}\max_{D_{ij}}\left\{\mathbb{E}_{(s,a)\sim\rho_{\pi}^i}\left[\frac{\lambda_j(s)}{2N}\log(1-D_{ij}(s,a))\right]+\mathbb{E}_{s\sim\mu_{\pi}^i,a\sim\pi_{E}^j}\left[\frac{\lambda_j(s)}{2N}\log(D_{ij}(s,a))\right]\right\}+\frac{\log2}{1-\gamma}.
    \end{align*}

\begin{proof}
\begin{align*}
    &\min_{\pi}\mathbb{E}_{s\sim\frac{1}{N}\sum_{i=1}^{N}\mu_{\pi}^i}\left[\sum_{j=1}^{N}\lambda_j(s)\mathcal{D}_{JS}(\pi(\cdot|s),\pi_E^j(\cdot|s))\right]\\
    &=\min_{\pi}\int_{s\in\mathcal{S}}\frac{1}{N}\sum_{i=1}^{N}\mu^i_{\pi}(s)\sum_{j=1}^{N}\lambda_j(s)\mathcal{D}_{JS}(\pi(\cdot|s),\pi_E^j(\cdot|s))\\
    &=\min_{\pi}\int_{s\in\mathcal{S}}\frac{1}{N}\sum_{i=1}^{N}\mu^i_{\pi}(s)\sum_{j=1}^{N}\frac{\lambda_j(s)}{2}\left\{\int_{a\in\mathcal{A}}\pi(a|s)\log\frac{2\pi(a|s)}{\pi(a|s)+\pi_E^j(a|s)}+\pi_E^j(a|s)\log\frac{2\pi_E^j(a|s)}{\pi(a|s)+\pi_E^j(a|s)}\right\}\\
    &=\min_{\pi}\int_{s\in\mathcal{S}}\frac{1}{N}\sum_{i=1}^{N}\mu^i_{\pi}(s)\sum_{j=1}^{N}\frac{\lambda_j(s)}{2}\left\{\int_{a\in\mathcal{A}}\pi(a|s)\log\frac{\pi(a|s)}{\pi(a|s)+\pi_E^j(a|s)}+\pi_E^j(a|s)\log\frac{\pi_E^j(a|s)}{\pi(a|s)+\pi_E^j(a|s)}\right\}\\
    &\hspace{1cm}+\int_{s\in\mathcal{S}}\frac{1}{N}\sum_{i=1}^{N}\mu^i_{\pi}(s)\sum_{j=1}^{N}\lambda_j(s)\log2\\
    &\overset{(a)}{=}\min_{\pi}\int_{s\in\mathcal{S}}\frac{1}{N}\sum_{i=1}^{N}\mu^i_{\pi}(s)\sum_{j=1}^{N}\frac{\lambda_j(s)}{2}\left\{\int_{a\in\mathcal{A}}\pi(a|s)\log\frac{\pi(a|s)}{\pi(a|s)+\pi_E^j(a|s)}+\pi_E^j(a|s)\log\frac{\pi_E^j(a|s)}{\pi(a|s)+\pi_E^j(a|s)}\right\}\\
    &\hspace{1cm}+\int_{s\in\mathcal{S}}\frac{1}{N}\sum_{i=1}^{N}\mu^i_{\pi}(s)\log2\\
    &\overset{(b)}{=}\min_{\pi}\sum_{i=1}^{N}\sum_{j=1}^{N}\int_{s\in\mathcal{S}}\int_{a\in\mathcal{A}}\mu^i_{\pi}(s)\frac{\lambda_j(s)}{2N}\left\{\pi(a|s)\log\frac{\pi(a|s)}{\pi(a|s)+\pi_E^j(a|s)}+\pi_E^j(a|s)\log\frac{\pi_E^j(a|s)}{\pi(a|s)+\pi_E^j(a|s)}\right\}+\frac{\log2}{1-\gamma}\\
    &=\min_{\pi}\sum_{i=1}^{N}\sum_{j=1}^{N}\int_{s\in\mathcal{S}}\int_{a\in\mathcal{A}}\Big\{\pi(a|s)\mu^i_{\pi}(s)\frac{\lambda_j(s)}{2N}\log\frac{\pi(a|s)\mu^i_{\pi}(s)}{\pi(a|s)\mu^i_{\pi}(s)+\pi_E^j(a|s)\mu^i_{\pi}(s)}\\
    &\hspace{4cm}+\pi_E^j(a|s)\mu^i_{\pi}(s)\frac{\lambda_j(s)}{2N}\log\frac{\pi_E^j(a|s)\mu^i_{\pi}(s)}{\pi(a|s)\mu^i_{\pi}(s)+\pi_E^j(a|s)\mu^i_{\pi}(s)}\Big\}+\frac{\log2}{1-\gamma}\\
    &=\min_{\pi}\sum_{i=1}^{N}\sum_{j=1}^{N}\int_{s\in\mathcal{S}}\int_{a\in\mathcal{A}}\Big\{\rho_{\pi}^i(s,a)\frac{\lambda_j(s)}{2N}\log\frac{\rho_{\pi}^i(s,a)}{\rho_{\pi}^i(s,a)+\pi_E^j(a|s)\mu^i_{\pi}(s)}\\
    &\hspace{4cm}+\pi_E^j(a|s)\mu^i_{\pi}(s)\frac{\lambda_j(s)}{2N}\log\frac{\pi_E^j(a|s)\mu^i_{\pi}(s)}{\rho_{\pi}^i(s,a)+\pi_E^j(a|s)\mu^i_{\pi}(s)}\Big\}+\frac{\log2}{1-\gamma}\\
    &\overset{(c)}{=}\min_{\pi}\sum_{i=1}^{N}\sum_{j=1}^{N}\max_{D_{ij}}\left\{\mathbb{E}_{(s,a)\sim\rho_{\pi}^i}\left[\frac{\lambda_j(s)}{2N}\log(1-D_{ij}(s,a))\right]+\mathbb{E}_{s\sim\mu_{\pi}^i,a\sim\pi_{E}^j(\cdot|s)}\left[\frac{\lambda_j(s)}{2N}\log(D_{ij}(s,a))\right]\right\}+\frac{\log2}{1-\gamma},
\end{align*}
where (a) holds by the definition of $\lambda_j(s)$, (b) holds due to \cref{appendixA:lemma1} below with the condition of $\gamma<1$, (c) holds due to $D_{ij}\in[0,1]$; for any non-negative $(a,b)\in\mathbb{R}^2\setminus\{0,0\}$, the function $f\to a\log(f)+b\log(1-f)$ has maximum  at $\frac{a}{a+b}$ in $[0,1]$. Thus if we represent $\rho_{\pi}^i(s,a)\cdot\lambda_j(s)/2N$ and $\mu_{\pi}^i(s)\cdot\pi_E^j(a|s)\cdot\lambda_j(s)/2N$ as $g(s,a)$ and $h(s,a)$ respectively, then we have the optimal solution $D_{ij}^*(s,a)=\frac{h(s,a)}{g(s,a)+h(s,a)}=\frac{\pi_E^j(a|s)\mu_{\pi}^i(s)}{\rho_{\pi}^i(s,a)+\pi_E^j(a|s)\mu_{\pi}^i(s)}=\frac{\pi_E^j(a|s)\mu_{\pi}^i(s)}{\pi(a|s)\mu_{\pi}^i(s)+\pi_E^j(a|s)\mu_{\pi}^i(s)}=\frac{\pi_E^j(a|s)}{\pi(a|s)+\pi_E^j(a|s)}$.
\end{proof}

\begin{lemma}[Lemma for proof of \cref{theorem1}]
    \label{appendixA:lemma1}
    Let $f_T^i(s)=\sum_{t=0}^{T}\gamma^t\text{Pr}(s_t=s|\pi,\mathcal{P}^i)$ and $\gamma\in(0,1)$. Then, we have 
    \begin{align*}
        \int_{s\in\mathcal{S}}\mu^i_{\pi}(s)=\frac{1}{1-\gamma}
    \end{align*}
    Therefore, $\int_{s\in\mathcal{S}}\frac{1}{N}\sum_{i=1}^{N}\mu^i_{\pi}(s)=\frac{1}{N}\sum_{i=1}^{N}\int_{s\in\mathcal{S}}\mu^i_{\pi}(s)=\frac{1}{1-\gamma}$.
\end{lemma}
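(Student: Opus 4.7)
The plan is to unfold the definition of $\mu_\pi^i(s) = \sum_{t=0}^{\infty} \gamma^t \Pr(s_t = s \mid \pi, \mathcal{P}^i)$, interchange the integral with the infinite sum, reduce each summand to $1$ using that $\Pr(s_t = \cdot \mid \pi, \mathcal{P}^i)$ is a probability density on $\mathcal{S}$, and finally sum the resulting geometric series in $\gamma$.

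Concretely, first I would write $\int_{\mathcal{S}} \mu_\pi^i(s)\,ds = \int_{\mathcal{S}} \sum_{t=0}^{\infty} \gamma^t \Pr(s_t = s \mid \pi, \mathcal{P}^i)\,ds$. Next, because every term $\gamma^t \Pr(s_t = s \mid \pi, \mathcal{P}^i)$ is non-negative, Tonelli's theorem (equivalently, the monotone convergence theorem applied to the partial sums $f_T^i(s)$, which increase pointwise to $\mu_\pi^i(s)$ as $T \to \infty$) lets me swap the integral and the sum to obtain $\sum_{t=0}^{\infty} \gamma^t \int_{\mathcal{S}} \Pr(s_t = s \mid \pi, \mathcal{P}^i)\,ds$. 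Then, since $\Pr(s_t = \cdot \mid \pi, \mathcal{P}^i)$ is the marginal density of the state at time $t$ under the trajectory distribution induced by $\pi$ and $\mathcal{P}^i$ starting from $\mu_0$, each inner integral equals $1$. The expression collapses to $\sum_{t=0}^{\infty} \gamma^t$, which is a convergent geometric series because $\gamma \in (0,1)$, and its value is $1/(1-\gamma)$, giving the claimed identity.

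The second assertion $\int_{\mathcal{S}} \tfrac{1}{N} \sum_{i=1}^N \mu_\pi^i(s)\,ds = \tfrac{1}{1-\gamma}$ is then immediate: the integrand is a finite sum of non-negative measurable functions, so integration commutes with the finite sum, and I can apply the first part to each $i$ to get $\tfrac{1}{N} \sum_{i=1}^N \tfrac{1}{1-\gamma} = \tfrac{1}{1-\gamma}$.

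The only mildly nontrivial step is the interchange of integration and the infinite sum; the rest is essentially bookkeeping. I would expect the proof to be only a few lines in the paper's appendix, with Tonelli (or monotone convergence on $f_T^i$) as the justifying tool. No definitions beyond those already introduced in Section~3.1 and the RIME setup are needed.
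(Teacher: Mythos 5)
Your proposal is correct and follows essentially the same route as the paper's proof: the paper also interchanges the integral with the infinite sum by applying the monotone convergence theorem to the partial sums $f_T^i(s)$ (your Tonelli formulation is the same tool), reduces each $\int_{s\in\mathcal{S}}\text{Pr}(s_t=s|\pi,\mathcal{P}^i)$ to $1$, and sums the geometric series; the second assertion is handled by linearity over the finite sum in both cases. The only cosmetic difference is that the paper additionally notes the uniform bound $f_T^i(s)<\frac{1}{1-\gamma}$, which is not actually needed for monotone convergence, so your streamlined version loses nothing.
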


\begin{proof}
    For fixed $s$ and $i$, $0\leq\text{Pr}(s_t=s|\pi,\mathcal{P}^i)\leq1$ because it is a probability. Since $\gamma<1$, we have
    \begin{align*}
        f_T^i(s)=\sum_{t=0}^{T}\gamma^t\text{Pr}(s_t=s|\pi,\mathcal{P}^i)\leq\sum_{t=0}^{T}\gamma^t<\sum_{t=0}^{\infty}\gamma^t=\frac{1}{1-\gamma}<\infty.
    \end{align*}
    Also, by the definition of the discount factor $\gamma$ mentioned in \cref{subsection:markov_decision_process}, its condition $0<\gamma<1$, which implies that $\{f_T^i(s)\}$ is a non-negative and monotone increasing sequence of positive measures with respect to $T$. 
    Hence, by the monotone convergence theorem (Theorem 1.5.7 in \cite{durrett_probability}), $\lim_{T\to\infty}\int_{s\in\mathcal{S}}f_T^i(s) = \int_{s\in\mathcal{S}}\lim_{T\to\infty}f_T^i(s)$.    Therefore, we have
    \begin{align*}
        \int_{s\in\mathcal{S}}\mu_{\pi}^i(s)&=\int_{s\in\mathcal{S}}\lim_{T\to\infty}f_T^i(s)=\lim_{T\to\infty}\int_{s\in\mathcal{S}}f_T^i(s)\\
        &=\lim_{T\to\infty}\int_{s\in\mathcal{S}}\sum_{t=0}^{T}\gamma^t\text{Pr}(s_t=s|\pi,\mathcal{P}^i)\\
        &=\lim_{T\to\infty}\sum_{t=0}^{T}\gamma^t\int_{s\in\mathcal{S}}\text{Pr}(s_t=s|\pi,\mathcal{P}^i)\\
        &=\lim_{T\to\infty}\sum_{t=0}^{T}\gamma^t=\frac{1}{1-\gamma},
    \end{align*}
    where $N,T\in\mathbb{N}$.
\end{proof}

\subsection{Proof of Theorem 5.2}
\label{appendixA:proof_for_theorem2}
\cref{theorem2} can be rewritten as follows:

\textbf{Theorem 5.2}
    The two following maximizing problems have the same optimal solution.
    \begin{align}
        \label{appendixA:eq:theorem2_original}
        &\max_{D_{ij}}\left\{\mathbb{E}_{(s,a)\sim\rho_{\pi}^i}\left[\lambda_j(s)\log(1-D_{ij}(s,a))\right]+\mathbb{E}_{(s,a)\sim\rho_{E}^j}\left[\lambda_j(s)\log(D_{ij}(s,a))\right]\right\}\\
        \label{appendixA:eq:theorem2_simple}
        &\max_{D_{ij}}\left\{\mathbb{E}_{(s,a)\sim\rho_{\pi}^i}\left[\log(1-D_{ij}(s,a))\right]+\mathbb{E}_{(s,a)\sim\rho_{E}^j}\left[\log(D_{ij}(s,a))\right]\right\}.
    \end{align}

\begin{proof}
    \begin{align*}
        &\eqref{appendixA:eq:theorem2_original}=\int_{(s,a)\in\mathcal{S}\times\mathcal{A}}\rho_{\pi}^i(s,a)\lambda_j(s)\log(1-D_{ij}(s,a)) + \rho_E^j(s,a)\lambda_j(s)\log(D_{ij}(s,a))\\
        &\eqref{appendixA:eq:theorem2_simple}=\int_{(s,a)\in\mathcal{S}\times\mathcal{A}}\rho_{\pi}^i(s,a)\log(1-D_{ij}(s,a)) + \rho_E^j(s,a)\log(D_{ij}(s,a)).
    \end{align*}
    For any non-negative $(a,b)\in\mathbb{R}^2\setminus\{0,0\}$, the function $f\to a\log(f)+b\log(1-f)$ has maximum  at $\frac{a}{a+b}$ in $[0,1]$. $\rho_{\pi}^i(s,a)\lambda_j(s)$, $\rho_E^j(s,a)\lambda_j(s)$ can be represented as $g(s,a)$ and $h(s,a)$, respectively. Therefore, the optimal solution of \eqref{appendixA:eq:theorem2_original} $D^*_{ij}(s,a)$ becomes  $\frac{h(s,a)}{g(s,a)+h(s,a)}=\frac{\rho_E^j(s,a)}{\rho_{\pi}^i(s,a)+\rho_E^j(s,a)}$, which is the same as the optimal solution of \eqref{appendixA:eq:theorem2_simple}.
\end{proof}

\newpage

\section{Detailed Descriptions}
\label{appendix:missing_descriptions}

\subsection{Description for Reproduced Expert Policy}
\label{appendix:description_for_reproduced_expert_policy}

In order to optimize \eqref{eq:result_theorem_for_RIME}, expert policies $\pi_E^j$, $j=1,\cdots,N$ are required. However, $\pi_E^j$'s are not available explicitly to us, but  we can use expert demonstration $\tau_E^j$, which is in form of state-action pairs generated by the expert policy $\pi_E^j$ in the $j$-th demonstration environment $\mathcal{E}_j$. In this section, we evaluate an algorithm with the objective function \eqref{eq:result_theorem_for_RIME} in the main paper. In order to compute  the second term $\mathbb{E}_{s\sim\mu_{\pi}^i, a\sim\pi_E^j}[\cdot]$ in the objective function, we reproduce the expert policy $\pi_E^j(\cdot|s)$ by behavior cloning (BC) and GAIL+GP mentioned in \cref{subsection:generative_adversarial_imitation_learning} by using the given expert demonstration. 
Before we optimize the objective function, each expert policy $\pi_E^j$ is first trained in the $j$-th demonstration environment $\mathcal{E}_j$ by using the $j$-th expert demonstration $\tau_E^j$.

With the above experimental setup, we tested the case $N=1$ of the objective function \eqref{eq:result_theorem_for_RIME} as follows: 
\begin{align}
    \label{appendixB:eq:1env_result_theorem}
    \min_{\pi}\max_{D_{11}}\Big\{\mathbb{E}_{(s,a)\sim\rho_{\pi}^1}\left[\frac{\lambda_1(s)}{2}\log(1-D_{11}(s,a))\right]+\mathbb{E}_{s\sim\mu_{\pi}^1,a\sim\pi_{E}^1}\left[\frac{\lambda_1(s)}{2}\log(D_{11}(s,a))\right]\Big\}+\frac{\log2}{1-\gamma},
\end{align}
where $\lambda_1(s)$ is equal to $1$ by the definition of $\lambda_j(s)$. This setting is SNE/SNE. The agent policy is trained in the nominal interaction environment, and the expert $\pi_E^1$ is also trained in the same environment. We evaluated the corresponding performance with 10 random seeds.  \cref{appendixB:figures:reproduced_expert} shows the results of the mean returns of both the expert's and the agent's policies in the nominal test environment. 
In most cases,  the agent policy either has almost the same performance as the expert policy or totally fails to learn. Thus, learning is unstable. 
It implies that if the reproduced expert policy $\hat{\pi}_E^j$ covers the states induced by the agent policy $\pi$, then the agent policy can work well as the expert. On the other hand,  if the reproduced expert $\hat{\pi}_E^j$ does not  cover the  states of the agent policy, then the agent policy fails to learn for the given task.

In practice, it is highly likely that we will have an expert demonstration that covers only a limited region of the entire state-action space. Furthermore, the reproduced expert policy by an IL method would visit a limited region of the entire state space during the training phase. These two reasons can cause  extrapolation error. Due to this error, the reproduced expert policy may sample an action that seems to be a non-expert action for a given state. This inappropriate action will give incorrect information to the agent policy.

\begin{figure}[ht]
    \begin{subfigure}[b]{0.24\textwidth}
        \centering
        \includegraphics[width=\textwidth]{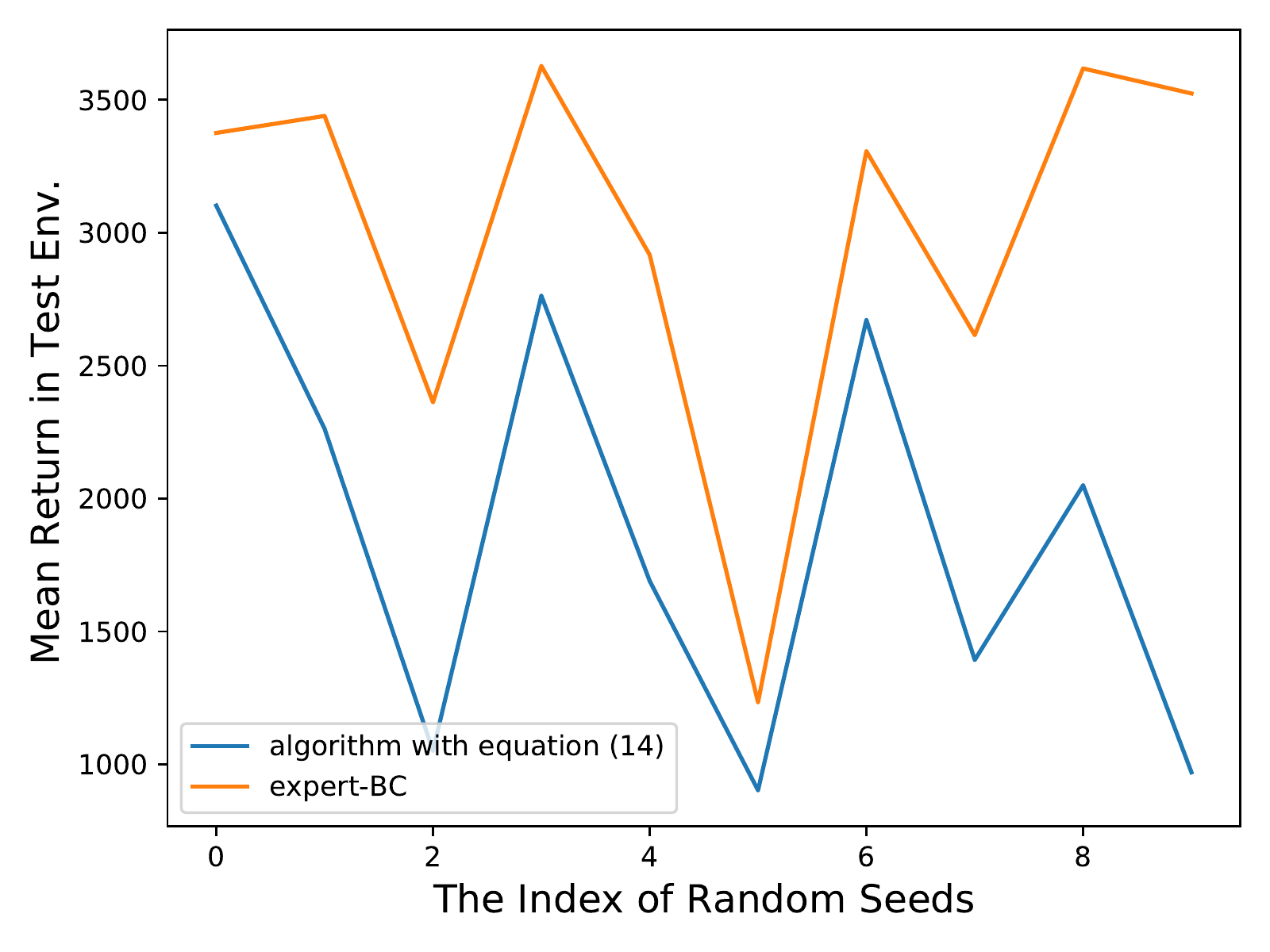}
        \captionsetup{justification=centering}
        \caption{Hopper:\\reproduced expert via BC}
    \end{subfigure}
    \begin{subfigure}[b]{0.24\textwidth}
        \centering
        \includegraphics[width=\textwidth]{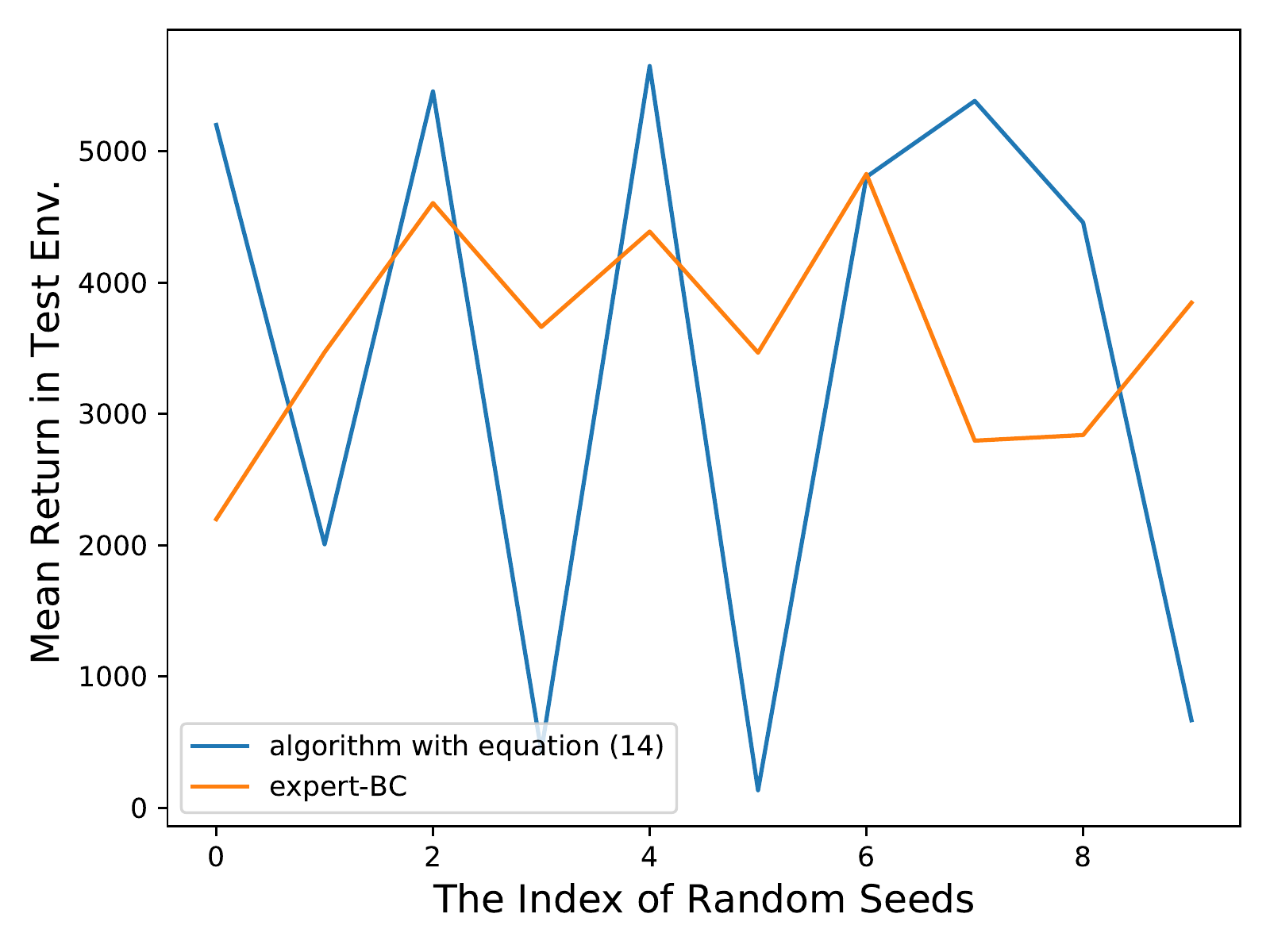}
        \captionsetup{justification=centering}
        \caption{Walker2d:\\reproduced expert via BC}
    \end{subfigure}
    \begin{subfigure}[b]{0.24\textwidth}
        \centering
        \includegraphics[width=\textwidth]{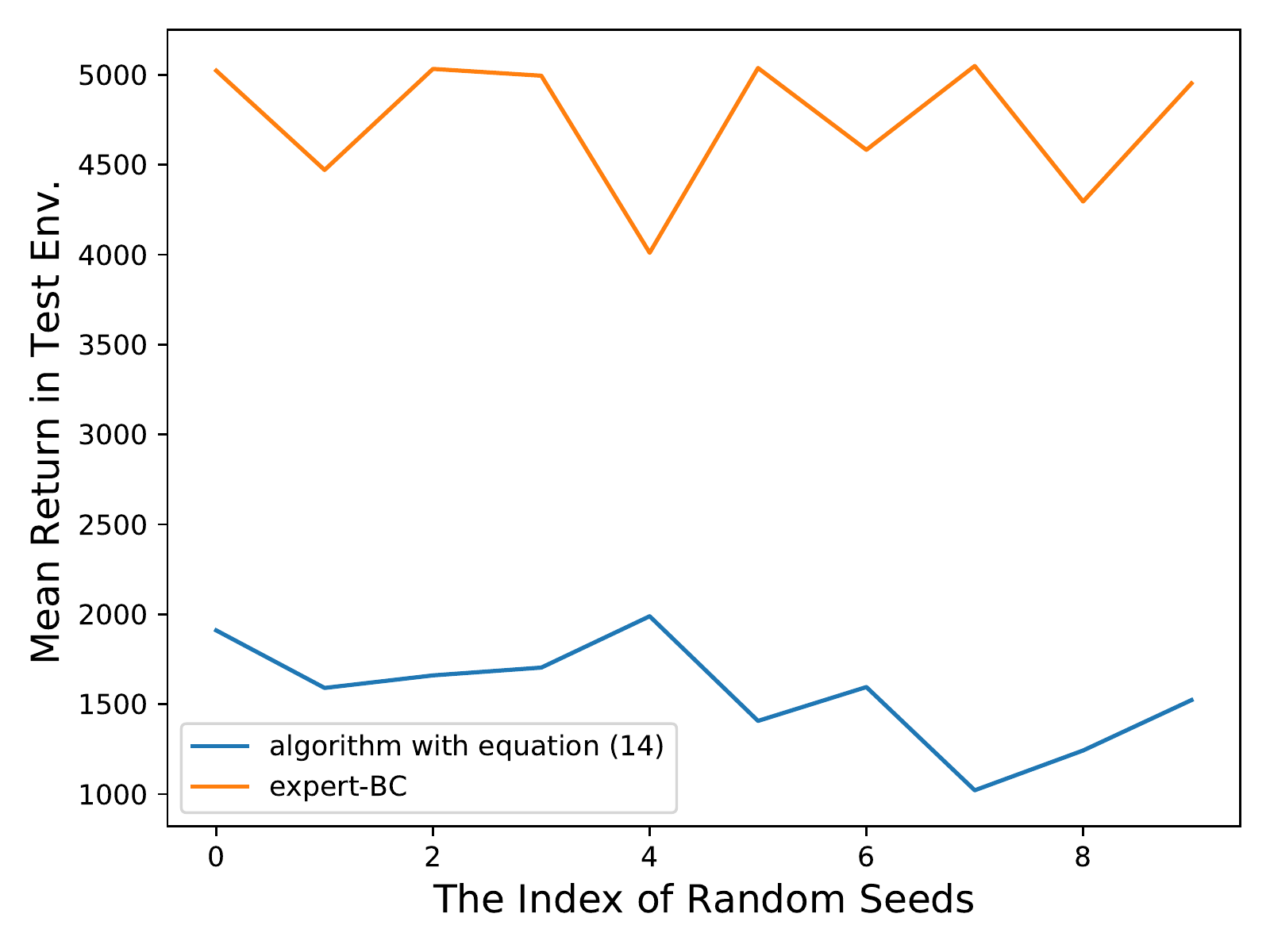}
        \captionsetup{justification=centering}
        \caption{HalfCheetah:\\reproduced expert via BC}
    \end{subfigure}
    \begin{subfigure}[b]{0.24\textwidth}
        \centering
        \includegraphics[width=\textwidth]{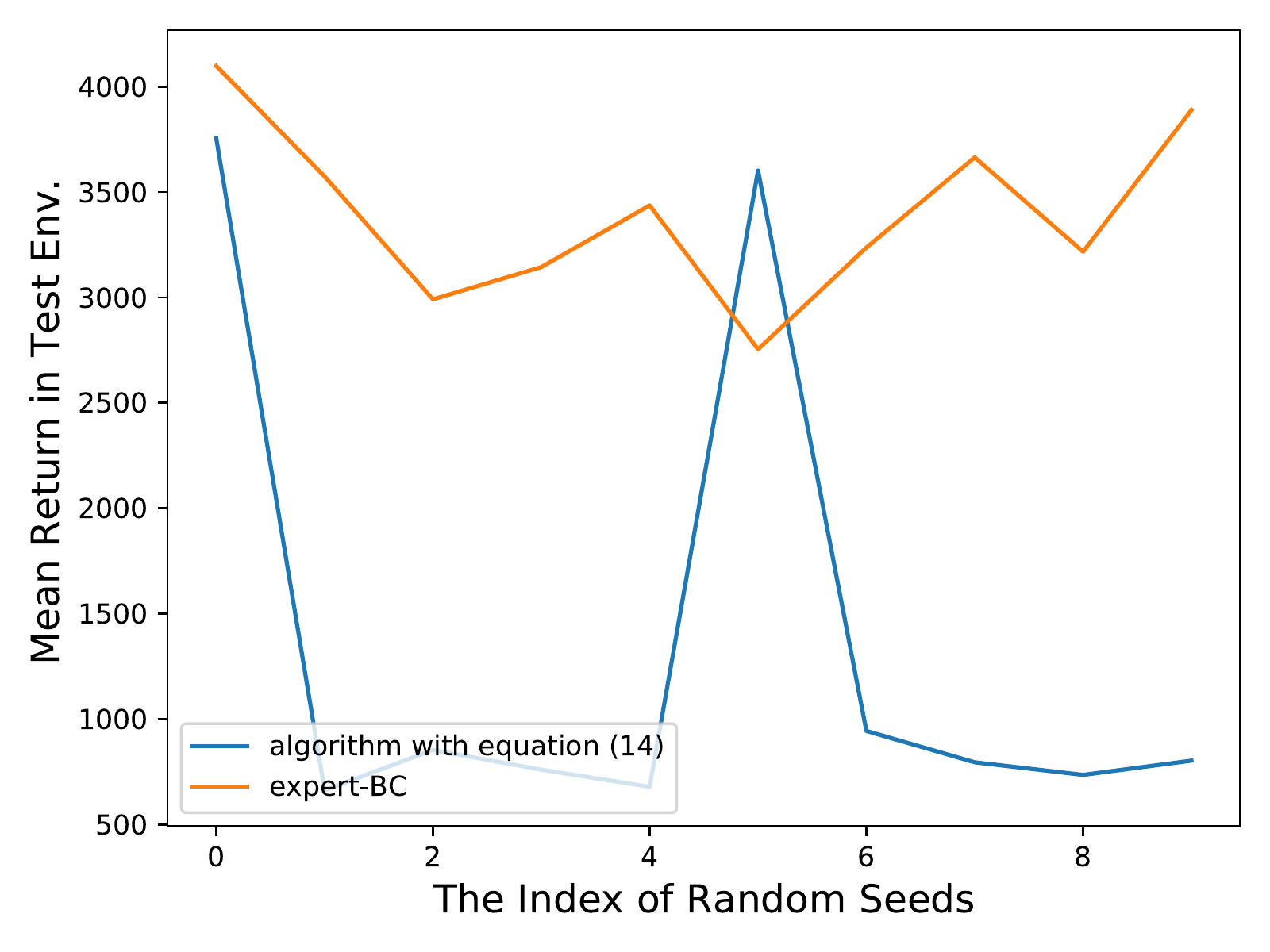}
        \captionsetup{justification=centering}
        \caption{Ant:\\reproduced expert via BC}
    \end{subfigure}

    \begin{subfigure}[b]{0.24\textwidth}
        \centering
        \includegraphics[width=\textwidth]{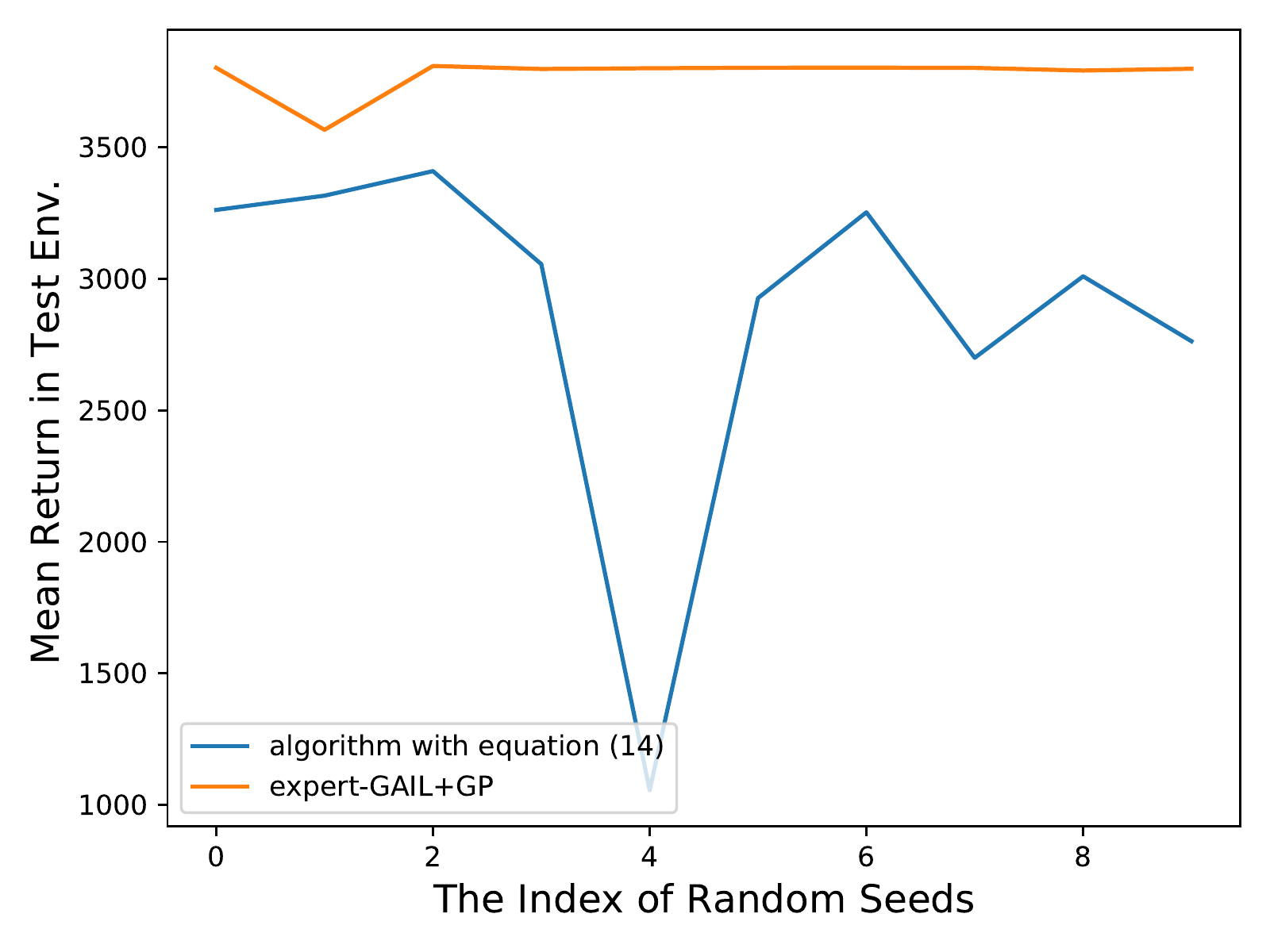}
        \captionsetup{justification=centering}
        \caption{Hopper:\\reproduced expert via GAIL+GP}
    \end{subfigure}
    \begin{subfigure}[b]{0.24\textwidth}
        \centering
        \includegraphics[width=\textwidth]{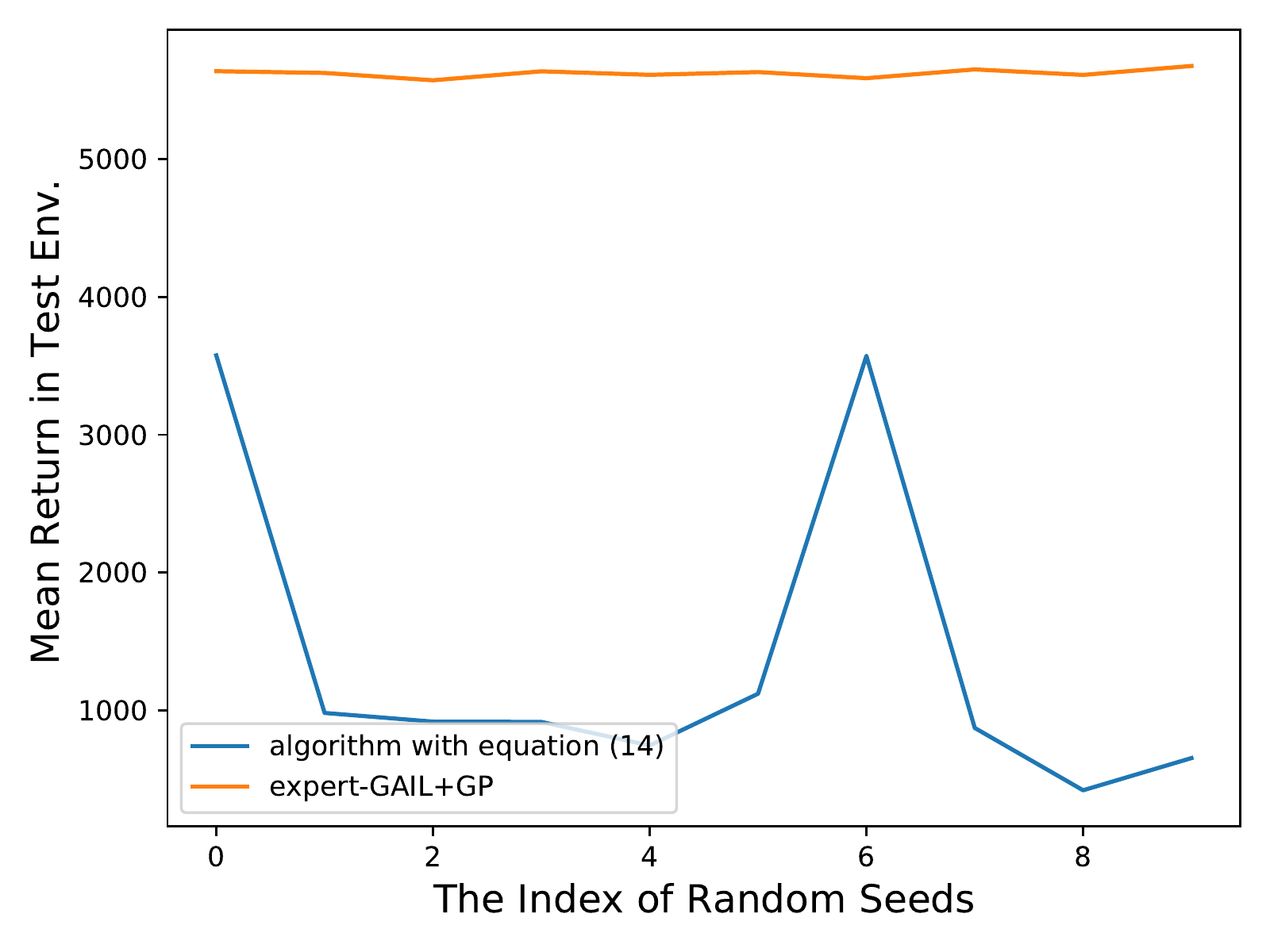}
        \captionsetup{justification=centering}
        \caption{Walker2d:\\reproduced expert via GAIL+GP}
    \end{subfigure}
    \begin{subfigure}[b]{0.24\textwidth}
        \centering
        \includegraphics[width=\textwidth]{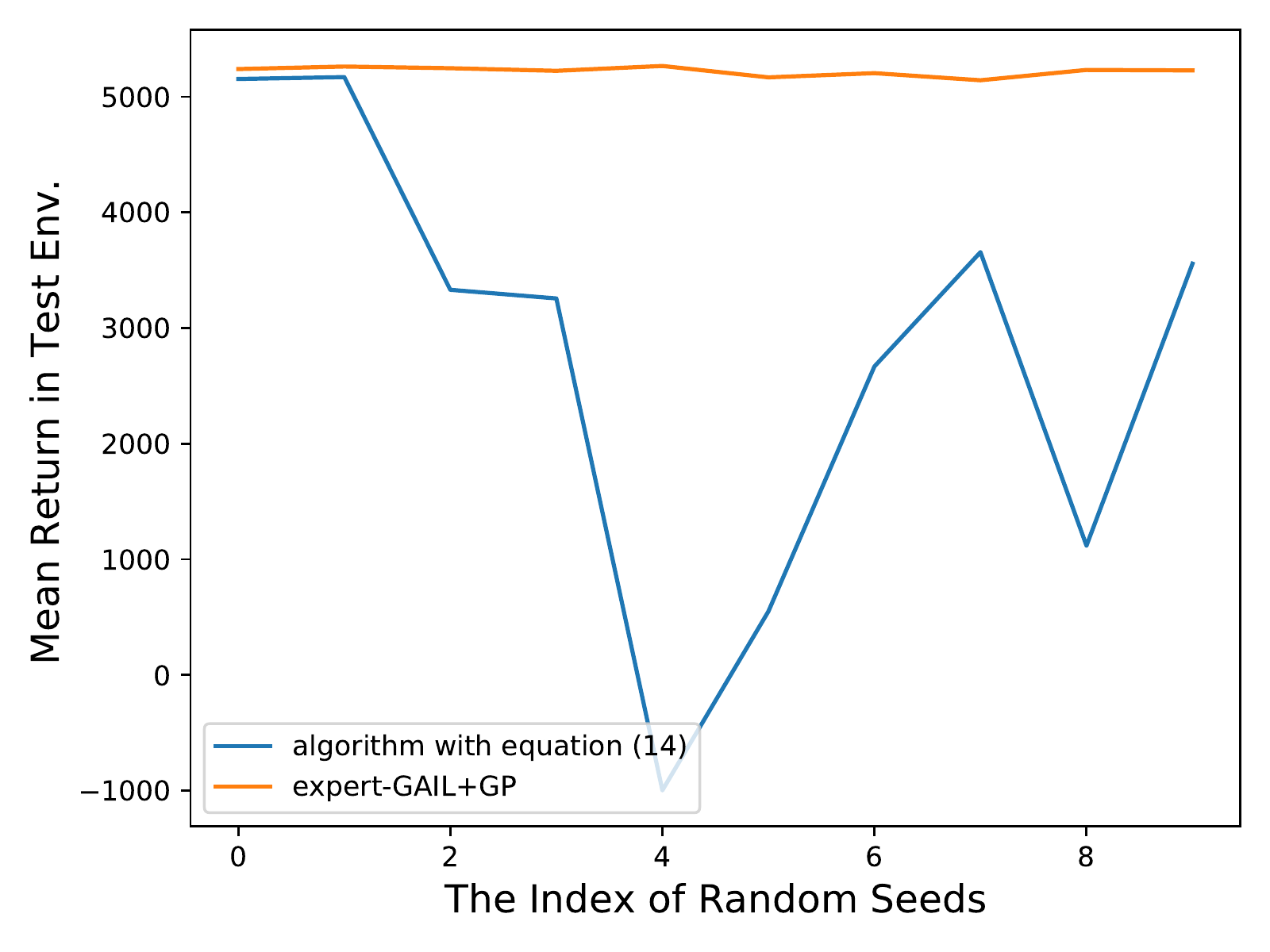}
        \captionsetup{justification=centering}
        \caption{HalfCheetah:\\reproduced expert via GAIL+GP}
    \end{subfigure}
    \begin{subfigure}[b]{0.24\textwidth}
        \centering
        \includegraphics[width=\textwidth]{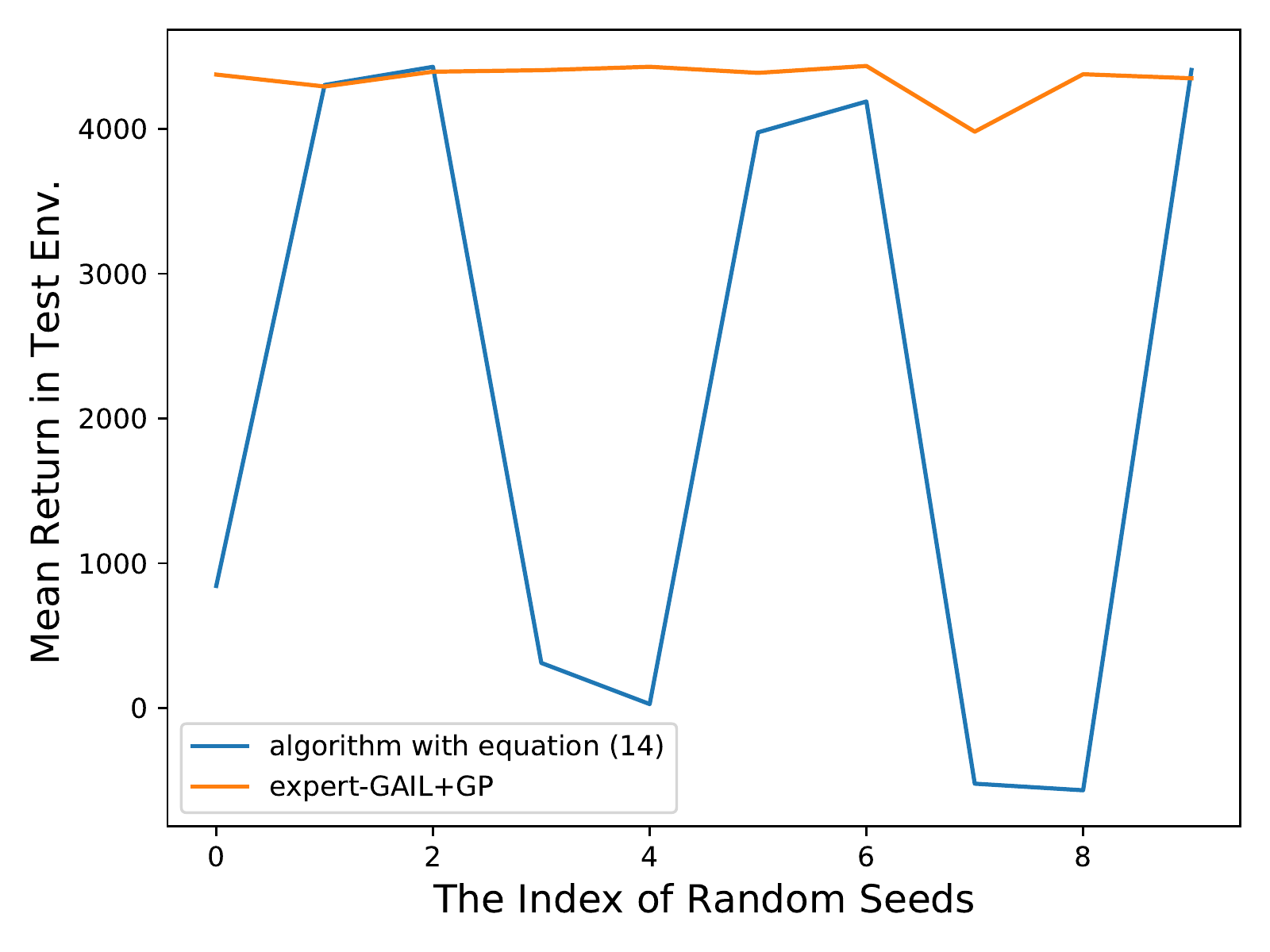}
        \captionsetup{justification=centering}
        \caption{Ant:\\reproduced expert via GAIL+GP}
    \end{subfigure}
    \caption{Mean return performance of IL algorithm solving \eqref{appendixB:eq:1env_result_theorem}  with the reproduced expert policy $\hat{\pi}_E^j$. The x-axis is the index of 10 random seeds and the y-axis is the mean return. The orange lines - the performance of reproduced experts, and  the blue line - IL algorithm solving \eqref{appendixB:eq:1env_result_theorem}  with the reproduced expert policy $\hat{\pi}_E^j$. \label{appendixB:figures:reproduced_expert}}
\end{figure}

\newpage

\subsection{Description for Occupancy measure Matching in Multiple Environments (OMME)}
\label{appendix:description_for_a_matching_occupancy_measures}

Equation \eqref{eq:OMMEobj} in the main paper is rewritten here as
\begin{align}
    \label{appendix:eq:OMME}
    \min_{\pi}\sum_{j=1}^{N}\lambda_j\mathcal{D}_{JS}(\bar{\rho}_{\pi},\bar{\rho}_E^j),
\end{align}
where $\sum_j\lambda_j=1$. We assume $\gamma\in(0,1)$, and as in \cite{il5:GAIL,il12:robustIL4}, $\bar{\rho}_{\pi}^i=(1-\gamma)\rho_{\pi}^i$ and $\bar{\rho}_{E}^j=(1-\gamma)\rho_{E}^j$ are the normalized occupancy distributions from $\pi$ in $\mathcal{E}_i$ and $\pi_E^j$. 

Then, we have
\begin{align*}
    &\min_{\pi}\sum_{j=1}^{N}\lambda_j\mathcal{D}_{JS}(\bar{\rho}_{\pi},\bar{\rho}_E^j)
    =\min_{\pi}\sum_{j=1}^{N}\lambda_j\mathcal{D}_{JS}(\frac{1}{N}\sum_{i=1}^{N}\bar{\rho}_{\pi}^i,\bar{\rho}_E^j)\\
    &\overset{(a)}{\leq}\min_{\pi}\sum_{j=1}^{N}\lambda_j\frac{1}{N}\sum_{i=1}^{N}\mathcal{D}_{JS}(\bar{\rho}_{\pi}^i,\bar{\rho}_E^j)
    =\min_{\pi}\sum_{i=1}^{N}\sum_{j=1}^{N}\frac{\lambda_j}{N}\mathcal{D}_{JS}(\bar{\rho}_{\pi}^i,\bar{\rho}_E^j)\\
    &=\min_{\pi}\sum_{i=1}^{N}\sum_{j=1}^{N}\frac{\lambda_j}{2N}\left\{\int_{(s,a)\in\mathcal{S}\times\mathcal{A}}\bar{\rho}_{\pi}^i(s,a)\log\frac{2\bar{\rho}_{\pi}^i(s,a)}{\bar{\rho}_{\pi}^i(s,a)+\bar{\rho}_{E}^j(s,a)}+\bar{\rho}_{E}^j(s,a)\log\frac{2\bar{\rho}_{E}^j(s,a)}{\bar{\rho}_{\pi}^i(s,a)+\bar{\rho}_{E}^j(s,a)}\right\}\\
    &=\min_{\pi}\sum_{i=1}^{N}\sum_{j=1}^{N}\frac{\lambda_j}{2N}\left\{\int_{(s,a)\in\mathcal{S}\times\mathcal{A}}\bar{\rho}_{\pi}^i(s,a)\log\frac{\bar{\rho}_{\pi}^i(s,a)}{\bar{\rho}_{\pi}^i(s,a)+\bar{\rho}_{E}^j(s,a)}+\bar{\rho}_{E}^j(s,a)\log\frac{\bar{\rho}_{E}^j(s,a)}{\bar{\rho}_{\pi}^i(s,a)+\bar{\rho}_{E}^j(s,a)}\right\}\\
    &\hspace{1cm}+\sum_{i=1}^{N}\sum_{j=1}^{N}\frac{\lambda_j}{2N}\left\{\int_{(s,a)\in\mathcal{S}\times\mathcal{A}}\bar{\rho}_{\pi}^i(s,a)+\bar{\rho}_E^j(s,a)\right\}\log2\\
    &\overset{(b)}{=}\min_{\pi}\sum_{i=1}^{N}\sum_{j=1}^{N}\frac{\lambda_j}{2N}\left\{\int_{(s,a)\in\mathcal{S}\times\mathcal{A}}\bar{\rho}_{\pi}^i(s,a)\log\frac{\bar{\rho}_{\pi}^i(s,a)}{\bar{\rho}_{\pi}^i(s,a)+\bar{\rho}_{E}^j(s,a)}+\bar{\rho}_{E}^j(s,a)\log\frac{\bar{\rho}_{E}^j(s,a)}{\bar{\rho}_{\pi}^i(s,a)+\bar{\rho}_{E}^j(s,a)}\right\}+\log2\\
    &=\min_{\pi}\sum_{i=1}^{N}\sum_{j=1}^{N}\frac{\lambda_j(1-\gamma)}{2N}\left\{\int_{(s,a)\in\mathcal{S}\times\mathcal{A}}\rho_{\pi}^i(s,a)\log\frac{\rho_{\pi}^i(s,a)}{\rho_{\pi}^i(s,a)+\rho_{E}^j(s,a)}+\rho_{E}^j(s,a)\log\frac{\rho_{E}^j(s,a)}{\rho_{\pi}^i(s,a)+\rho_{E}^j(s,a)}\right\}+\log2\\
    &=\min_{\pi}\sum_{i=1}^{N}\sum_{j=1}^{N}\frac{\lambda_j(1-\gamma)}{2N}\max_{D_{ij}}\left\{\mathbb{E}_{(s,a)\sim\rho_{\pi}^i}\left[\log(1-D_{ij}(s,a))\right]+\mathbb{E}_{(s,a)\sim\rho_E^j}\left[\log D_{ij}(s,a)\right]\right\}+\log2,
\end{align*}
where (a) holds by the convexity of the Jensen-Shannon divergence, (b) holds by the definition of $\lambda_j$.

\newpage

\section{Algorithm: Robust Imitation Learning against Variations in Environment Dynamics}
\label{Appendix:pseudo_code}
\begin{algorithm}[h]
   \caption{Robust Imitation learning with Multiple perturbed Environments (RIME)}
   \label{alg:rime_algorithm}
\begin{algorithmic}
   \STATE {\bfseries Input:} The number of sampled environments $N$, sampled environments $\mathcal{E}_1, \ldots, \mathcal{E}_N $, expert demonstrations $\tau_E^1, \ldots, \tau_E^N$, policy parameter $\theta$, parameter of discriminators $\{\phi_{ij}\}$, the number of learning iterations $n_{epoch}$, the weight of GP $\kappa$.
   \STATE Initialize all parameters $\theta$, $\{\phi_{ij}\}$.
   \FOR{$k=1$ {\bfseries to} $n_{epoch}$}
   \FOR{$i=1$ {\bfseries to} $N$}
   \STATE Sample trajectories $\tau_{\pi}^i\sim\pi_{\theta}$ in $\mathcal{E}_i$
   \FOR{$j=1$ {\bfseries to} $N$}
   \STATE Update the discriminator $D_{\phi_{ij}}$ by maximizing \eqref{eq:final_objective_discriminator}
   \ENDFOR
   \ENDFOR
   \FOR{$i=1$ {\bfseries to} $N$}
   \STATE Update the policy $\pi_{\theta}$ by minimizing \eqref{eq:final_objective_policy} using PPO
   \ENDFOR
   \ENDFOR
\end{algorithmic}
\end{algorithm}

\newpage

\section{Ablation Studies}
\label{appendix:ablation_study}

\subsection{Ablation Study for an Algorithm Trained in the SNE/MPE Setting}
\label{appendix:ablationn_for_algorithm_singleenv_multipletau}

To see the effect of interacting with MPE, we evaluated SNEMPE-max described in \cref{subsection:motivation_for_using_multiple_environments} in the perturbed test environments. This algorithm is obtained by simply applying the robust RL principle to the IL setting. Furthermore, it is a variant of our algorithm \eqref{eq:modified_result_for_RIME} applied to the SNE/MPE setting.

We used three expert demonstrations which are generated by their experts in demonstration environments with perturbations $050\%\zeta_0$, $100\%\zeta_0$, $150\%\zeta_0$, where $\zeta_0$ is the nominal dynamics value. With three expert demonstrations, we trained this algorithm in the nominal interaction environment with $\zeta_0$. It has discriminators $D_{1j}$, and the objective function for the discriminator $D_{1j}$ is the same as our discriminator's objective function \eqref{eq:final_objective_discriminator}. The objective function for the policy is given by
\begin{align}
    \label{appendix:eq:snempe-max}
    &\min_{\pi}\mathbb{E}_{(s,a)\sim\rho_{\pi}^1}\left[\max_{j}\log(1-D_{1j}(s,a))\right].
\end{align}

\cref{appendixD:figures:singleenv_multipletau} shows that SNEMPE-max fails when the underlying environment dynamics are perturbed from those of the interaction environment. It is seen that SNEMPE-max trained in a single interaction environment cannot properly capture the diverse  dynamics of multiple demonstration environments.

\begin{figure}[ht]
    \begin{subfigure}[b]{0.24\textwidth}
        \centering
        \includegraphics[width=\textwidth]{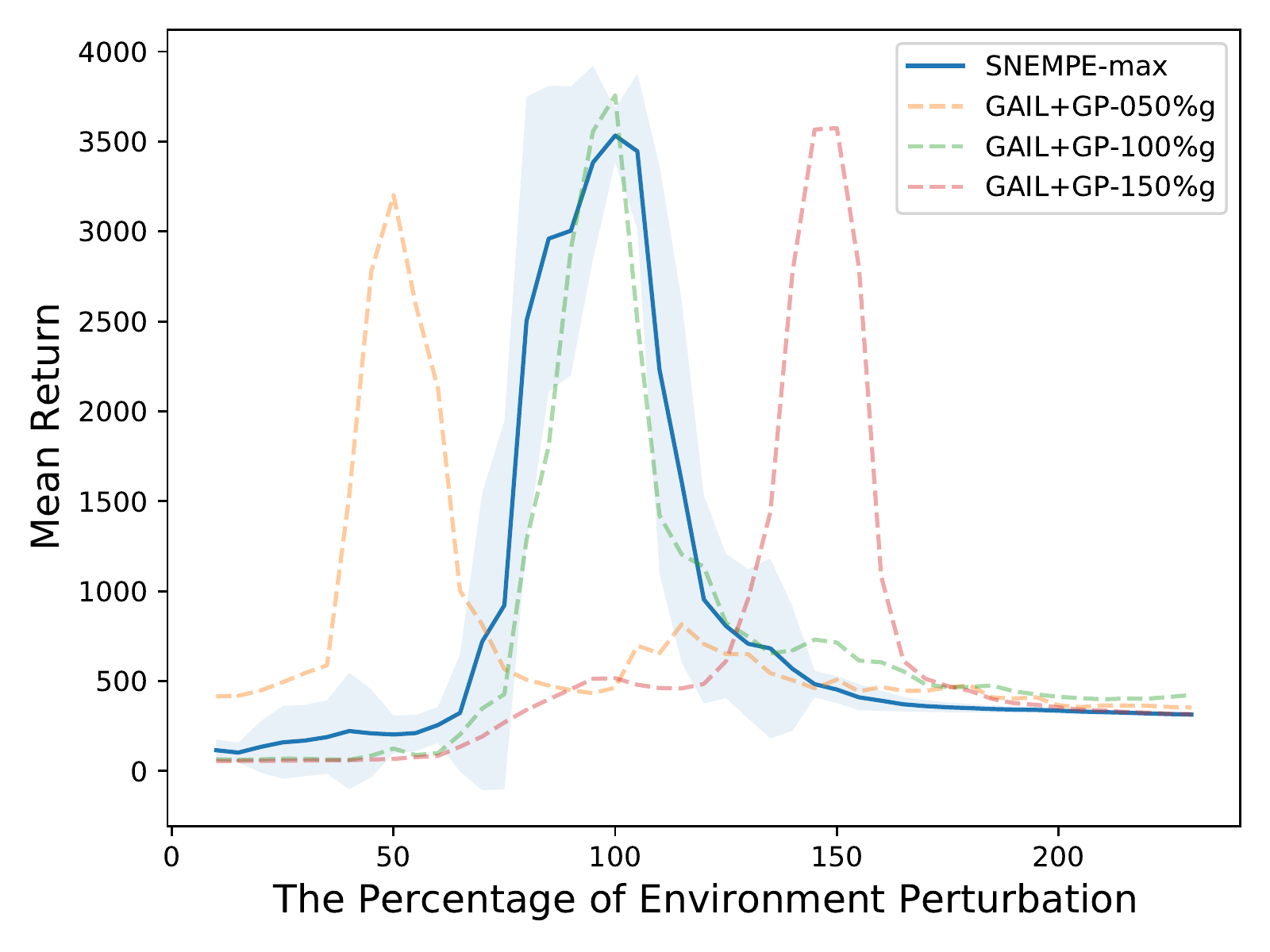}
        \captionsetup{justification=centering}
        \caption{Hopper+Gravity}
    \end{subfigure}
    \begin{subfigure}[b]{0.24\textwidth}
        \centering
        \includegraphics[width=\textwidth]{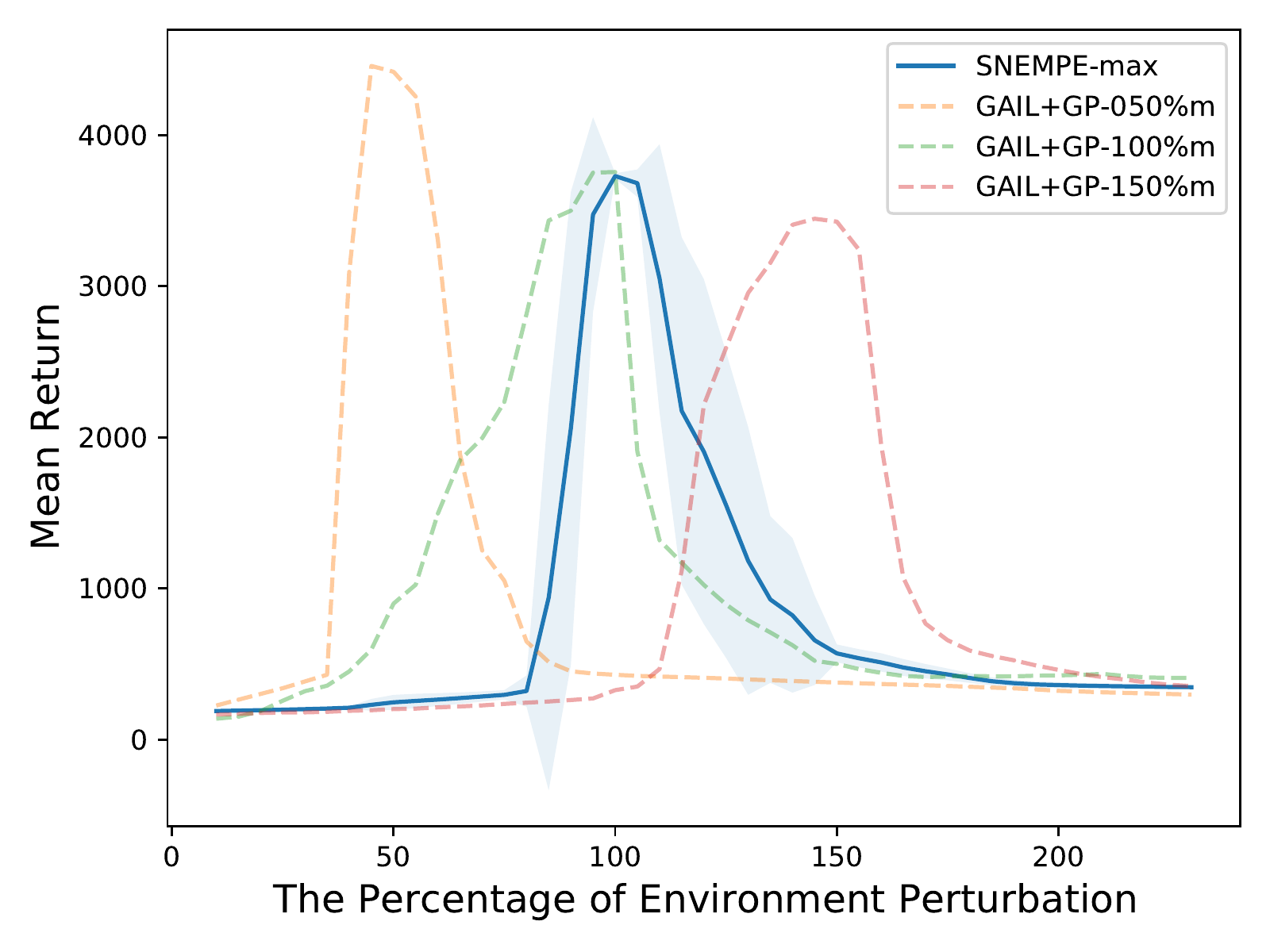}
        \captionsetup{justification=centering}
        \caption{Hopper+Mass}
    \end{subfigure}
    \begin{subfigure}[b]{0.24\textwidth}
        \centering
        \includegraphics[width=\textwidth]{RIME_figures/singleenv_multipletau_wg_perf.pdf}
        \captionsetup{justification=centering}
        \caption{Walker2d+Gravity}
    \end{subfigure}
    \begin{subfigure}[b]{0.24\textwidth}
        \centering
        \includegraphics[width=\textwidth]{RIME_figures/singleenv_multipletau_wm_perf.pdf}
        \captionsetup{justification=centering}
        \caption{Walker2d+Mass}
    \end{subfigure}

    \begin{subfigure}[b]{0.24\textwidth}
        \centering
        \includegraphics[width=\textwidth]{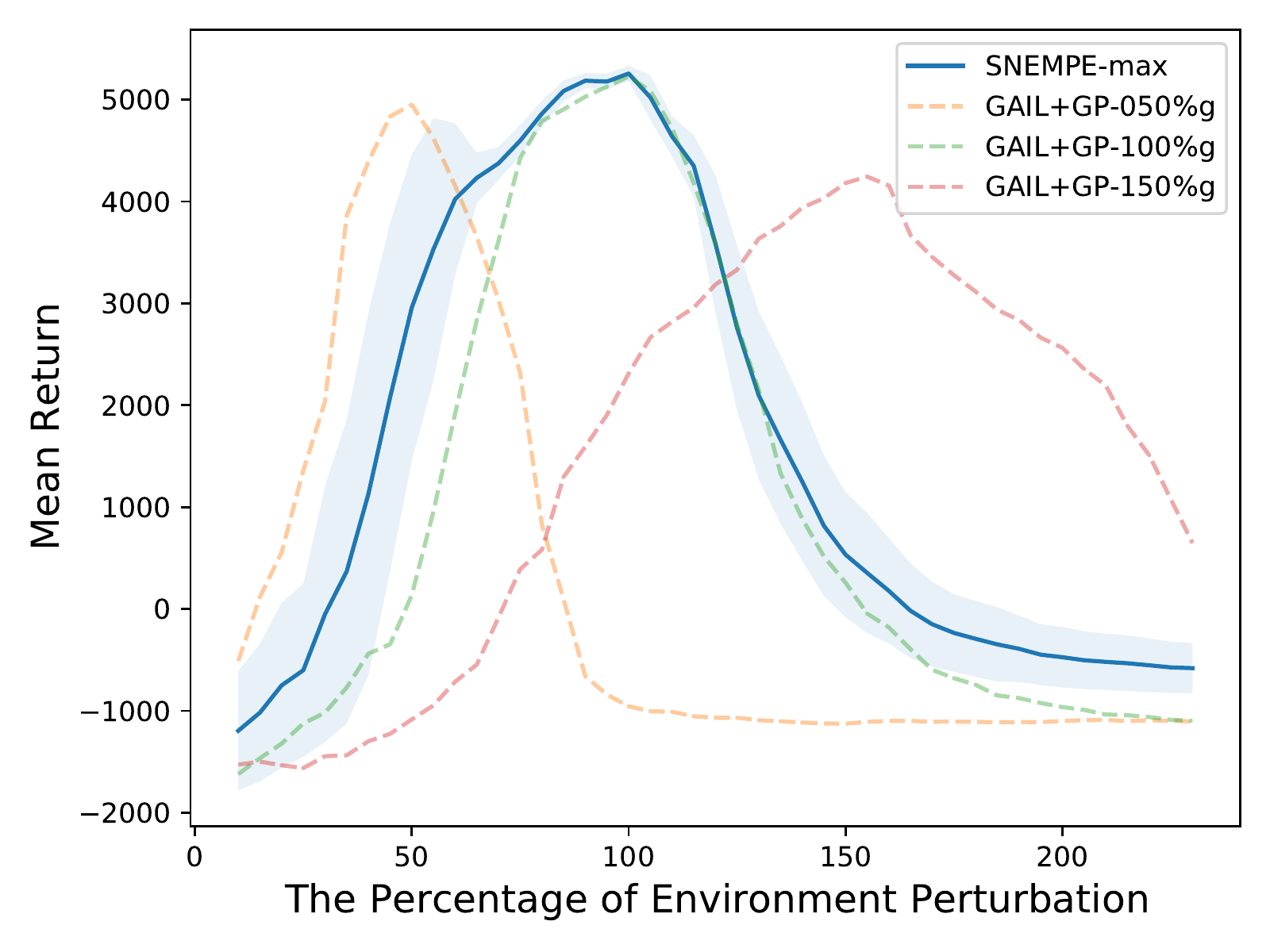}
        \captionsetup{justification=centering}
        \caption{HalfCheetah+Gravity}
    \end{subfigure}
    \begin{subfigure}[b]{0.24\textwidth}
        \centering
        \includegraphics[width=\textwidth]{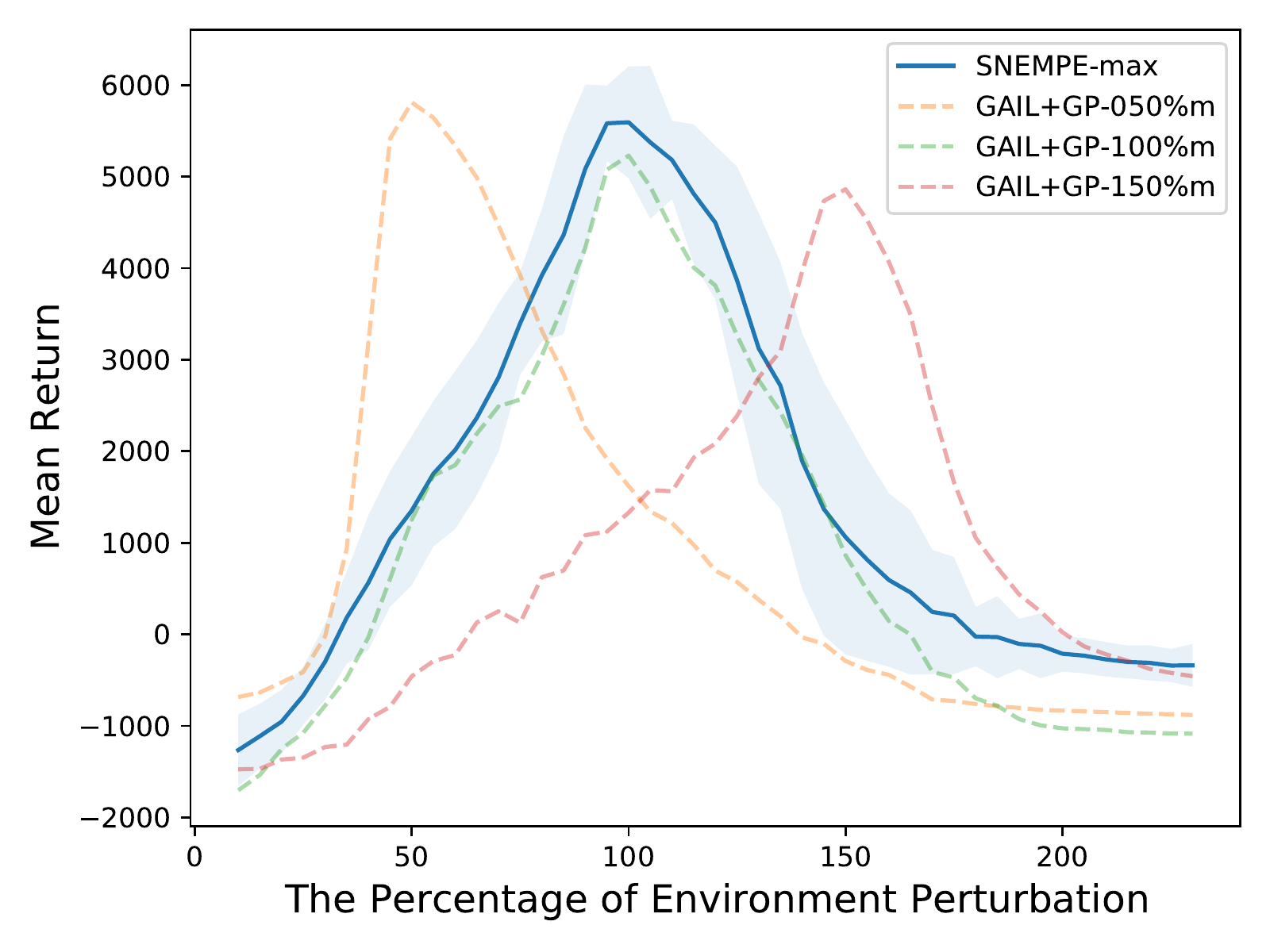}
        \captionsetup{justification=centering}
        \caption{HalfCheetah+Mass}
    \end{subfigure}
    \begin{subfigure}[b]{0.24\textwidth}
        \centering
        \includegraphics[width=\textwidth]{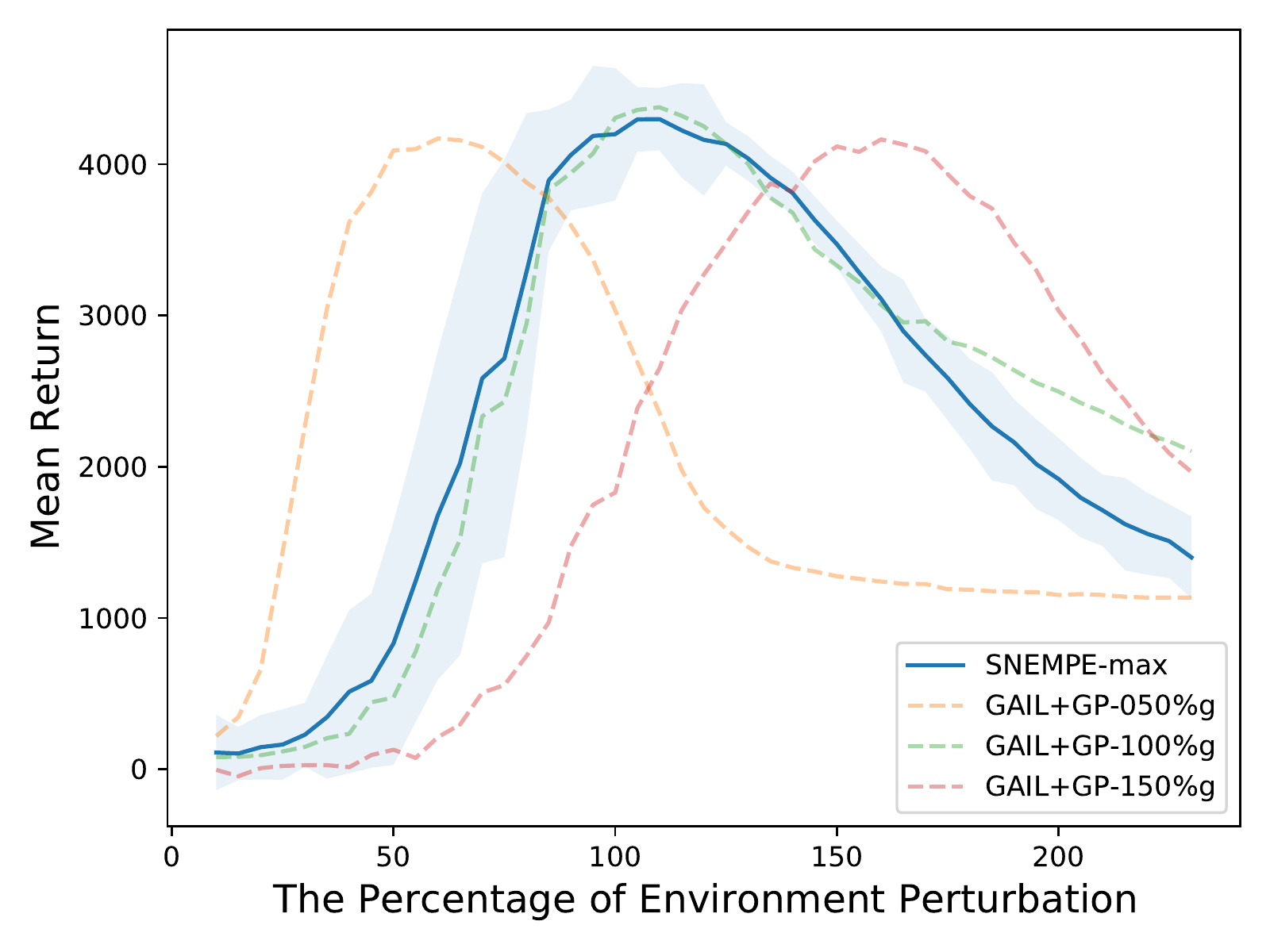}
        \captionsetup{justification=centering}
        \caption{Ant+Gravity}
    \end{subfigure}
    \begin{subfigure}[b]{0.24\textwidth}
        \centering
        \includegraphics[width=\textwidth]{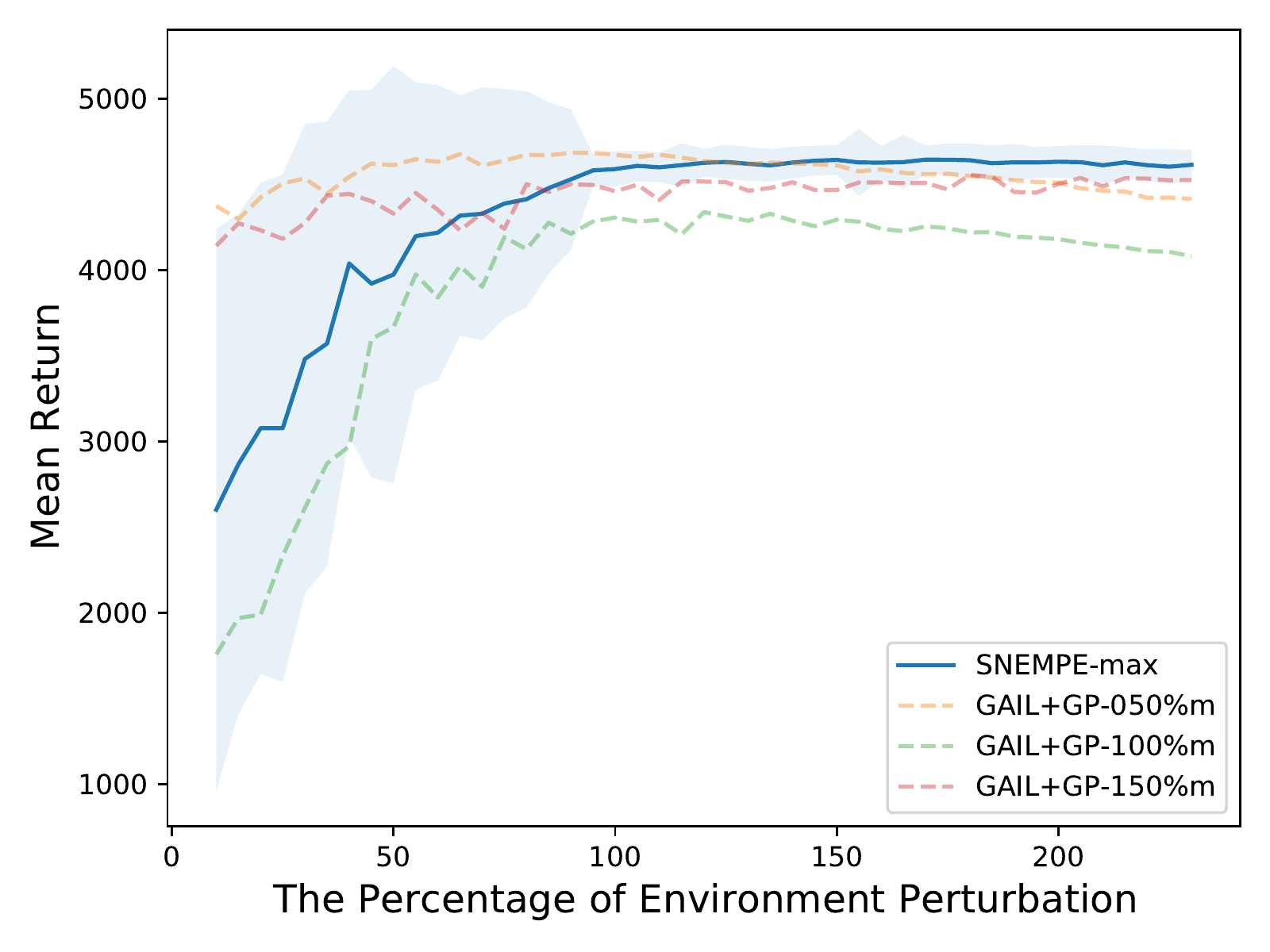}
        \captionsetup{justification=centering}
        \caption{Ant+Mass}
    \end{subfigure}
    \caption{The performance of SNEMPE-max over environment perturbations. \label{appendixD:figures:singleenv_multipletau}}
\end{figure}

\newpage

\subsection{Ablation Study with  Importance Sampling Ratio $\frac{\mu_{\pi}^i(s)}{\mu_E^j(s)}$ Estimator}
\label{appendix:ablation_IS}

To exactly compute $\mu_{\pi}^i(s)$ in the importance sampling ratio $\frac{\mu_{\pi}^i(s)}{\mu_E^j(s)}$ in \eqref{eq:modified_result_for_RIME}, we need many interactions with the interaction environment $\mathcal{E}_i$, which could increase the sample complexity in practice. To avoid this sample complexity issue, we can estimate the ratio $\frac{\mu_{\pi}^i(s)}{\mu_E^j(s)}$ directly.  With an estimated ratio $\frac{\mu_{\pi}^i(s)}{\mu_E^j(s)}$, eq. \eqref{eq:modified_result_for_RIME} is replaced with 
{\small\begin{align}
    \label{appendix_eq:modified_result_for_RIME}
    \min_{\pi}\sum_{i=1}^{N}\sum_{j=1}^{N}\max_{D_{ij}}\left\{\mathbb{E}_{\rho_{\pi}^i}\left[\lambda_j(s)\log(1-D_{ij}(s,a))\right]+\mathbb{E}_{\rho_{E}^j}\left[\tilde{w}_{ij}(s)\lambda_j(s)\log(D_{ij}(s,a))\right]\right\},
\end{align}}where $\tilde{w}_{ij}(s)$ is a given estimator of the ratio $\frac{\mu_{\pi}^i(s)}{\mu_E^j(s)}$. So, the policy in \eqref{appendix_eq:modified_result_for_RIME} affects only the first term $\mathbb{E}_{\rho_{\pi}^i}[\cdot]$ and hence the objective function for the policy update is the same as \eqref{eq:final_objective_policy}. The objective function for the discriminator $D_{ij}$ is given by
\begin{align}
    \label{appendix_eq:objective_discriminator_with_IS}
    \max_{D_{ij}}\mathbb{E}_{\rho_{\pi}^i}\left[\log(1-D_{ij}(s,a))\right]+\mathbb{E}_{\rho_{E}^j}\left[\tilde{w}_{ij}(s)\log(D_{ij}(s,a))\right]
\end{align}
In the same way as in \cref{theorem2}, the optimal discriminator  is given by $D_{ij}^*=\frac{\tilde{w}_{ij}(s)\lambda_j(s)\rho_E^j(s,a)}{\lambda_j(s)\rho_{\pi}^i(s,a)+\tilde{w}_{ij}(s)\lambda_j(s)\rho_E^j(s,a)}=\frac{\tilde{w}_{ij}(s)\rho_E^j(s,a)}{\rho_{\pi}^i(s,a)+\tilde{w}_{ij}(s)\rho_E^j(s,a)}$.

\vspace{1em}

\begin{figure}[h]
    \centering
    \vskip -0.1in
    \begin{subfigure}[b]{0.24\textwidth}
        \centering
        \includegraphics[width=\linewidth,height=3.0cm]{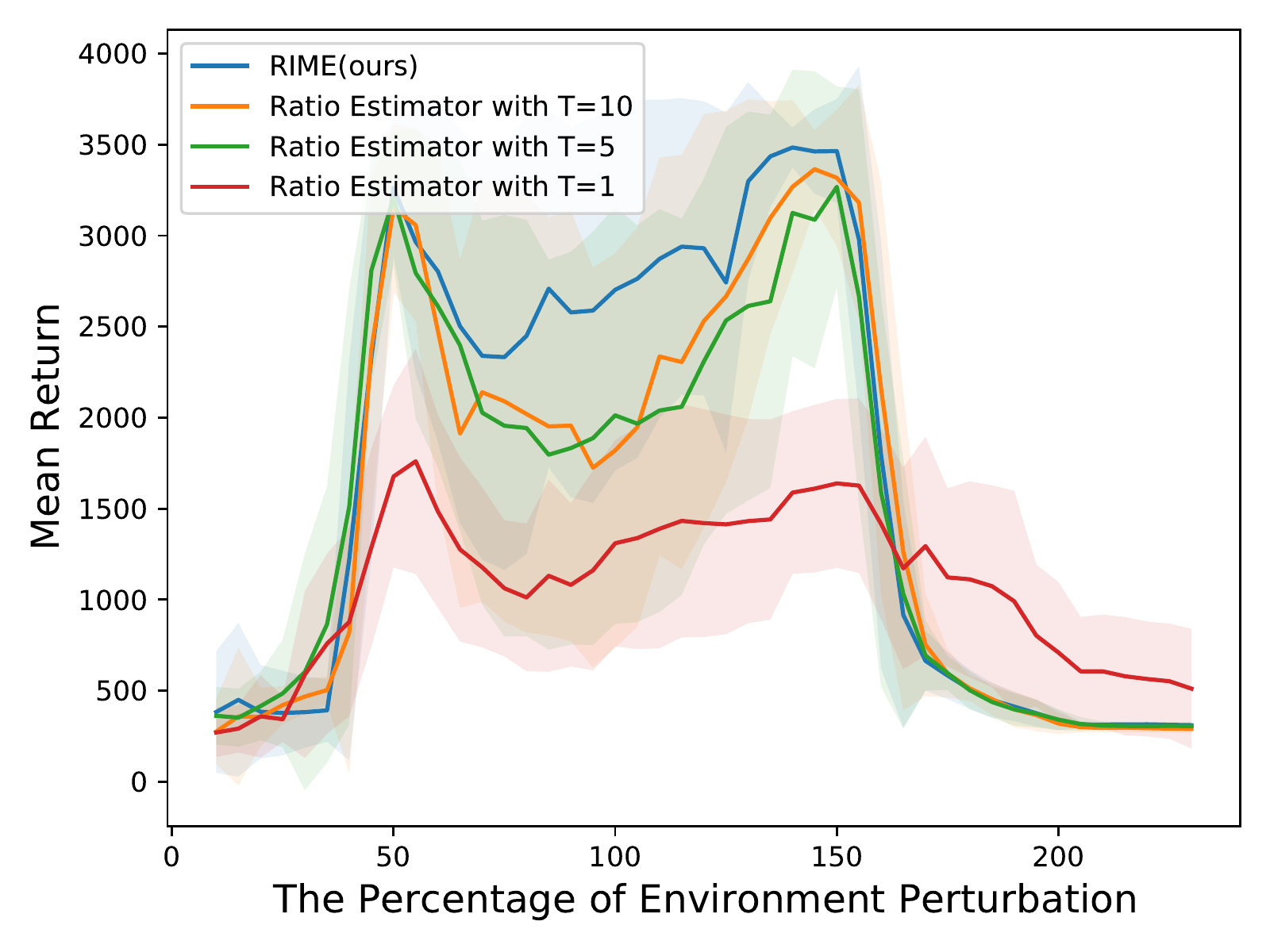}
        \vskip -0.05in
        \captionsetup{justification=centering}
        \caption{Hopper+Gravity}
    \end{subfigure}
    \begin{subfigure}[b]{0.24\textwidth}
        \centering
        \includegraphics[width=\linewidth,height=3.0cm]{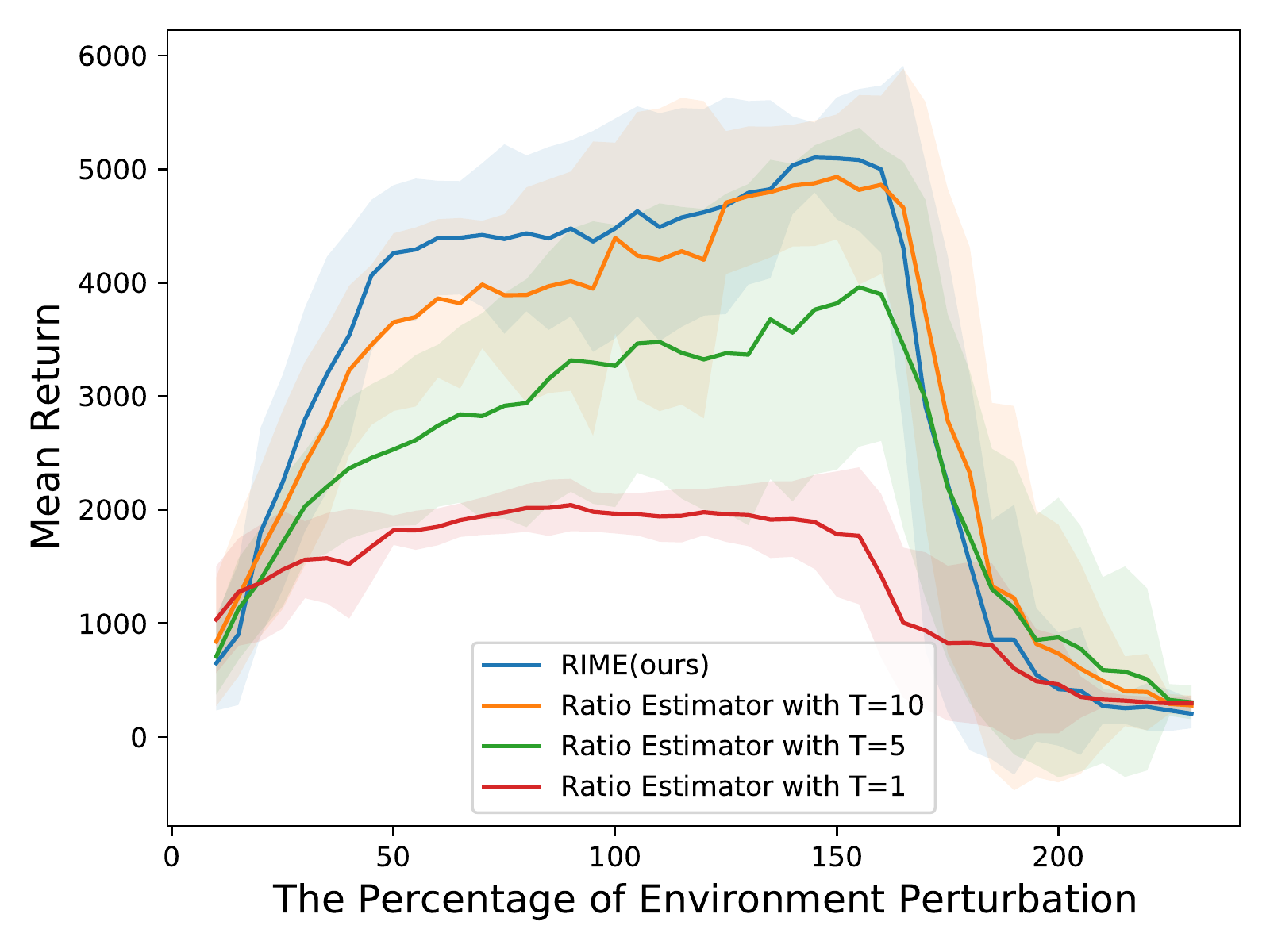}
        \vskip -0.05in
        \captionsetup{justification=centering}
        \caption{Walker2d+Gravity}
    \end{subfigure}
    \begin{subfigure}[b]{0.24\textwidth}
        \centering
        \includegraphics[width=\linewidth,height=3.0cm]{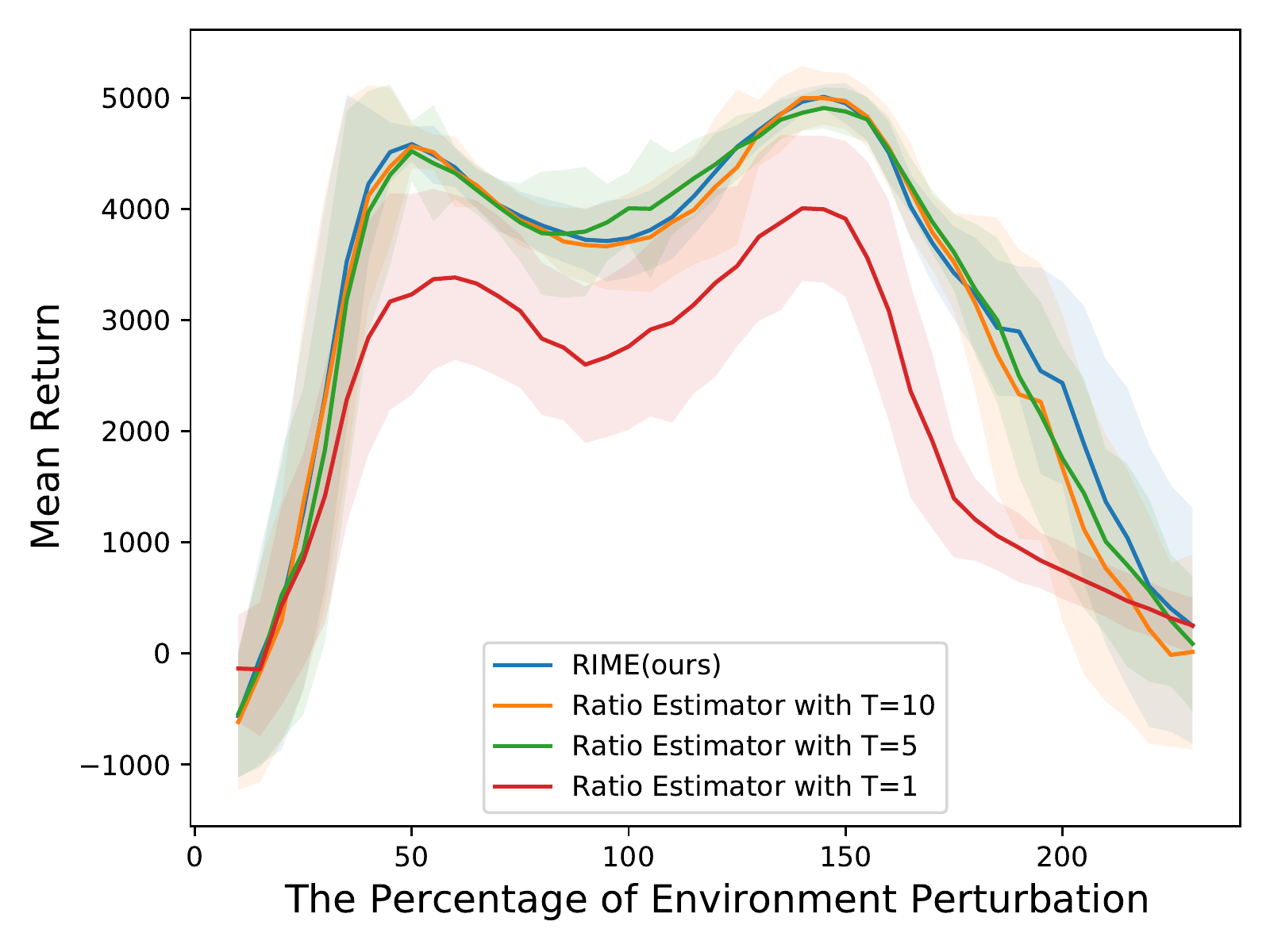}
        \vskip -0.05in
        \captionsetup{justification=centering}
        \caption{HalfCheetah+Gravity}
    \end{subfigure}
    \begin{subfigure}[b]{0.24\textwidth}
        \centering
        \includegraphics[width=\linewidth,height=3.0cm]{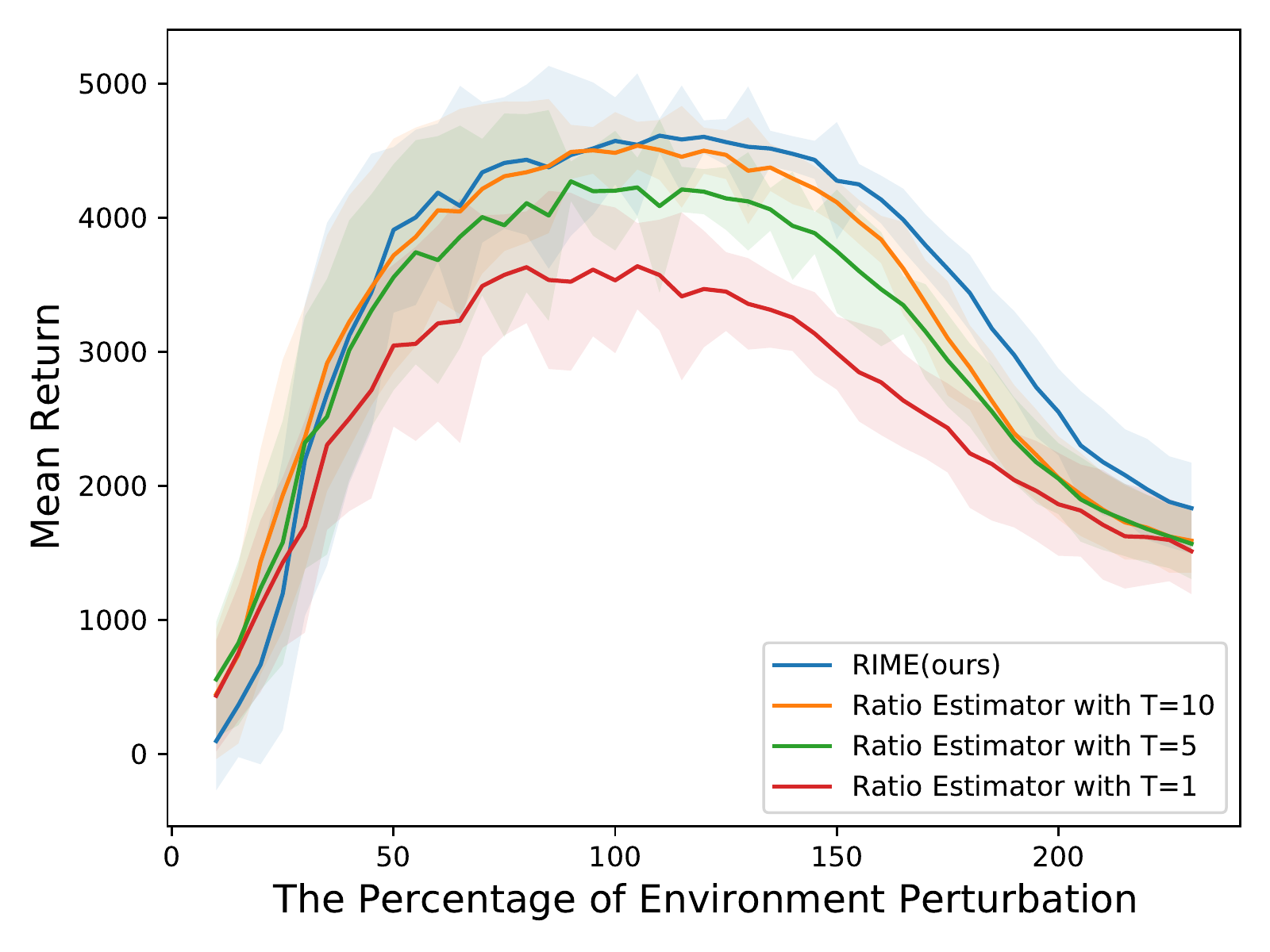}
        \vskip -0.05in
        \captionsetup{justification=centering}
        \caption{Ant+Gravity}
    \end{subfigure}
    \begin{subfigure}[b]{0.24\textwidth}
        \centering
        \includegraphics[width=\linewidth,height=3.0cm]{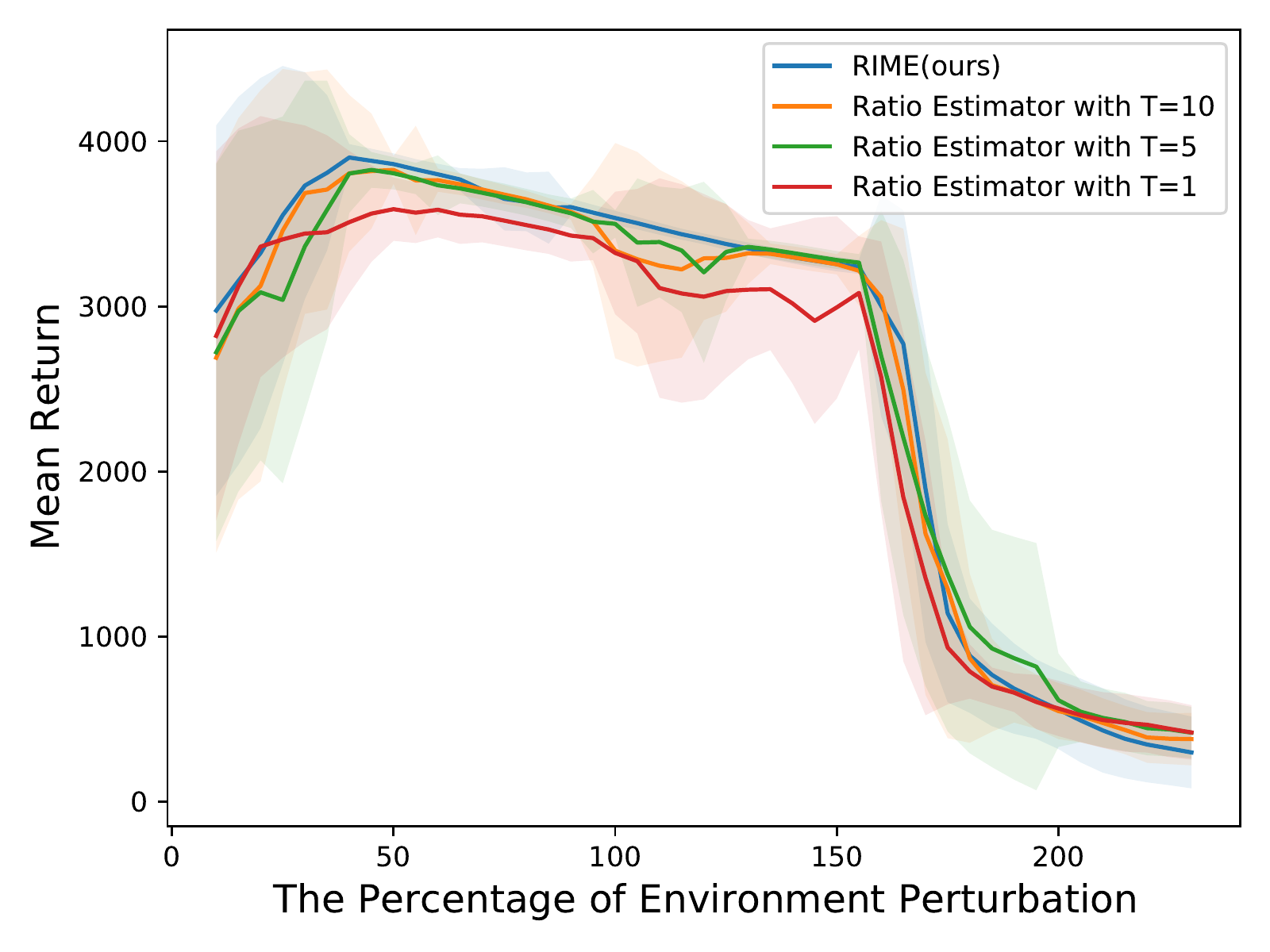}
        \vskip -0.05in
        \captionsetup{justification=centering}
        \caption{Hopper+Mass}
    \end{subfigure}
    \begin{subfigure}[b]{0.24\textwidth}
        \centering
        \includegraphics[width=\linewidth,height=3.0cm]{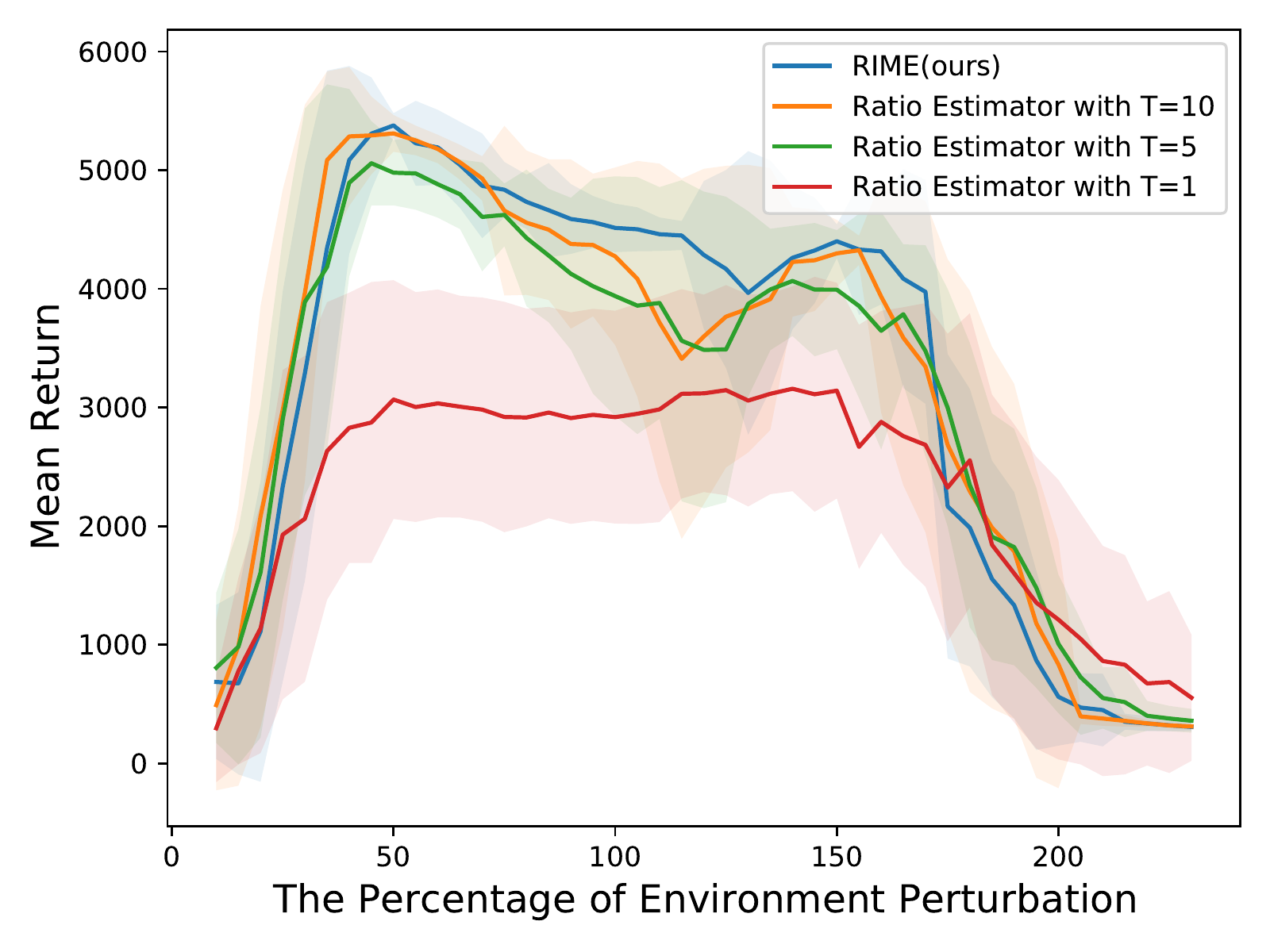}
        \vskip -0.05in
        \captionsetup{justification=centering}
        \caption{Walker2d+Mass}
    \end{subfigure}
    \begin{subfigure}[b]{0.24\textwidth}
        \centering
        \includegraphics[width=\linewidth,height=3.0cm]{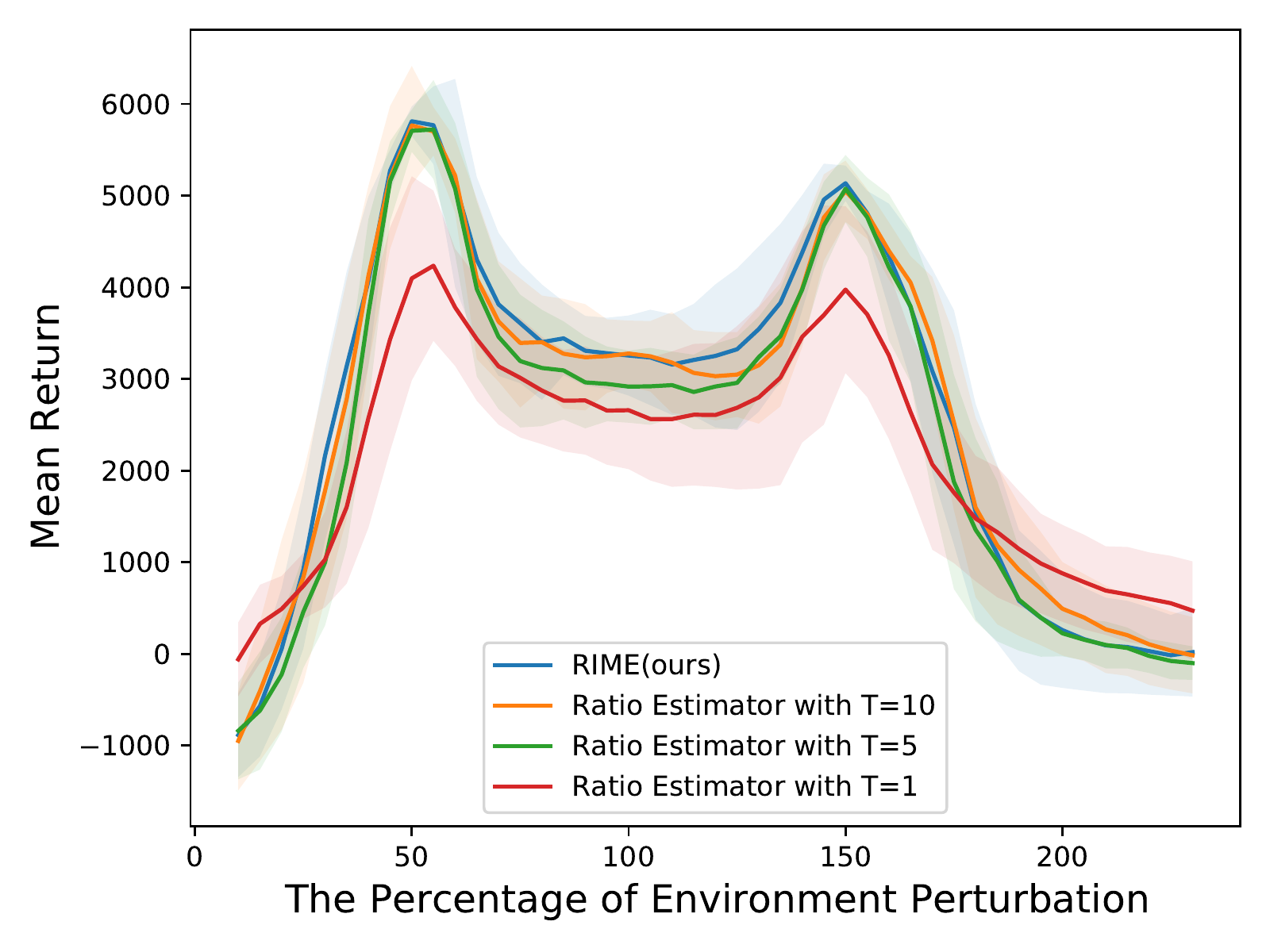}
        \vskip -0.05in
        \captionsetup{justification=centering}
        \caption{HalfCheetah+Mass}
    \end{subfigure}
    \begin{subfigure}[b]{0.24\textwidth}
        \centering
        \includegraphics[width=\linewidth,height=3.0cm]{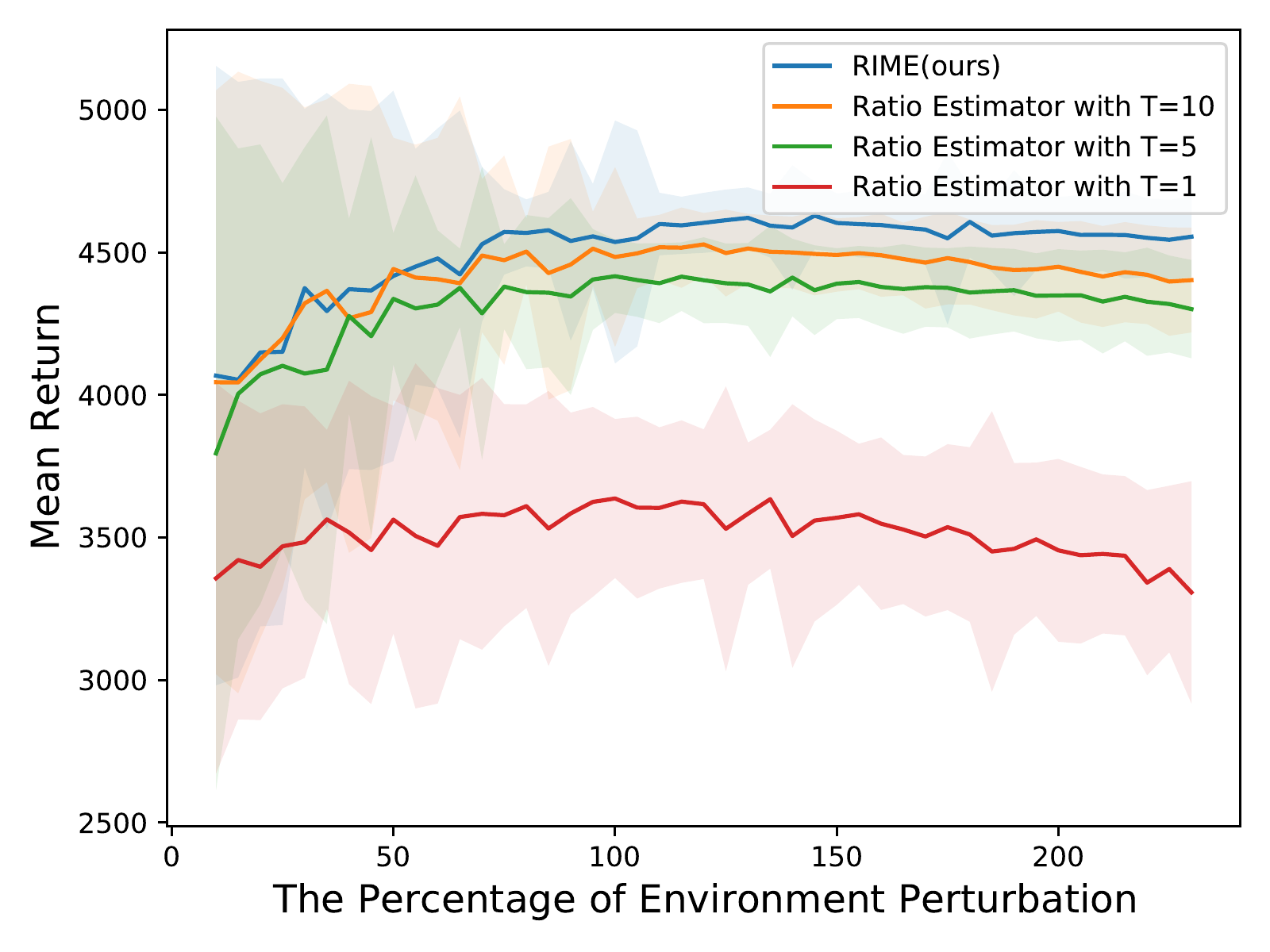}
        \vskip -0.05in
        \captionsetup{justification=centering}
        \caption{Ant+Mass}
    \end{subfigure}
    \vskip -0.1in
    \caption{Comparison with LFIW method in the $N=2$ sampled environments case\label{appendix_fig:LFIW}}
\end{figure}

Now we explain how to estimate the ratio $\tilde{w}_{ij}(s):=\frac{\mu_{\pi}^i(s)}{\mu_E^j(s)}$ by a method of estimating  probability measure ratio proposed in  \cite{IS1:LFIW}. They proposed the Likelihood-Free Importance Weights (LFIW) method, which estimates the ratio of two probability measures by using the lower bound of f-divergence between the two measures. They showed that in \eqref{appendix_eq:f-divergence_bound}, the equality holds at $w=\frac{dP}{dQ}$, so they estimated the probability measure ratio $w(x)$ by maximizing \eqref{appendix_eq:f-divergence_bound}:
\begin{align}
    \label{appendix_eq:f-divergence_bound}
    D_{f}(P||Q)\geq\mathbb{E}_{P}\left[f'(w(x))\right] - \mathbb{E}_{Q}\left[f^*(f'(w(x)))\right],
\end{align}
where $P$ and $Q$ are probability measures, and $D_f$ is an f-divergence. However, directly using the probability measure ratio $w(x)$  may cause learning failure due to the finite sample size issue in practice. To address this issue, the LFIW applies the self-normalization to the probability measures ratio $w(x)$ over $Q$ with a temperature hyperparameter $T$.
\begin{align}
    \tilde{w}(x)=\frac{w(x)^{1/T}}{\mathbb{E}_{Q}\left[w(x)^{1/T}\right]}
\end{align}
By replacing $P$ and $Q$ with $\mu_{\pi}^i$ and $\mu_E^j$, we can estimate the importance sampling ratio $\frac{\mu_{\pi}^i(s)}{\mu_E^j(s)}$. \cref{appendix_fig:LFIW} shows that our proposed method (RIME) which simply sets $\frac{\mu_{\pi}^i(s)}{\mu_{E}^j(s)}$ to $1$ has almost same performance as the proposed method using the estimated  importance sampling ratio by LFIW for all tasks.

\newpage

\subsection{Ablation Study with  State-only Expert Demonstration}
\label{appendix:ablation_GAIfO}

GAIfO \cite{il6:GAIFO} uses state-only expert demonstration and reproduces the expert policy $\pi_E$ by matching the  state-transition occupancy measures induced by the $\pi$ and $\pi_E$. Our algorithm (RIME) and other GAIL variant algorithms can  directly be applied to this setting by using  state-only expert demonstration instead of state-action expert demonstration. We refer to these methods as GAIfO-RIME, GAIfO-OMME, GAIfO-mixture, GAIfO-single.

We tested these GAIfO variants in the $N=2$ sampled environment case (50\% and 150\%). \cref{2env_table_GAIfO_variants} and \cref{appendix_fig:2env_gaifo_variants} show similar results to \cref{2env_table} and \cref{figure:2envs_results} (the case with the state-action expert demonstration) for all the tasks except for Walker2d+Gravity and Ant. For Walker2d+Gravity, GAIfO-mixture and GAIfO-single have good performance around the interaction environments, but they are over-fitted  to these environments and do not  perform near the test  environment with $\zeta_0$. On the other hand,  our method (GAIfO-RIME) performs well near the test  environment with $\zeta_0$. 
Therefore, the experimental results show that our method can properly recover the experts’ preference over the state space.
In the case of  Ant+Gravity and Ant+Mass, all algorithms failed to learn, and we think this is due to the difficulty of optimization due to the large state space of the Ant task.

\begin{table*}[h]
    \caption{Mean return / minimum return of GAIfO variants over the dynamics parameter range [50\%,150\%] in the $N=2$ sampled environment case}
    \label{2env_table_GAIfO_variants}
    \centering
    \begin{footnotesize}
    \begin{tabular}{l||c|c|c}
    \toprule
    Algorithm & Hopper +Gravity & Walker2d +Gravity & HalfCheetah +Gravity \\
    \midrule
    GAIfO-RIME (ours) & \textbf{2758.9} / \textbf{2318.4} & \textbf{3767.7} / \textbf{3331.5} & \textbf{4247.9} / \textbf{3786.1} \\
    GAIfO-OMME & 1491.4 / 1063.3 & 2918.1 / 1935.6 & 3600.4 / 3152.0 \\
    GAIfO-mixture & 1424.8 / 719.6 & 3551.4 / 1808.0 & 3446.6 / 2851.8 \\
    GAIfO-single & 1376.6 / 765.7 & 3346.0 / 1384.0 & 3208.0 / 2252.2 \\
    \midrule\midrule
    Algorithm     & Hopper +Mass & Walker2d +Mass & HalfCheetah +Mass \\
    \midrule
    GAIfO-RIME (ours) & \textbf{3496.5} / \textbf{3335.5} & \textbf{4757.3} / \textbf{4336.3} & \textbf{4247.6} / \textbf{3611.5} \\
    GAIfO-OMME & 2937.7 / 2421.3 & 4154.2 / 3657.6 & 3906.5 / 3172.2 \\
    GAIfO-mixture & 3188.3 / 2953.5 & 3872.0 / 2769.2 & 3239.8 / 2670.5 \\
    GAIfO-single & 2269.2 / 1622.9 & 3478.5 / 1582.6 & 2947.1 / 1653.3 \\
    \bottomrule
    \end{tabular}
    \end{footnotesize}
\end{table*}
\begin{figure*}[h]
    \centering
    \begin{subfigure}[b]{0.24\textwidth}
        \centering
        \includegraphics[width=\textwidth]{RIME_figures/gaifo_2env_compare_hog_compare.pdf}
        \captionsetup{justification=centering}
        \caption{Hopper+Gravity}
    \end{subfigure}
    \begin{subfigure}[b]{0.24\textwidth}
        \centering
        \includegraphics[width=\textwidth]{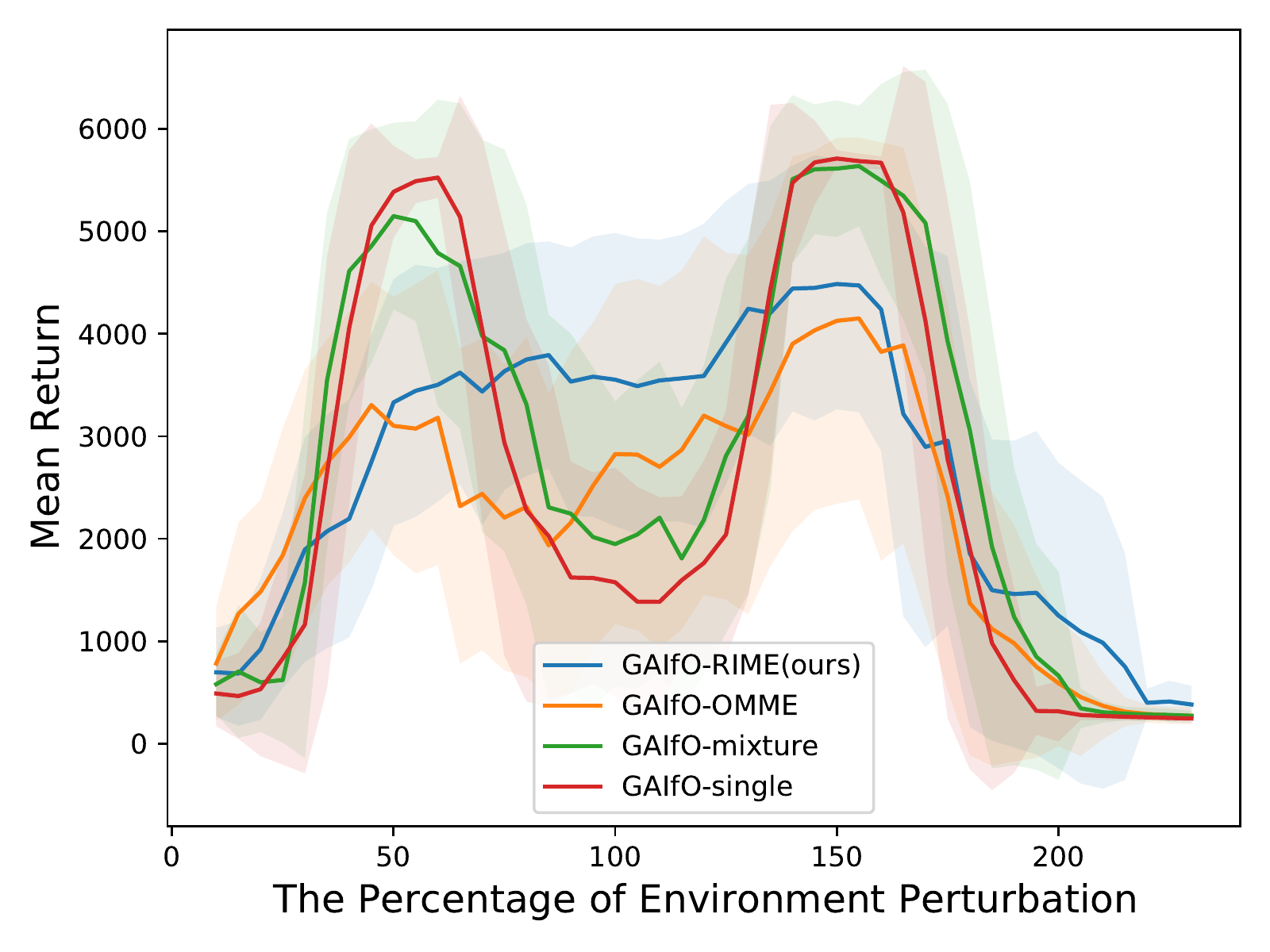}
        \captionsetup{justification=centering}
        \caption{Walker2d+Gravity}
    \end{subfigure}
    \begin{subfigure}[b]{0.24\textwidth}
        \centering
        \includegraphics[width=\textwidth]{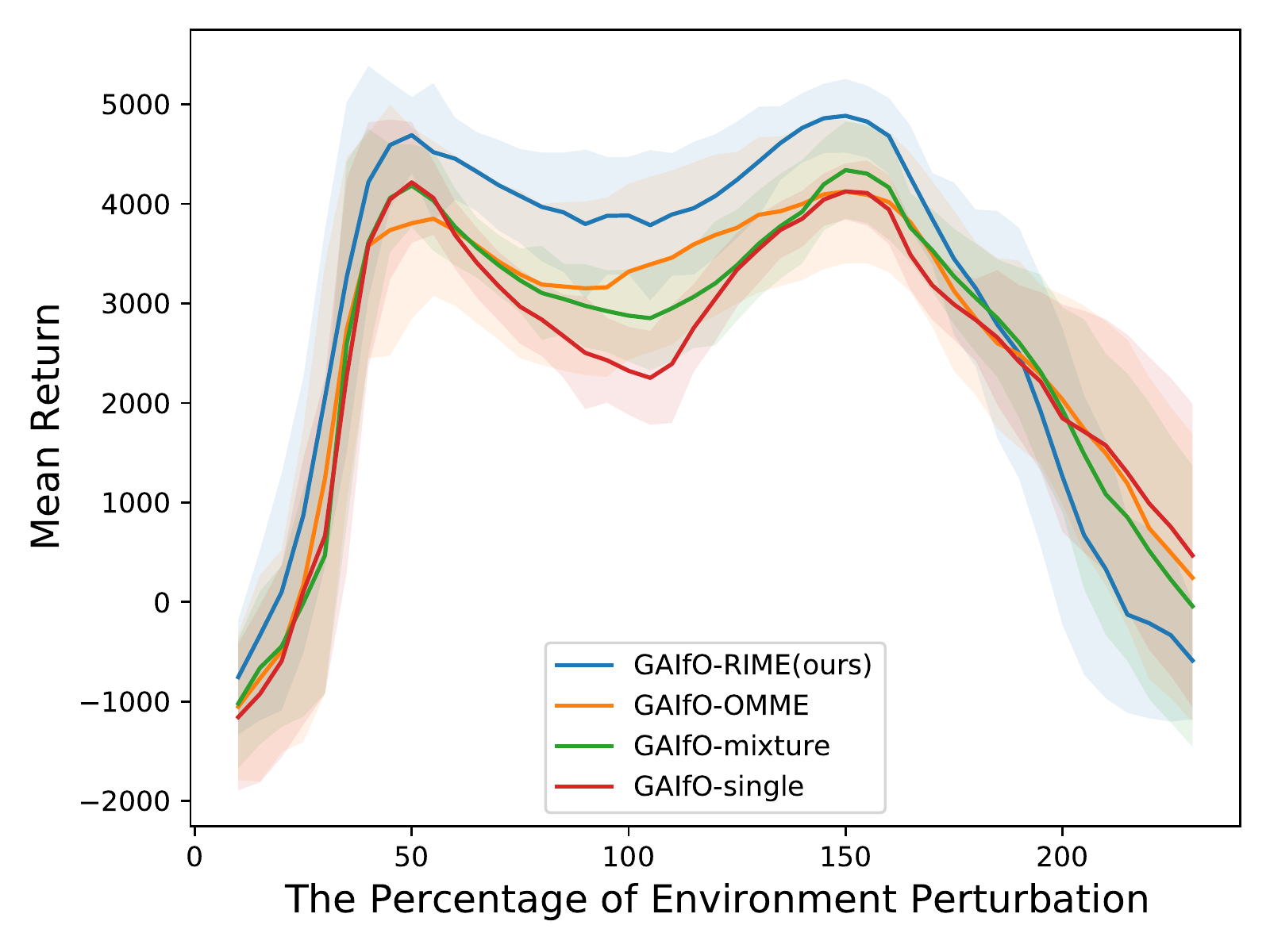}
        \captionsetup{justification=centering}
        \caption{HalfCheetah+Gravity}
    \end{subfigure}
    \vfill
    \begin{subfigure}[b]{0.24\textwidth}
        \centering
        \includegraphics[width=\textwidth]{RIME_figures/gaifo_2env_compare_hom_compare.pdf}
        \captionsetup{justification=centering}
        \caption{Hopper+Mass}
    \end{subfigure}
    \begin{subfigure}[b]{0.24\textwidth}
        \centering
        \includegraphics[width=\textwidth]{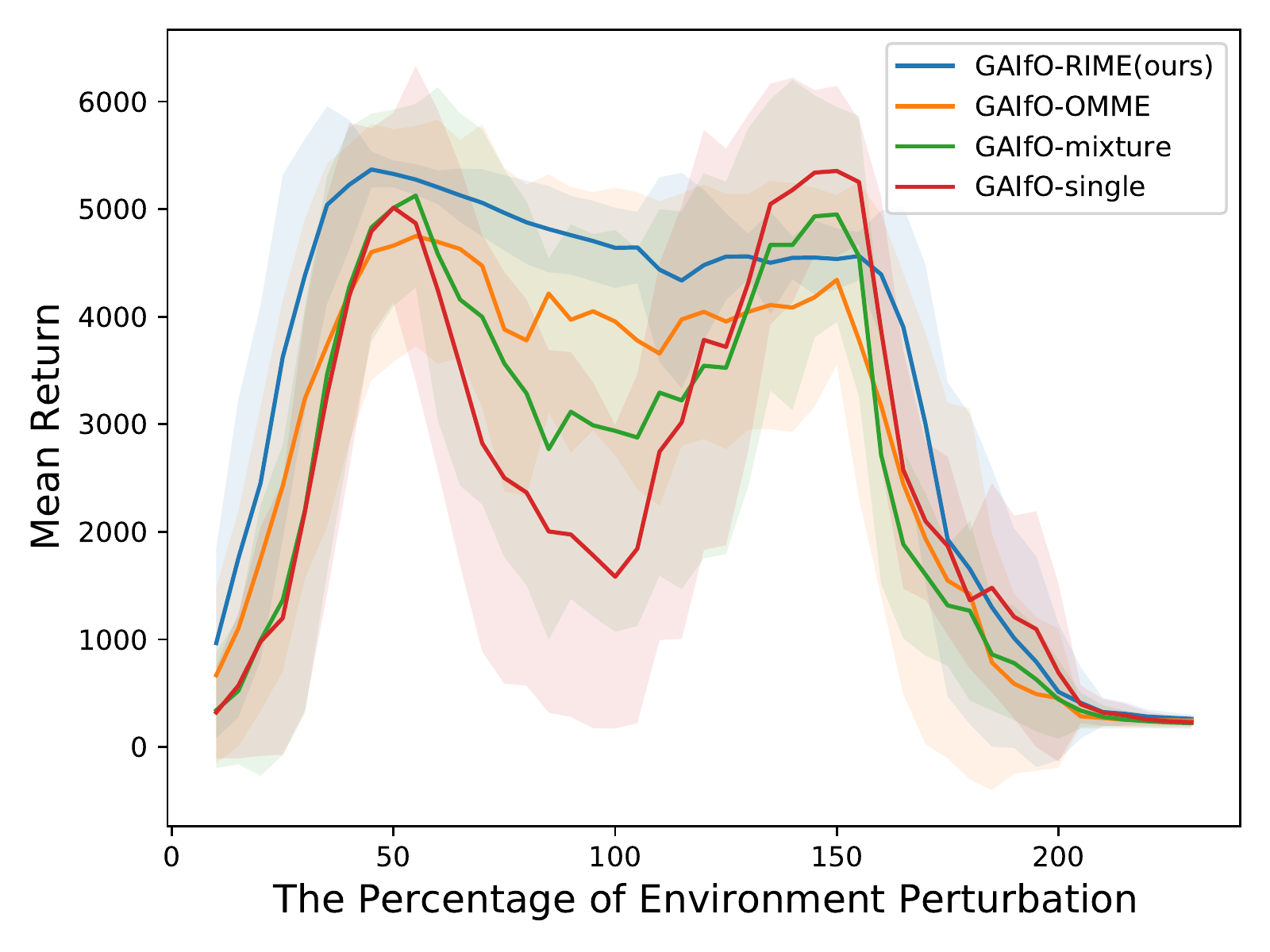}
        \captionsetup{justification=centering}
        \caption{Walker2d+Mass}
    \end{subfigure}
    \begin{subfigure}[b]{0.24\textwidth}
        \centering
        \includegraphics[width=\textwidth]{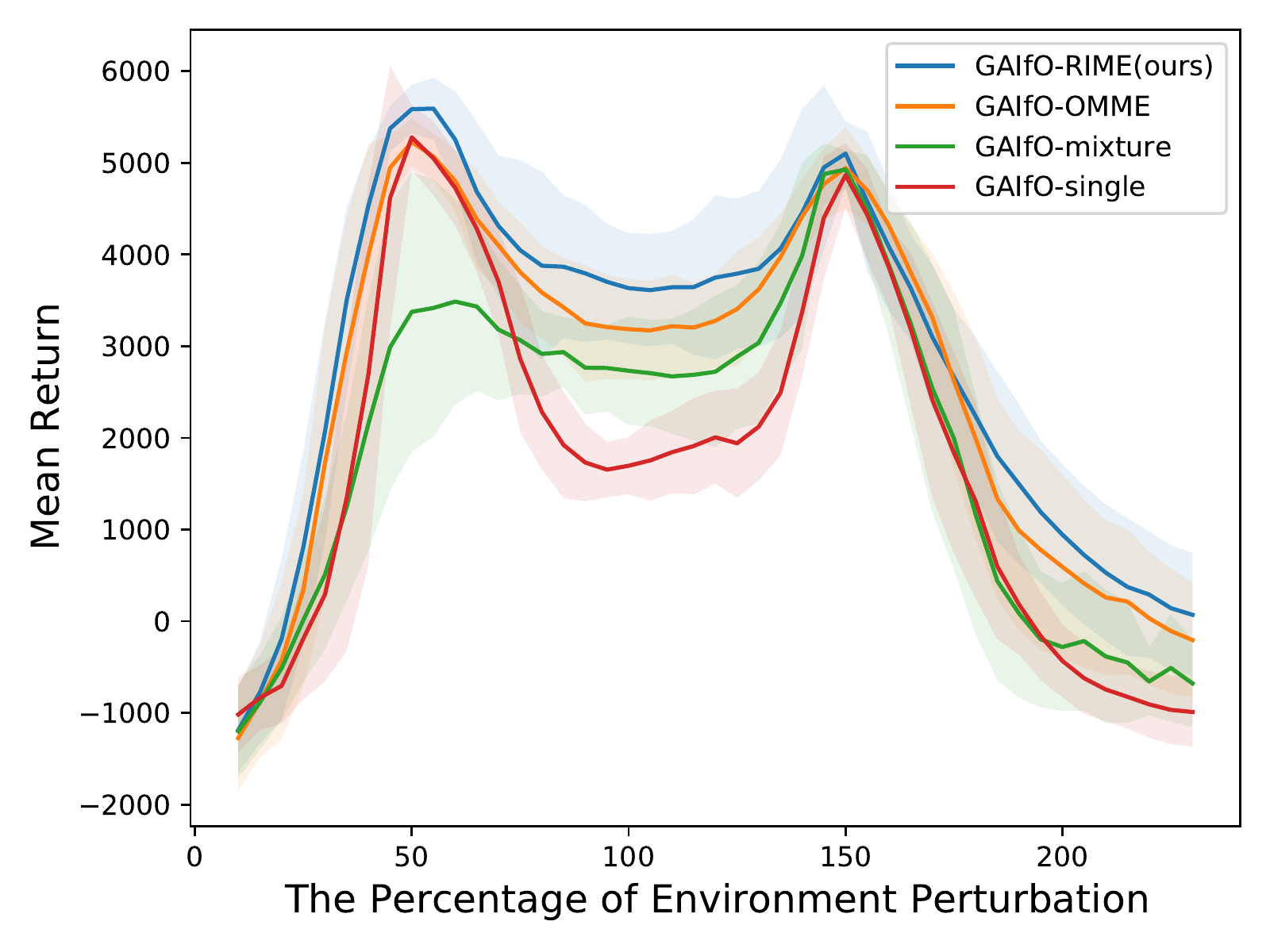}
        \captionsetup{justification=centering}
        \caption{HalfCheetah+Mass}
    \end{subfigure}
    \caption{Comparisons with variants of GAIfO in the $N=2$ sampled environments case \label{appendix_fig:2env_gaifo_variants}}
\end{figure*}

\newpage

\subsection{Ablation Study according to the Size of Expert Demonstration}
\label{appendix:ablation_the_size_of_expert_demonstrations}

Considering the fact that expert demonstrations are costly to obtain,  we tested our algorithm by reducing the amount of expert demonstration from the 50 trajectories (each trajectory with 1000 samples).  

As seen in \cref{appendix:trajs_table}, for Hopper+Gravity, the robustness of our algorithm decreases as the size of expert demonstration decreases. However, for Ant+Gravity and Walker2d+Mass and HalfCheetah+Gravity and Ant+Mass, our algorithm using the reduced amount of expert demonstration still performs well. It seems that the amount of demonstration above a threshold is sufficient. 
\begin{table}[h]
    \caption{Mean return / minimum return over the dynamics parameter range [50\%,150\%] in the $N=2$ sampled environment case with expert demonstrations with various size}
    \label{appendix:trajs_table}
    \centering
    \begin{footnotesize}
    \begin{tabular}{c||c|c|c|c}
    \toprule
    \multirow{2}{4.5em}{\# of expert trajectories} & \multirow{2}{*}{Hopper +Gravity} & \multirow{2}{*}{Walker2d +Gravity} & \multirow{2}{*}{HalfCheetah +Gravity} & \multirow{2}{*}{Ant +Gravity} \\
     & & & & \\
    \midrule
    50 & 2886.7 / 2332.4 & 4577.1 / 4260.9 & 4268.9 / 3712.0 & 4402.2 / 3909.9 \\
    25 & 2774.7 / 2021.0 & 4455.5 / 4044.1 & 4290.3 / 3720.9 & 4445.5 / 3666.5 \\
    10 & 2570.8 / 1811.5 & 4243.1 / 3529.8 & 4244.5 / 3622.7 & 4554.7 / 4038.6 \\
    5 & 2323.2 / 1754.2 & 4514.0 / 3906.0 & 4219.5 / 3578.2 & 4562.7 / 3985.9 \\
    \midrule\midrule
    \multirow{2}{4.5em}{\# of expert trajectories} & \multirow{2}{*}{Hopper +Mass} & \multirow{2}{*}{Walker2d +Mass} & \multirow{2}{*}{HalfCheetah +Mass} & \multirow{2}{*}{Ant +Mass} \\
     & & & & \\
    \midrule
    50 & 3535.7 / 3255.6 & 4597.0 / 3965.8 & 3959.0 / 3156.7 & 4554.5 / 4417.5 \\
    25 & 3510.4 / 3246.3 & 4608.7 / 4134.2 & 4185.2 / 3320.4 & 4602.1 / 4395.6 \\
    10 & 3352.2 / 2895.2 & 4552.2 / 3937.8 & 4025.4 / 3222.4 & 4662.6 / 4524.2 \\
    5 & 3365.5 / 2965.8 & 4614.4 / 4073.2 & 3698.9 / 2718.1 & 4679.9 / 4496.8 \\
    \bottomrule
    \end{tabular}
    \end{footnotesize}
\end{table}

\begin{figure}[h]
    \centering
    \begin{subfigure}[b]{0.24\textwidth}
        \centering
        \includegraphics[width=\textwidth]{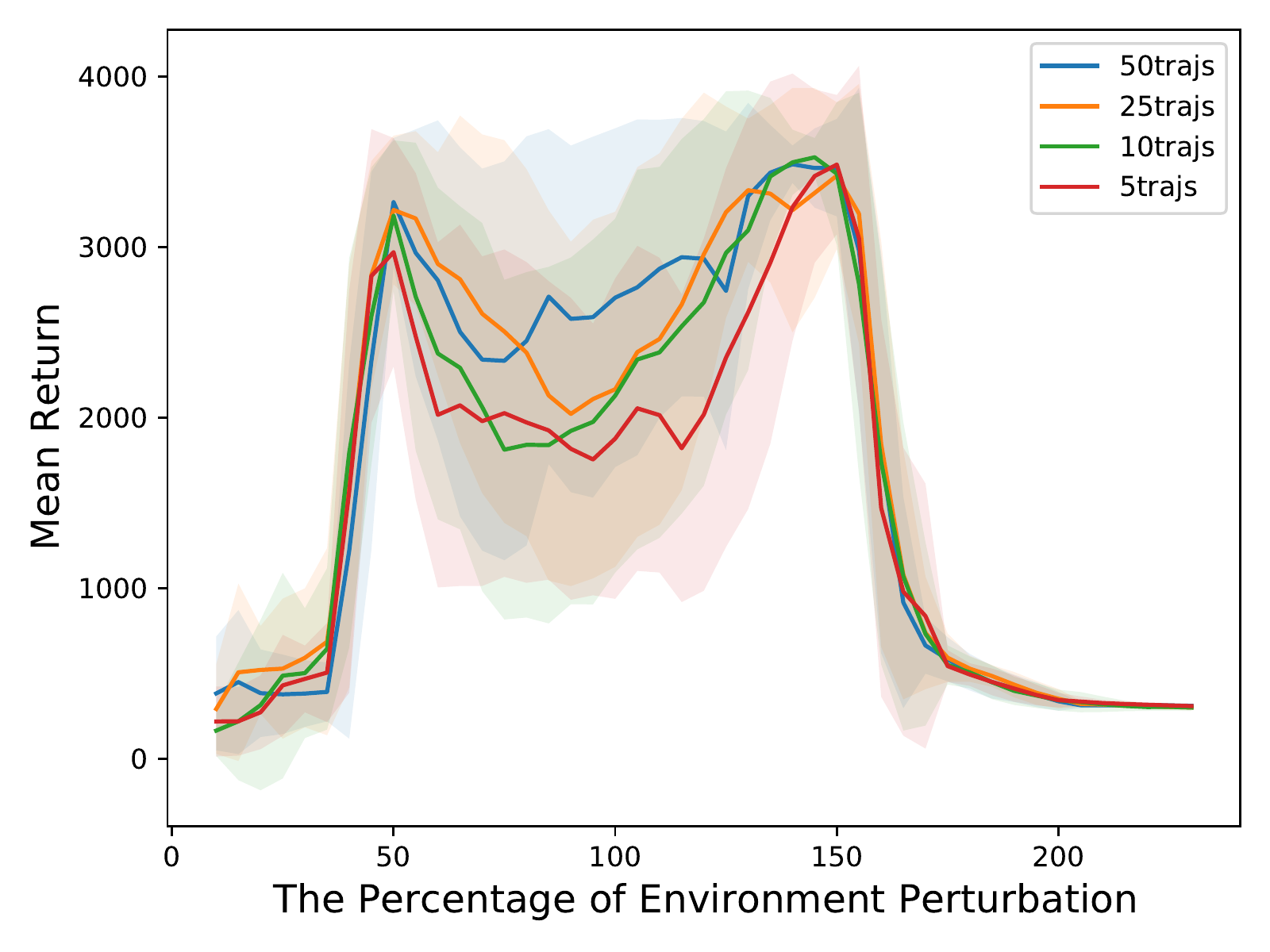}
        \captionsetup{justification=centering}
        \caption{Hopper+Gravity}
    \end{subfigure}
    \begin{subfigure}[b]{0.24\textwidth}
        \label{figure:biased_envs_results_hog_compare}
        \centering
        \includegraphics[width=\textwidth]{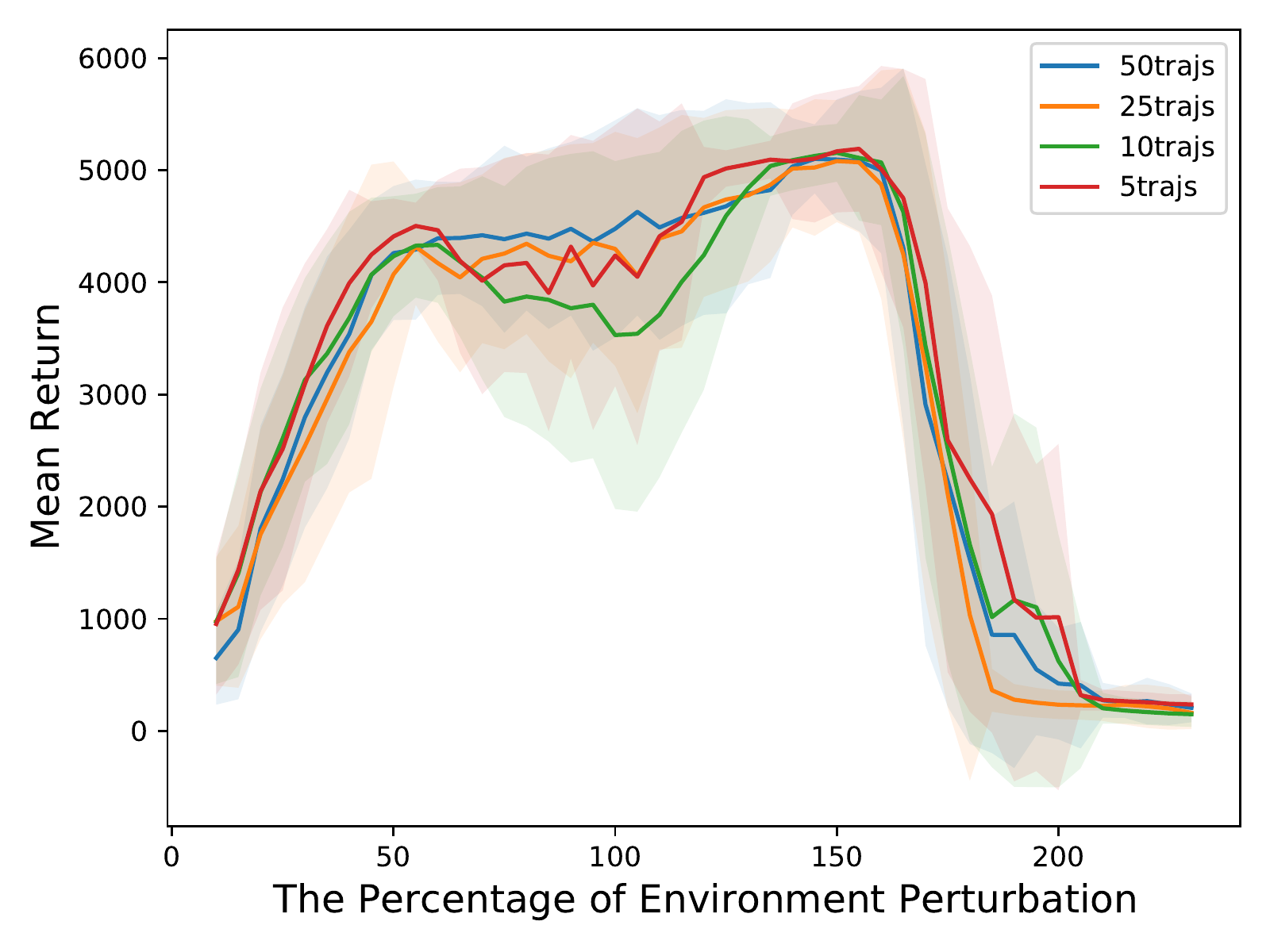}
        \captionsetup{justification=centering}
        \caption{Walker2d+Gravity}
    \end{subfigure}
    \begin{subfigure}[b]{0.24\textwidth}
        \centering
        \includegraphics[width=\textwidth]{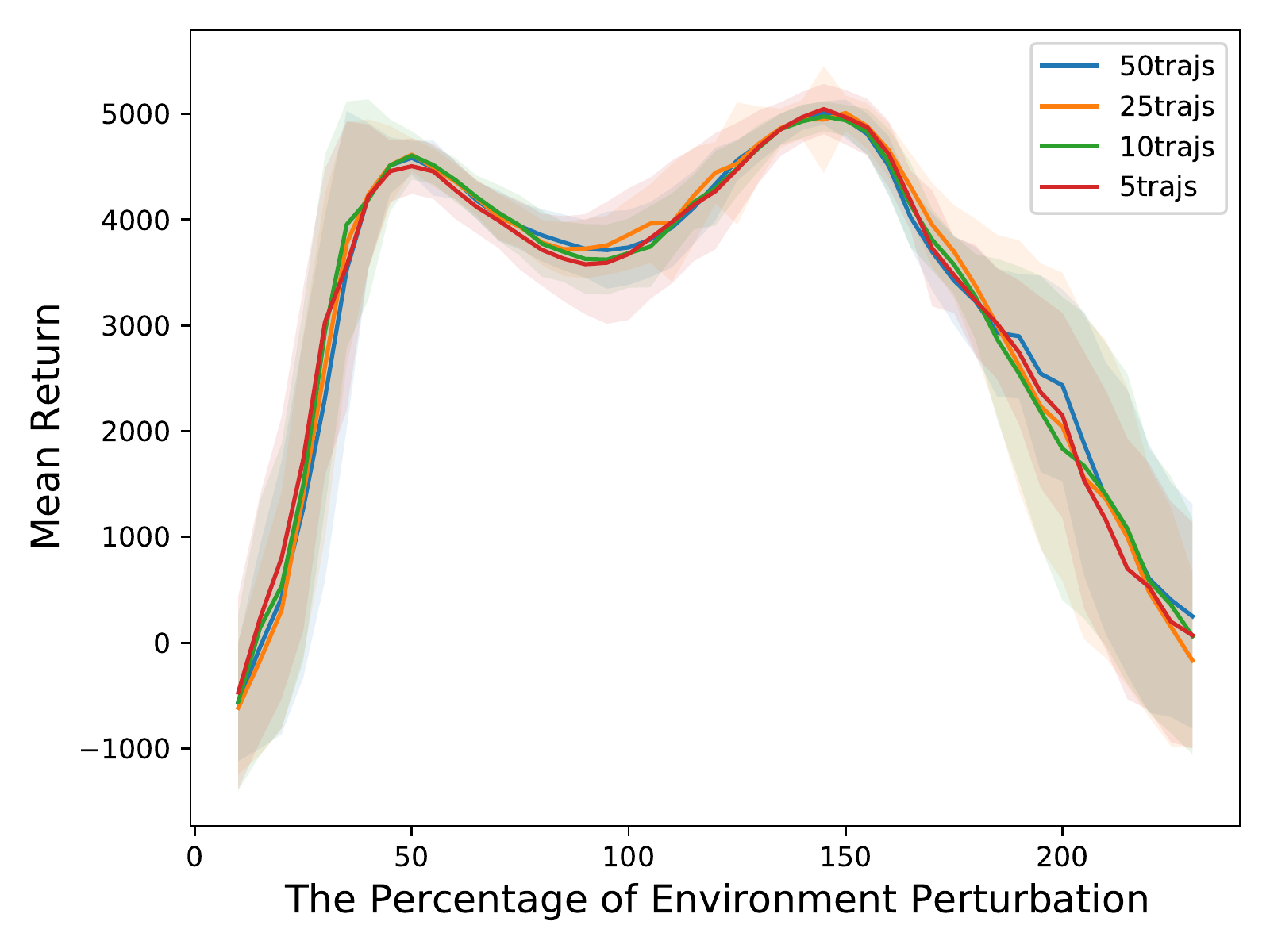}
        \captionsetup{justification=centering}
        \caption{HalfCheetah+Gravity}
    \end{subfigure}
    \begin{subfigure}[b]{0.24\textwidth}
        \centering
        \includegraphics[width=\textwidth]{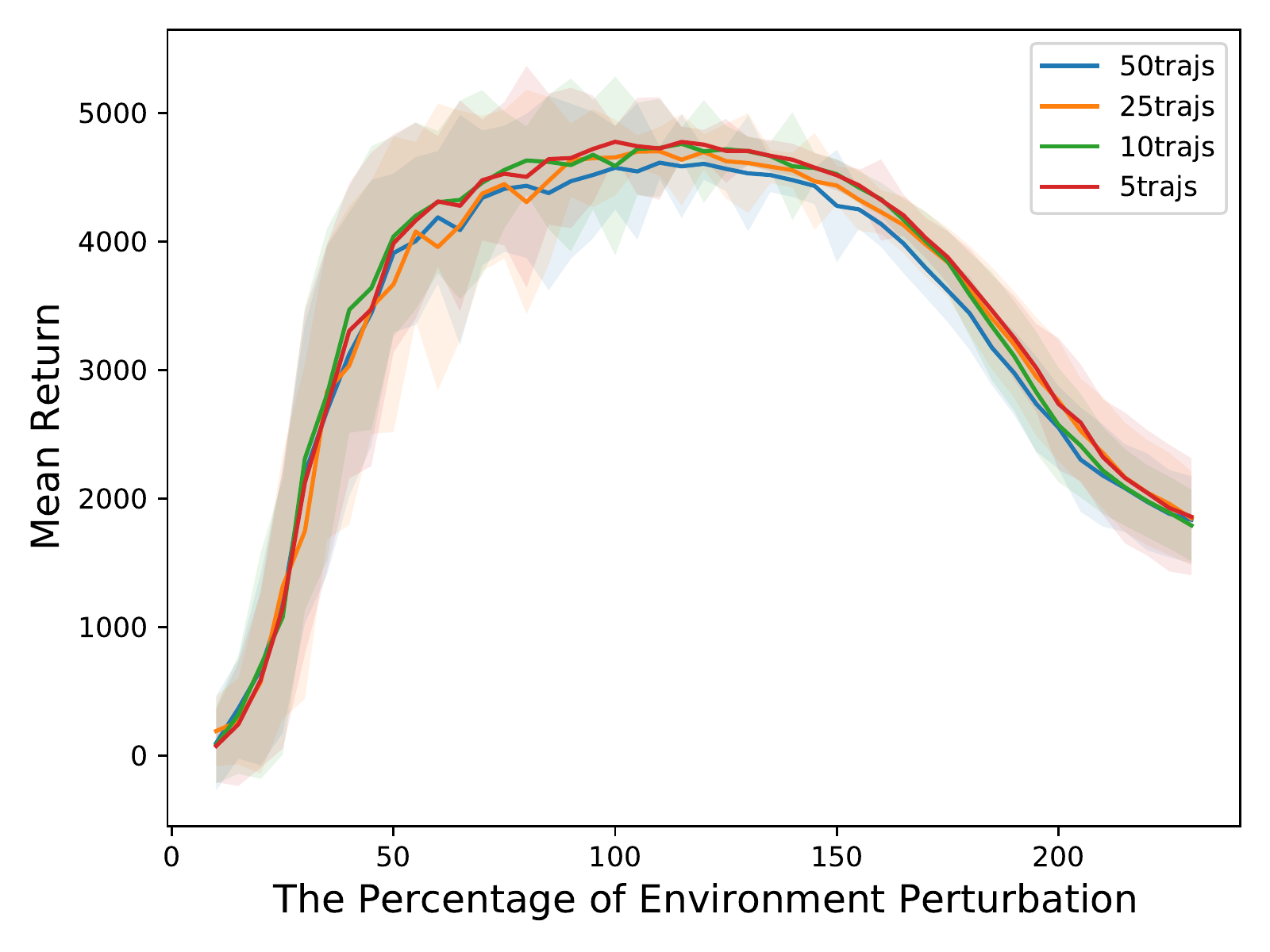}
        \captionsetup{justification=centering}
        \caption{Ant+Gravity}
    \end{subfigure}
    
    \begin{subfigure}[b]{0.24\textwidth}
        \centering
        \includegraphics[width=\textwidth]{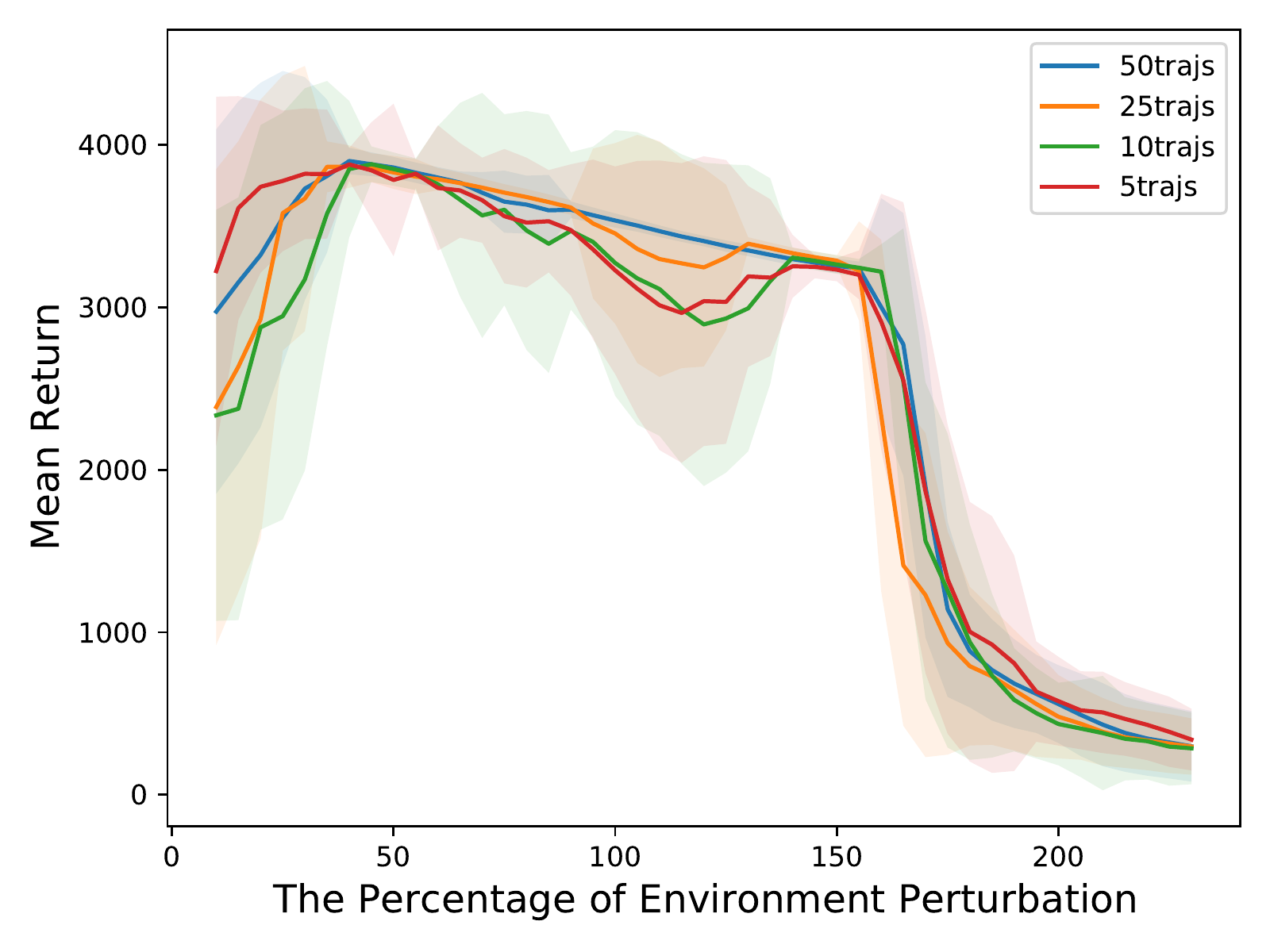}
        \captionsetup{justification=centering}
        \caption{Hopper+Mass}
    \end{subfigure}
    \begin{subfigure}[b]{0.24\textwidth}
        \label{figure:biased_envs_results_hog_compare}
        \centering
        \includegraphics[width=\textwidth]{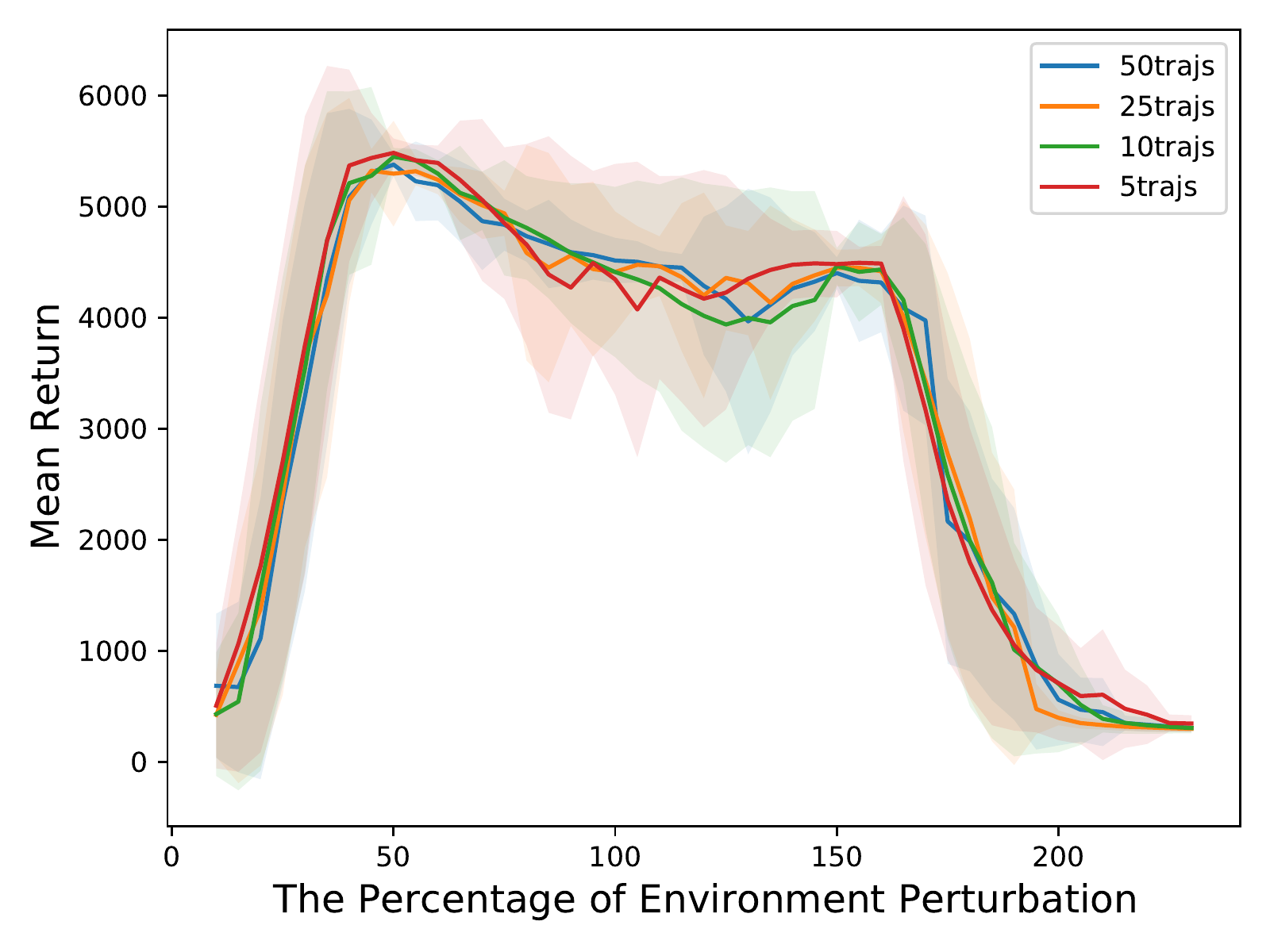}
        \captionsetup{justification=centering}
        \caption{Walker2d+Mass}
    \end{subfigure}
    \begin{subfigure}[b]{0.24\textwidth}
        \centering
        \includegraphics[width=\textwidth]{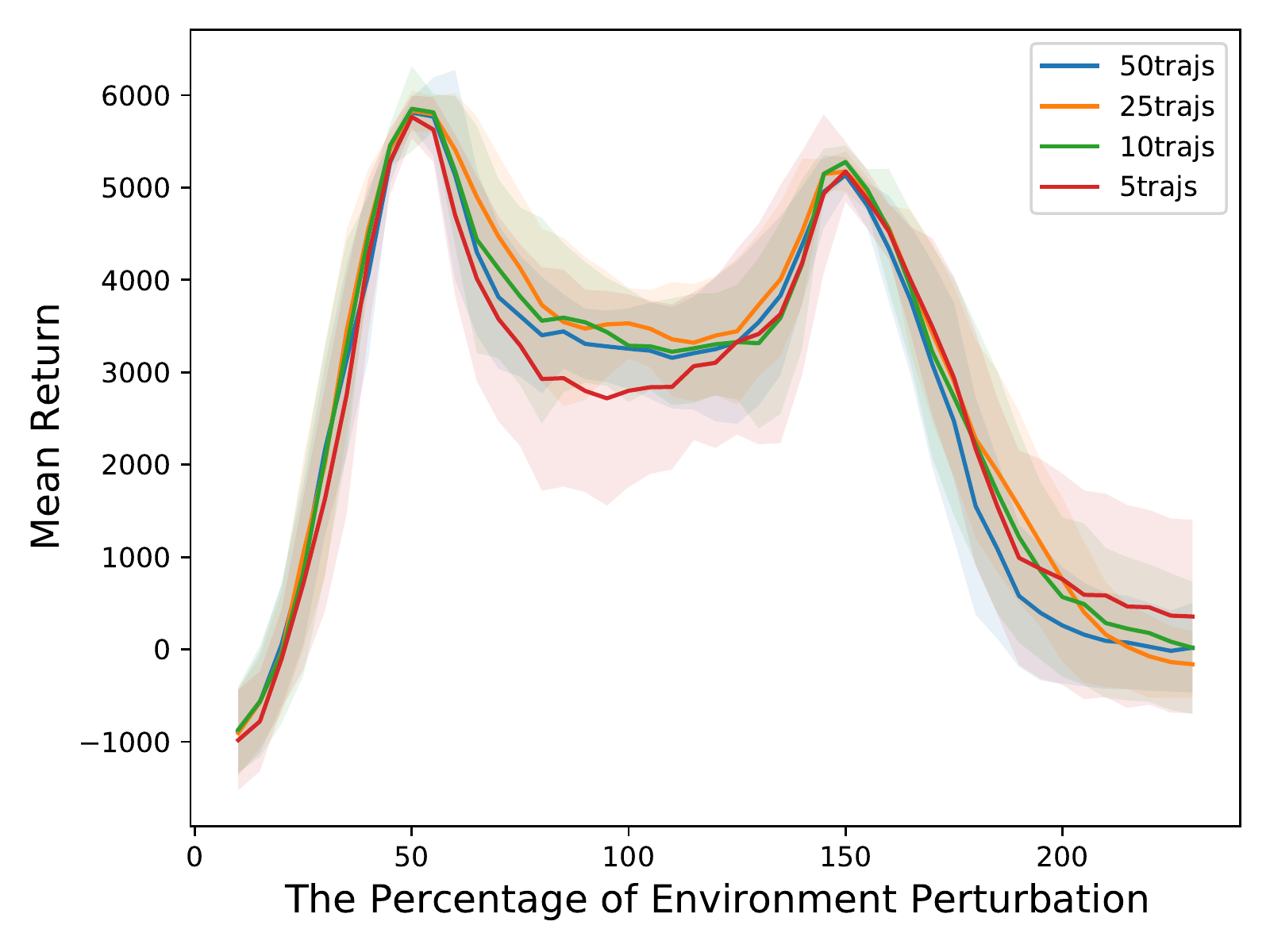}
        \captionsetup{justification=centering}
        \caption{HalfCheetah+Mass}
    \end{subfigure}
    \begin{subfigure}[b]{0.24\textwidth}
        \centering
        \includegraphics[width=\textwidth]{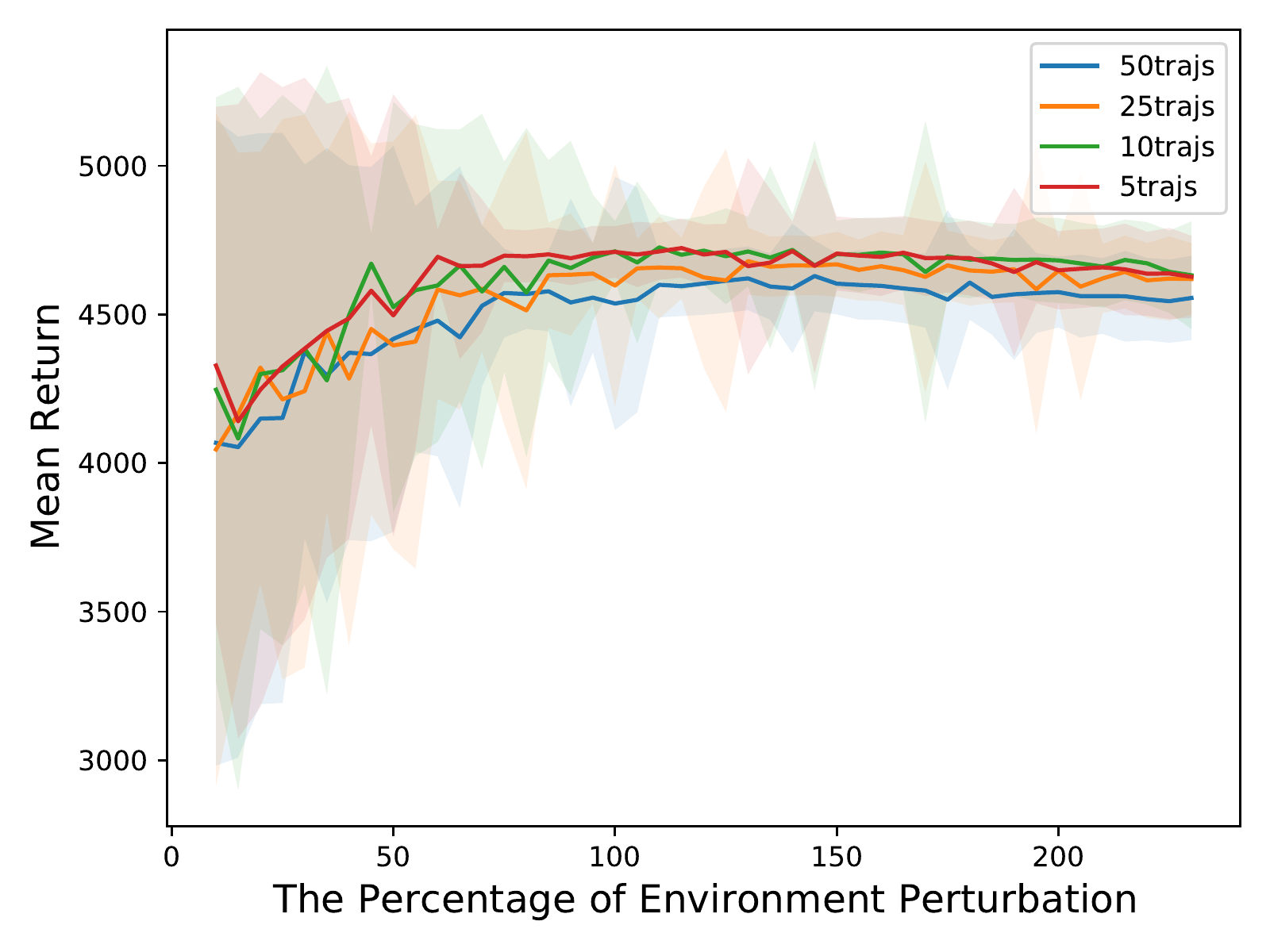}
        \captionsetup{justification=centering}
        \caption{Ant+Mass}
    \end{subfigure}
    \caption{The performance of RIME trained in the $N=2$ sampled environment setting with various sizes of expert demonstrations. \label{appendix:figure:results_over_trajectories}}
\end{figure}

\newpage

\section{Additional Experimental Results}
\label{Appendix:additional_experiments}

\subsection{Results in the $N=2$ Sampled Environment Setting ($50\%\zeta_0,~150\%\zeta_0$)}
\label{Appendix:results_in_2_learning_environmnets}

Here we provide all result plots in the 2 sampled environment setting for our algorithm and the baseline algorithms.

\begin{figure}[ht]
    \begin{subfigure}[b]{0.24\textwidth}
        \centering
        \includegraphics[width=\textwidth]{RIME_figures/2env_RIME_hog_perf.pdf}
        \captionsetup{justification=centering}
        \caption{Hopper+Gravity:\\performance}
    \end{subfigure}
    \begin{subfigure}[b]{0.24\textwidth}
        \centering
        \includegraphics[width=\textwidth]{RIME_figures/2env_compare_hog_compare.pdf}
        \captionsetup{justification=centering}
        \caption{Hopper+Gravity:\\comparisons}
    \end{subfigure}
    \begin{subfigure}[b]{0.24\textwidth}
        \centering
        \includegraphics[width=\textwidth]{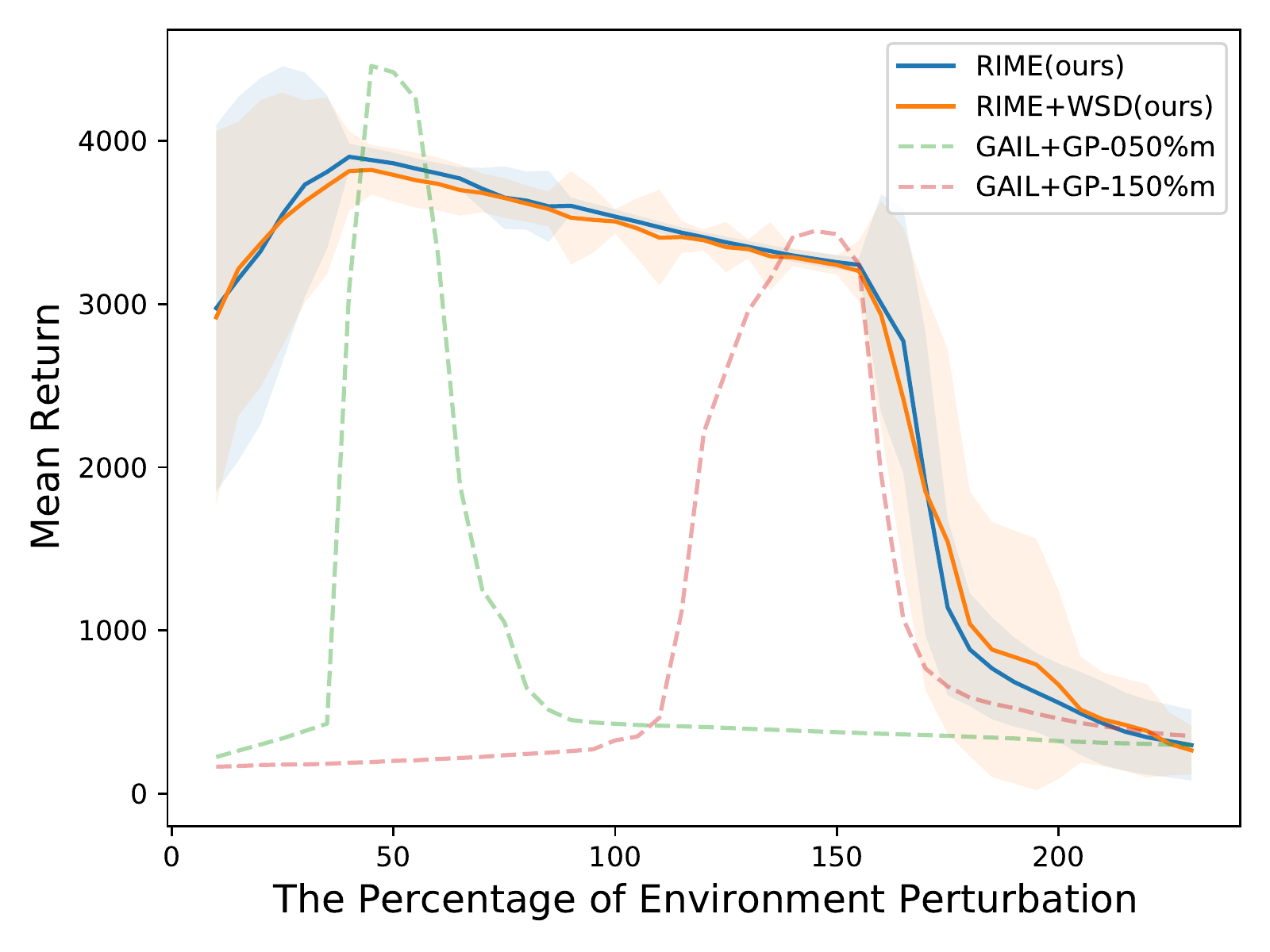}
        \captionsetup{justification=centering}
        \caption{Hopper+Mass:\\performance}
    \end{subfigure}
    \begin{subfigure}[b]{0.24\textwidth}
        \centering
        \includegraphics[width=\textwidth]{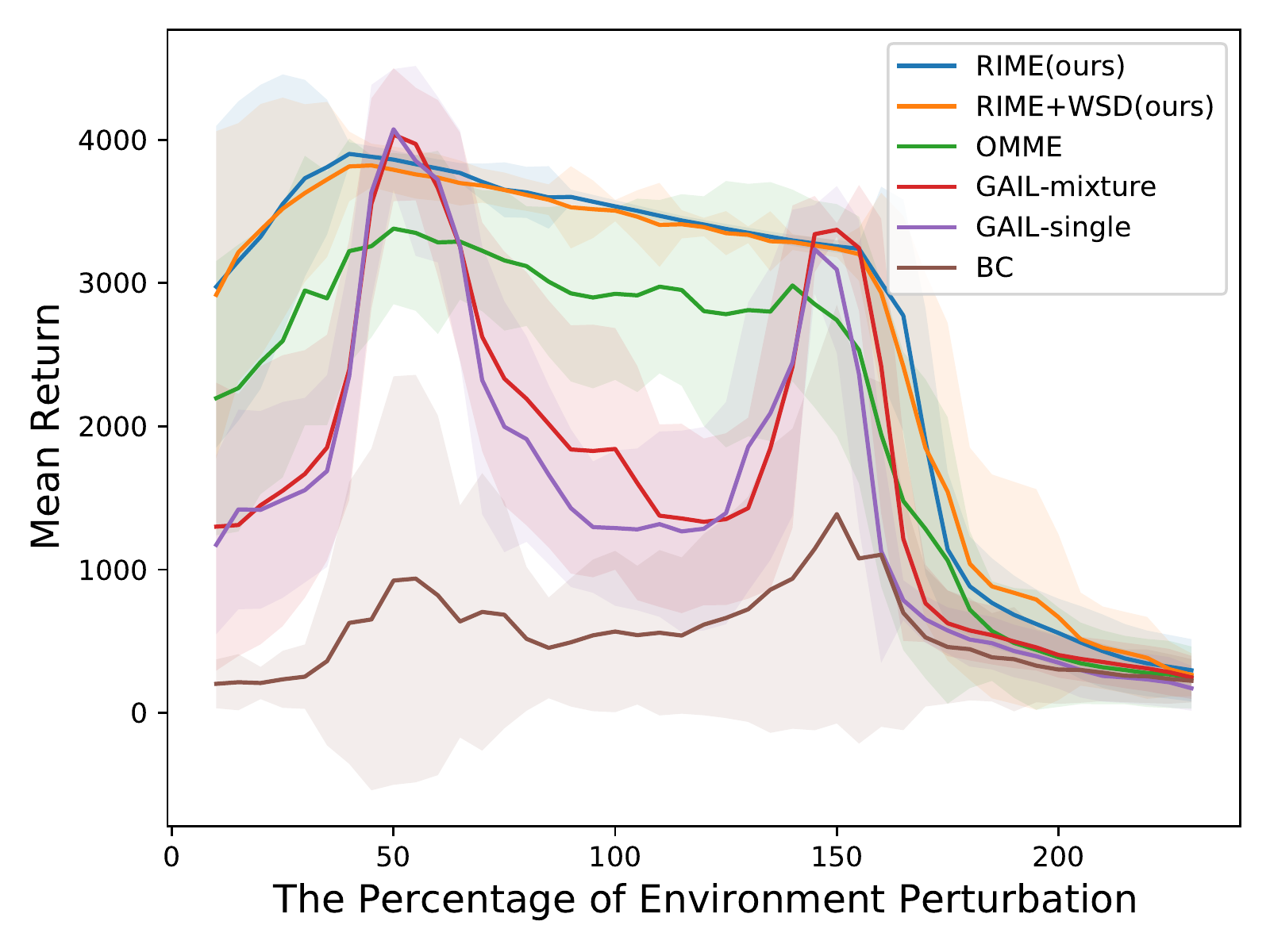}
        \captionsetup{justification=centering}
        \caption{Hopper+Mass:\\comparisons}
    \end{subfigure}
    
    \begin{subfigure}[b]{0.24\textwidth}
        \centering
        \includegraphics[width=\textwidth]{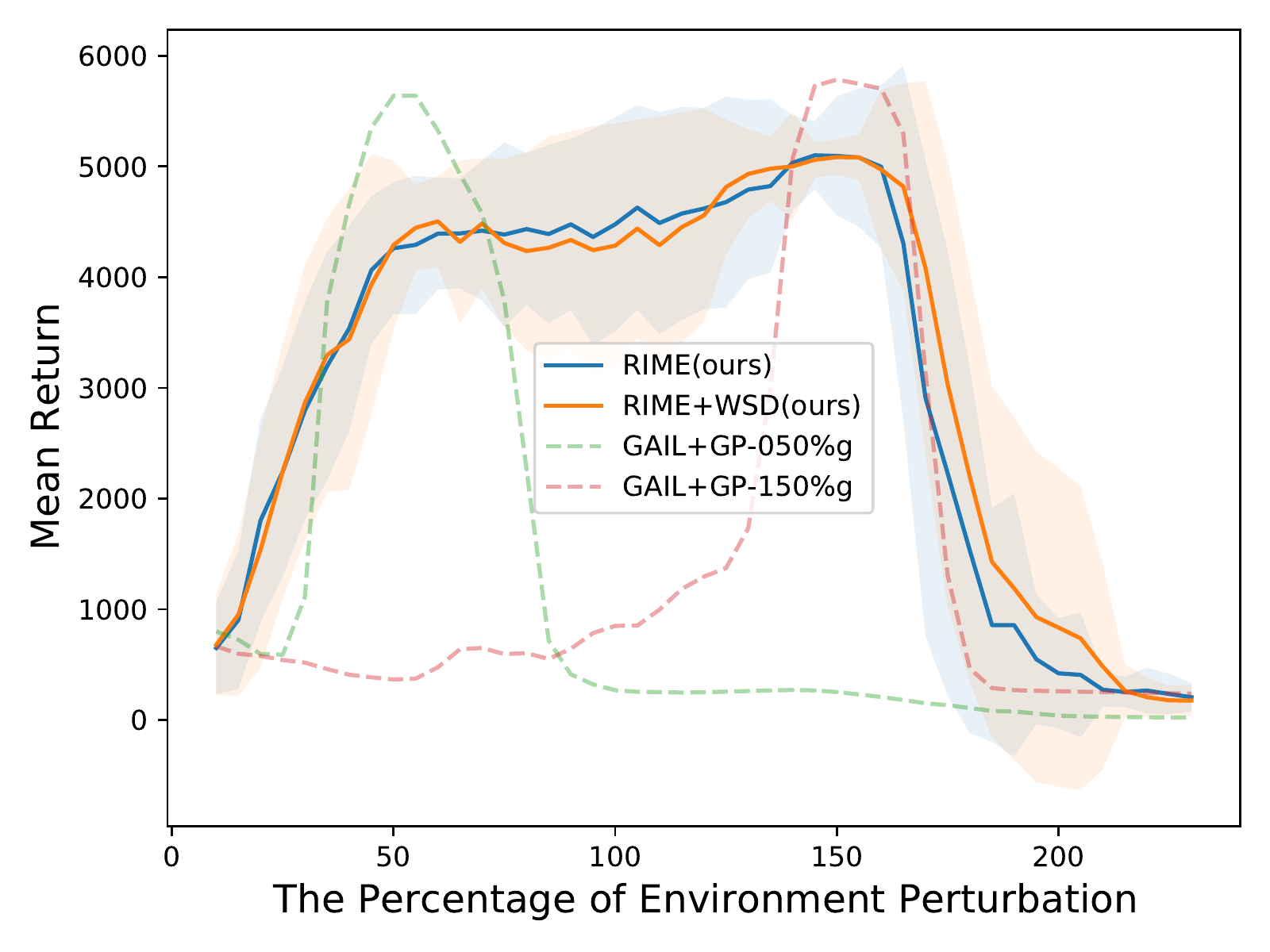}
        \captionsetup{justification=centering}
        \caption{Walker2d+Gravity:\\performance}
    \end{subfigure}
    \begin{subfigure}[b]{0.24\textwidth}
        \centering
        \includegraphics[width=\textwidth]{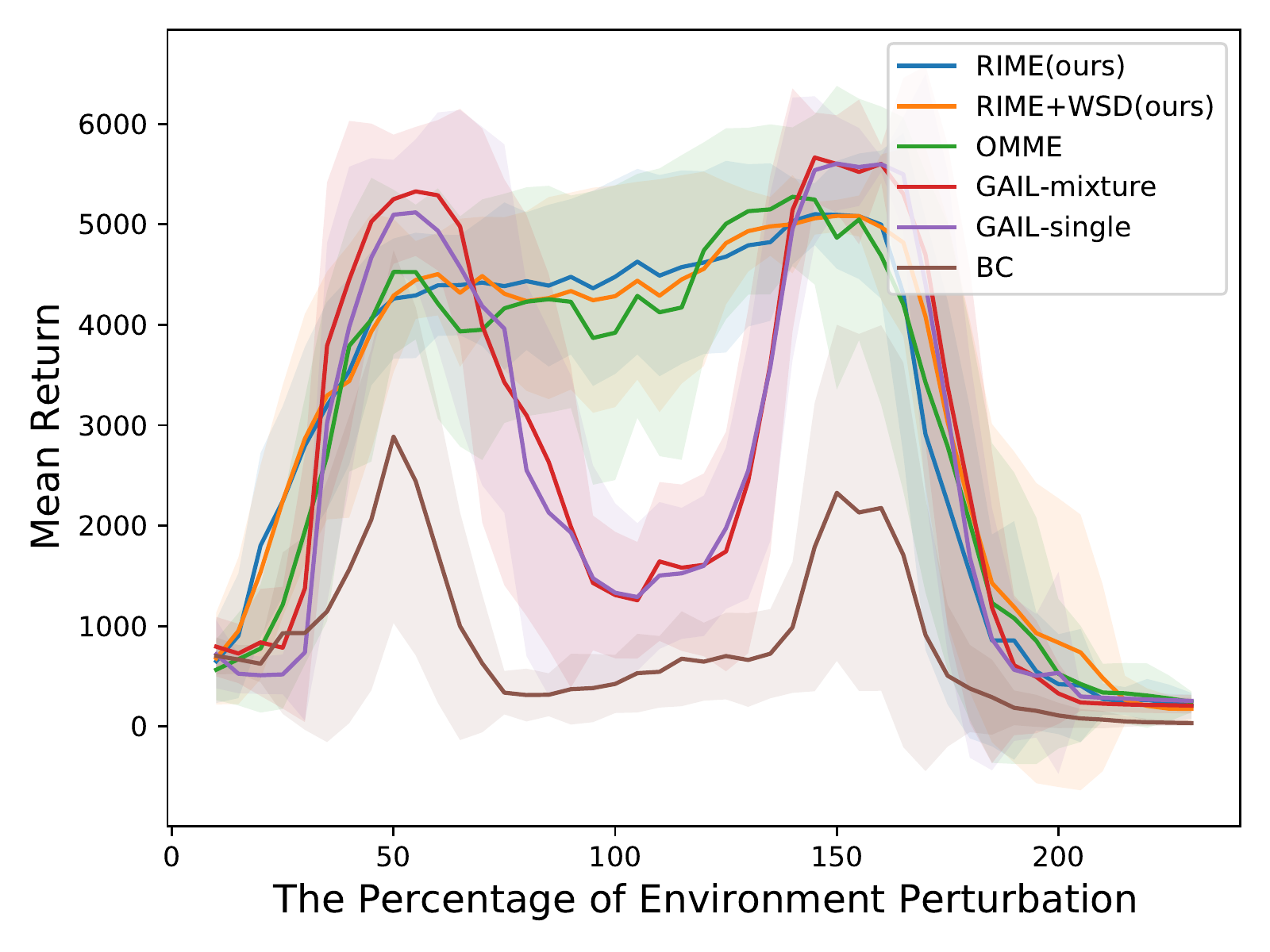}
        \captionsetup{justification=centering}
        \caption{Walker2d+Gravity:\\comparisons}
    \end{subfigure}
    \begin{subfigure}[b]{0.24\textwidth}
        \centering
        \includegraphics[width=\textwidth]{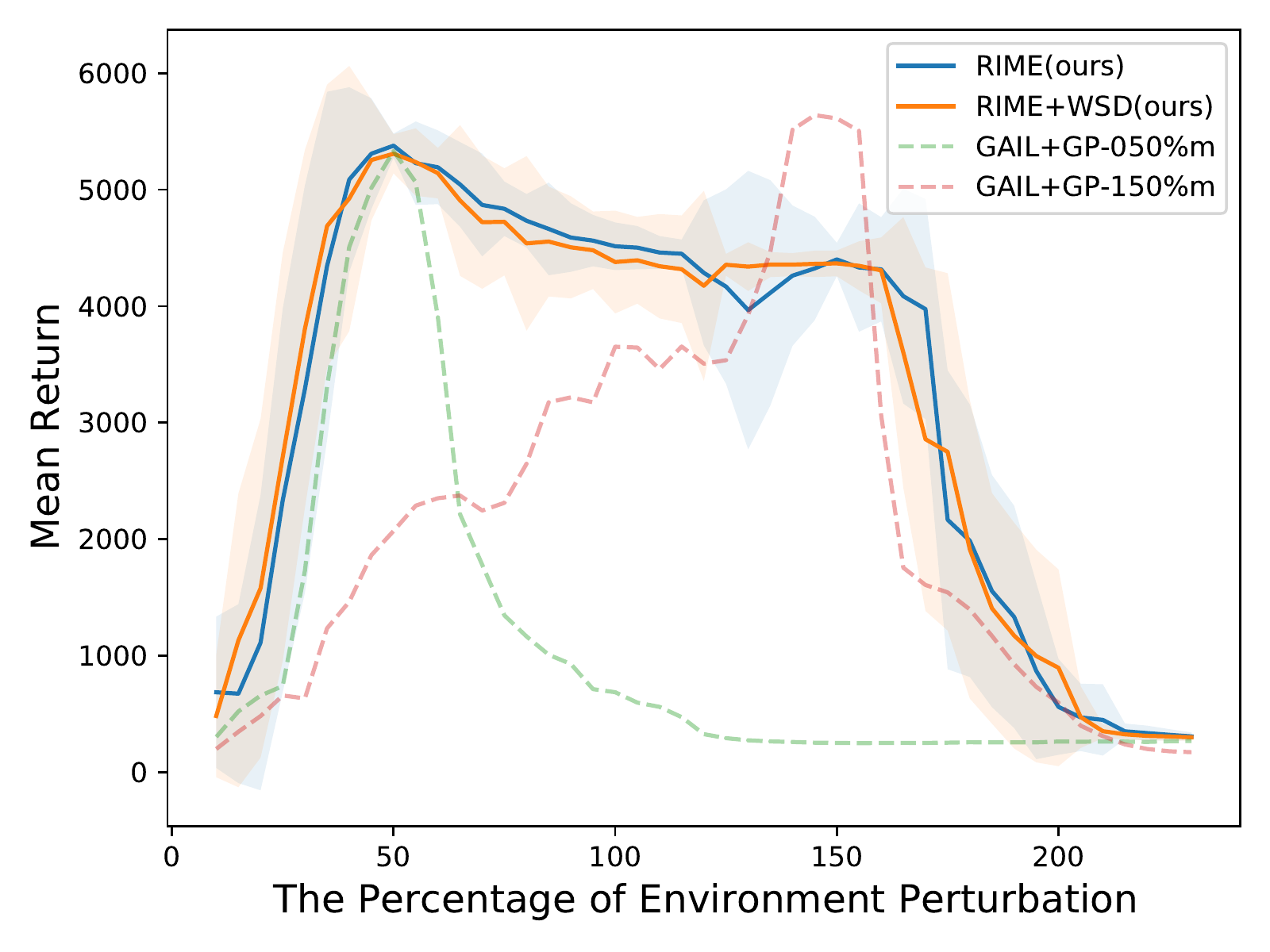}
        \captionsetup{justification=centering}
        \caption{Walker2d+Mass:\\performance}
    \end{subfigure}
    \begin{subfigure}[b]{0.24\textwidth}
        \centering
        \includegraphics[width=\textwidth]{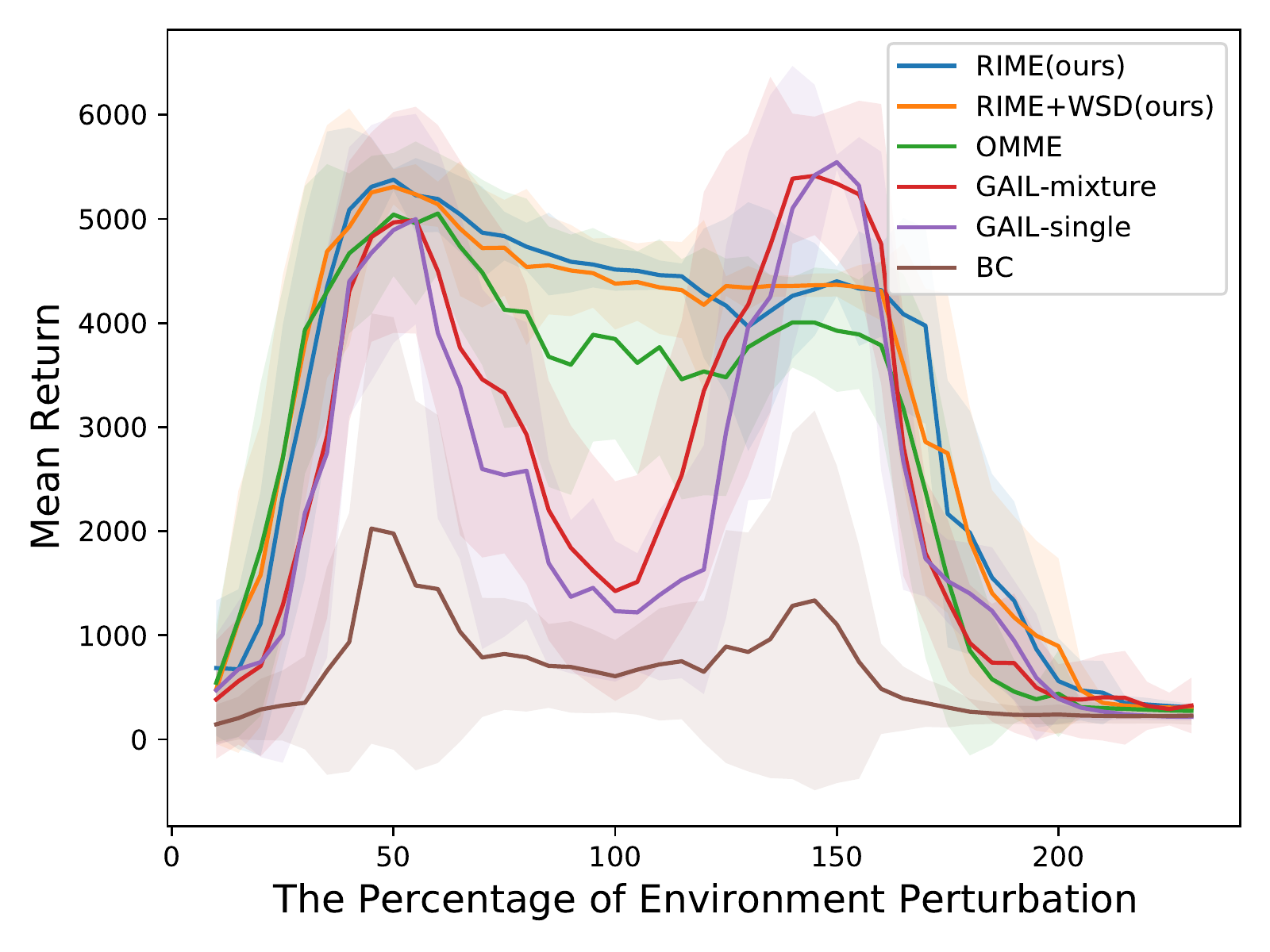}
        \captionsetup{justification=centering}
        \caption{Walker2d+Mass:\\comparisons}
    \end{subfigure}
    
    \begin{subfigure}[b]{0.24\textwidth}
        \centering
        \includegraphics[width=\textwidth]{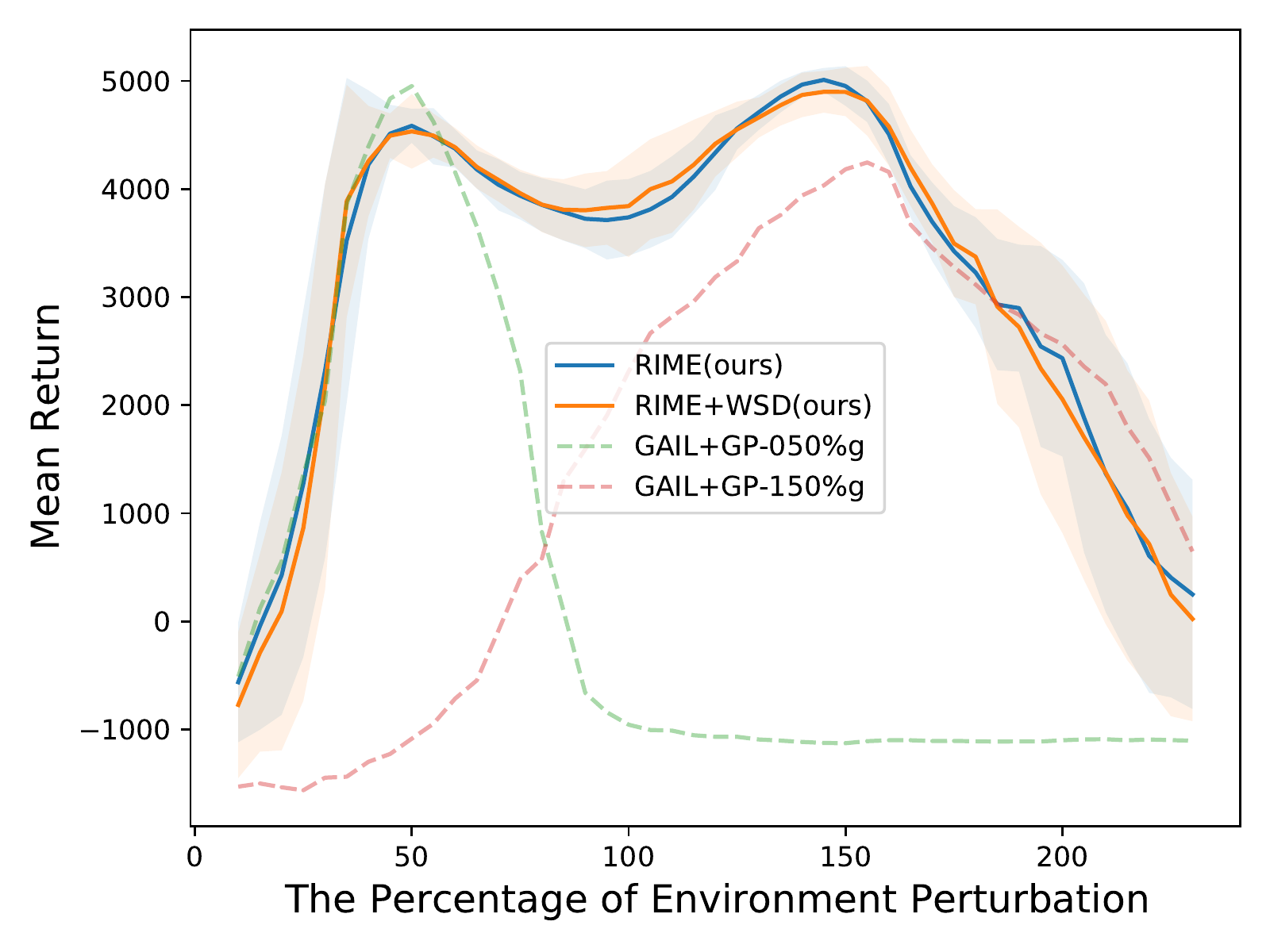}
        \captionsetup{justification=centering}
        \caption{HalfCheetah+Gravity:\\performance}
    \end{subfigure}
    \begin{subfigure}[b]{0.24\textwidth}
        \centering
        \includegraphics[width=\textwidth]{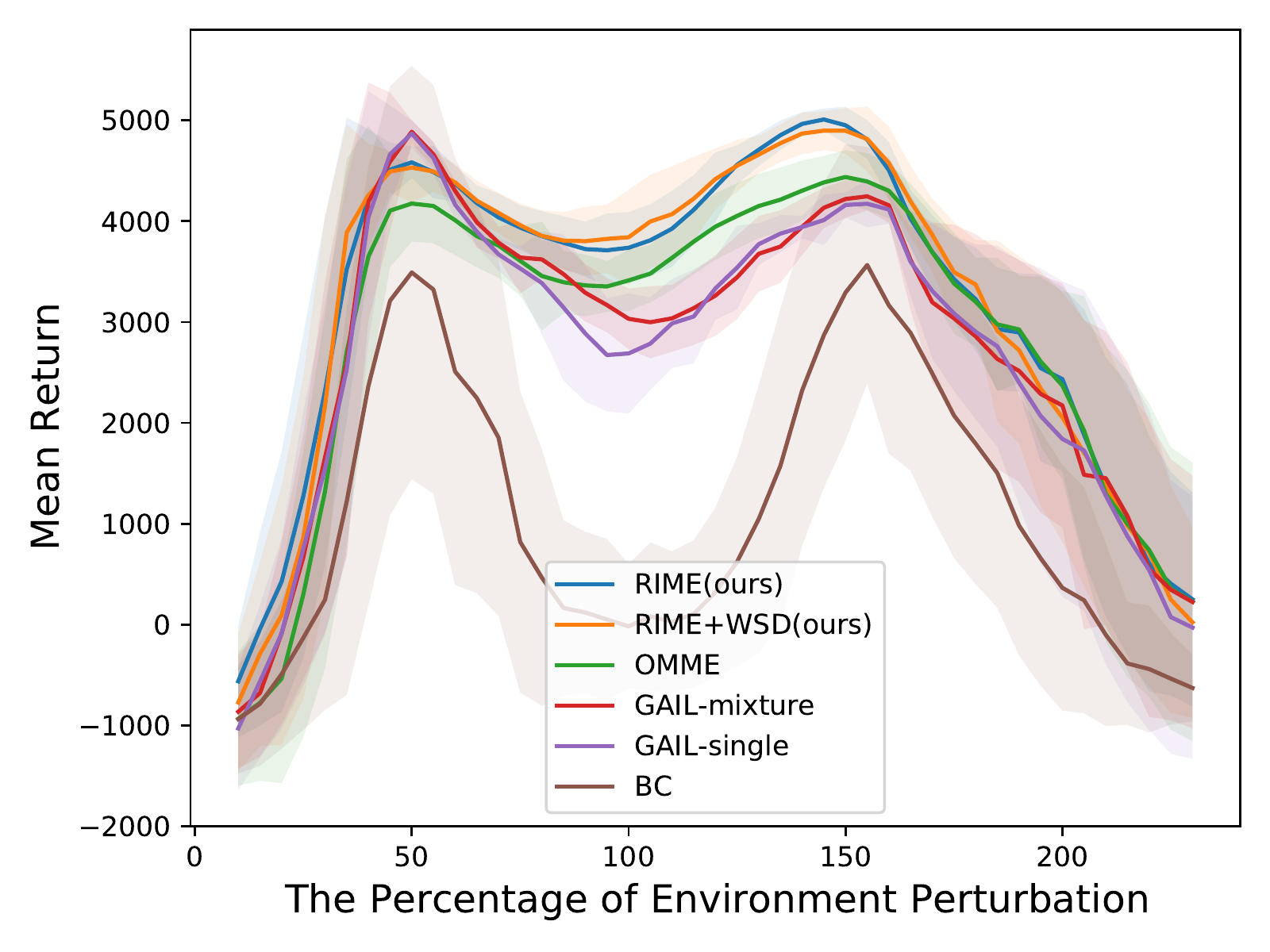}
        \captionsetup{justification=centering}
        \caption{HalfCheetah+Gravity:\\comparisons}
    \end{subfigure}
    \begin{subfigure}[b]{0.24\textwidth}
        \centering
        \includegraphics[width=\textwidth]{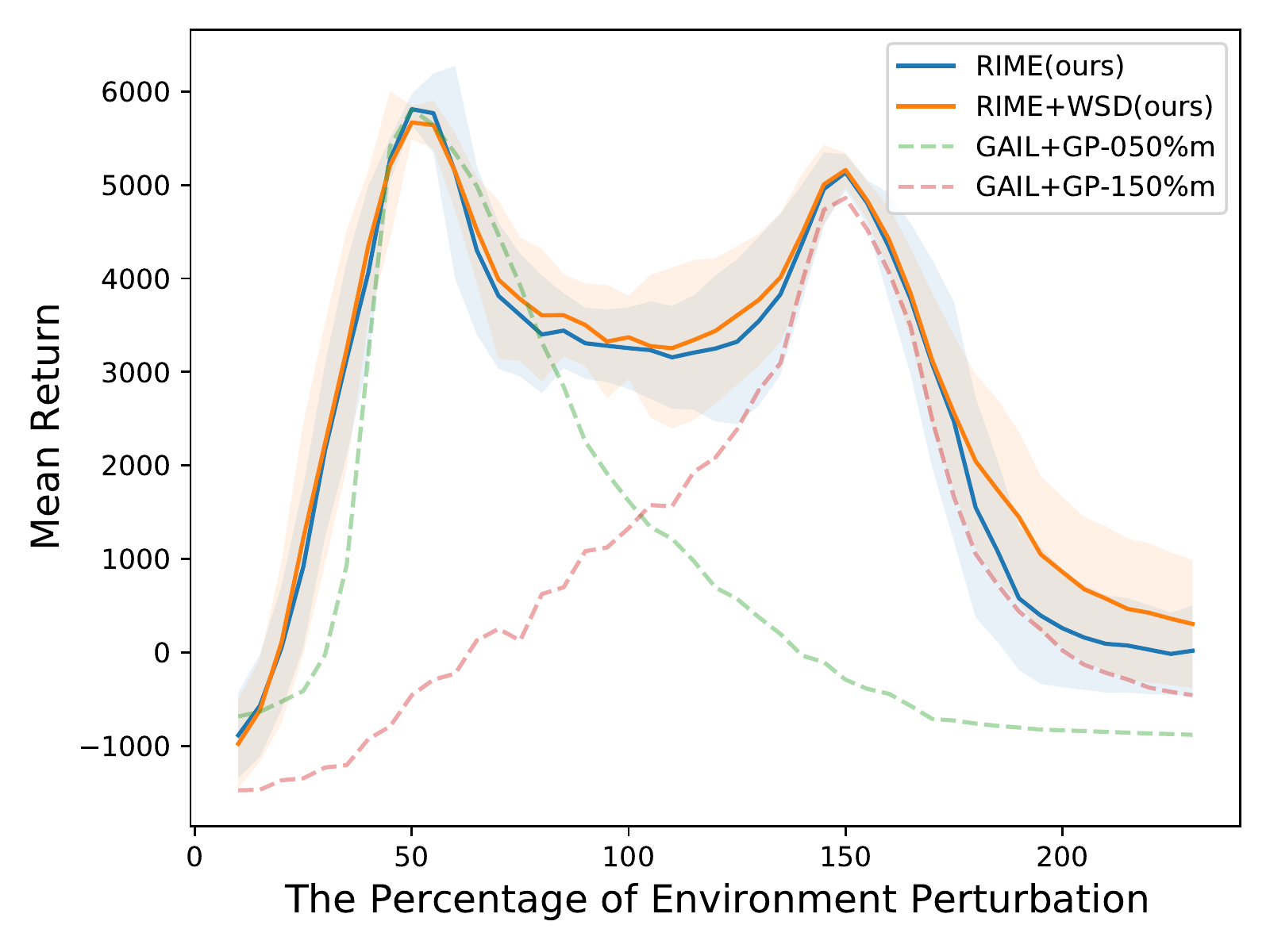}
        \captionsetup{justification=centering}
        \caption{HalfCheetah+Mass:\\performance}
    \end{subfigure}
    \begin{subfigure}[b]{0.24\textwidth}
        \centering
        \includegraphics[width=\textwidth]{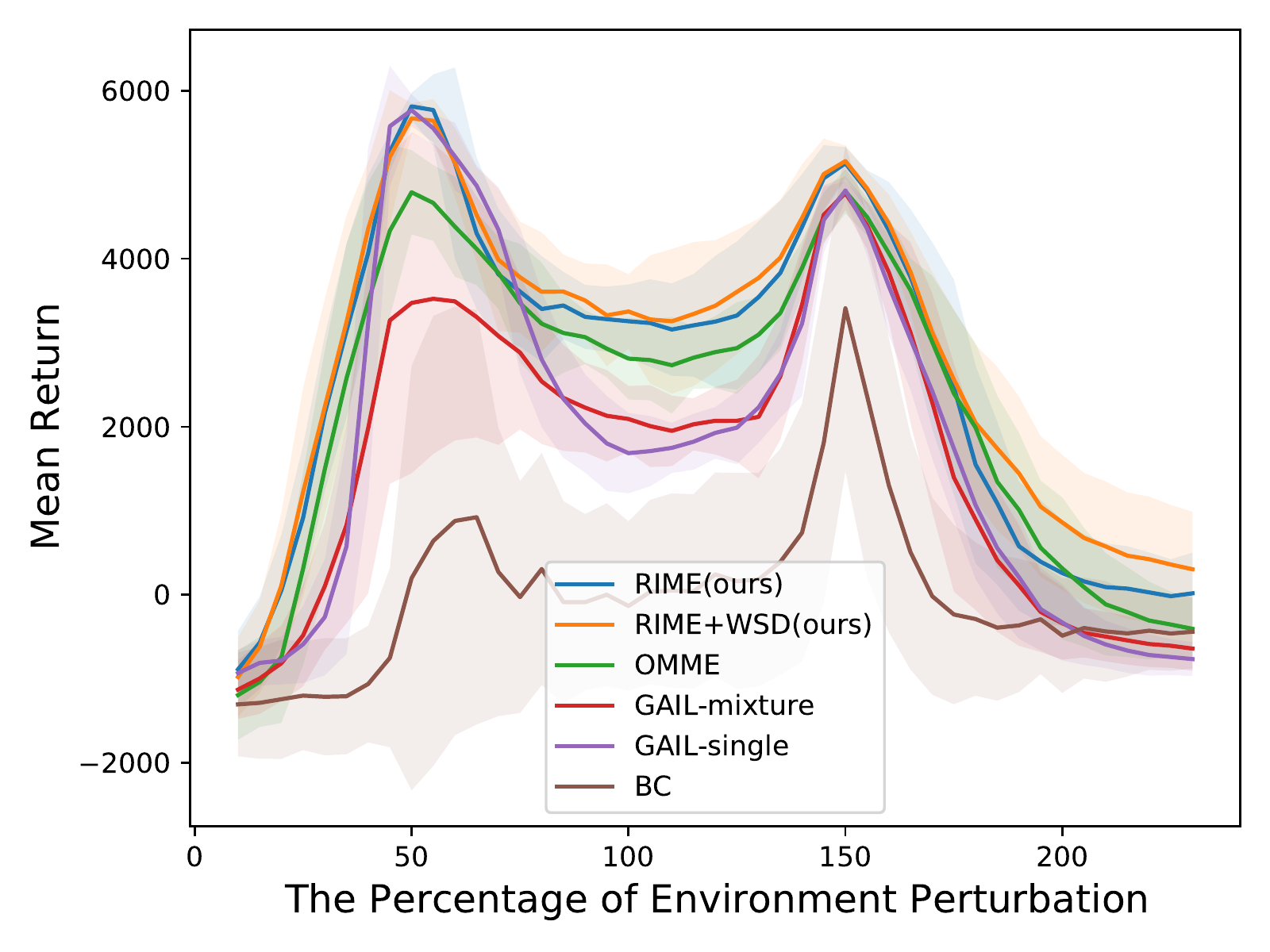}
        \captionsetup{justification=centering}
        \caption{HalfCheetah+Mass:\\comparisons}
    \end{subfigure}

    \begin{subfigure}[b]{0.24\textwidth}
        \centering
        \includegraphics[width=\textwidth]{RIME_figures/2env_RIME_ag_perf.pdf}
        \captionsetup{justification=centering}
        \caption{Ant+Gravity:\\performance}
    \end{subfigure}
    \begin{subfigure}[b]{0.24\textwidth}
        \centering
        \includegraphics[width=\textwidth]{RIME_figures/2env_compare_ag_compare.pdf}
        \captionsetup{justification=centering}
        \caption{Ant+Gravity:\\comparisons}
    \end{subfigure}
    \begin{subfigure}[b]{0.24\textwidth}
        \centering
        \includegraphics[width=\textwidth]{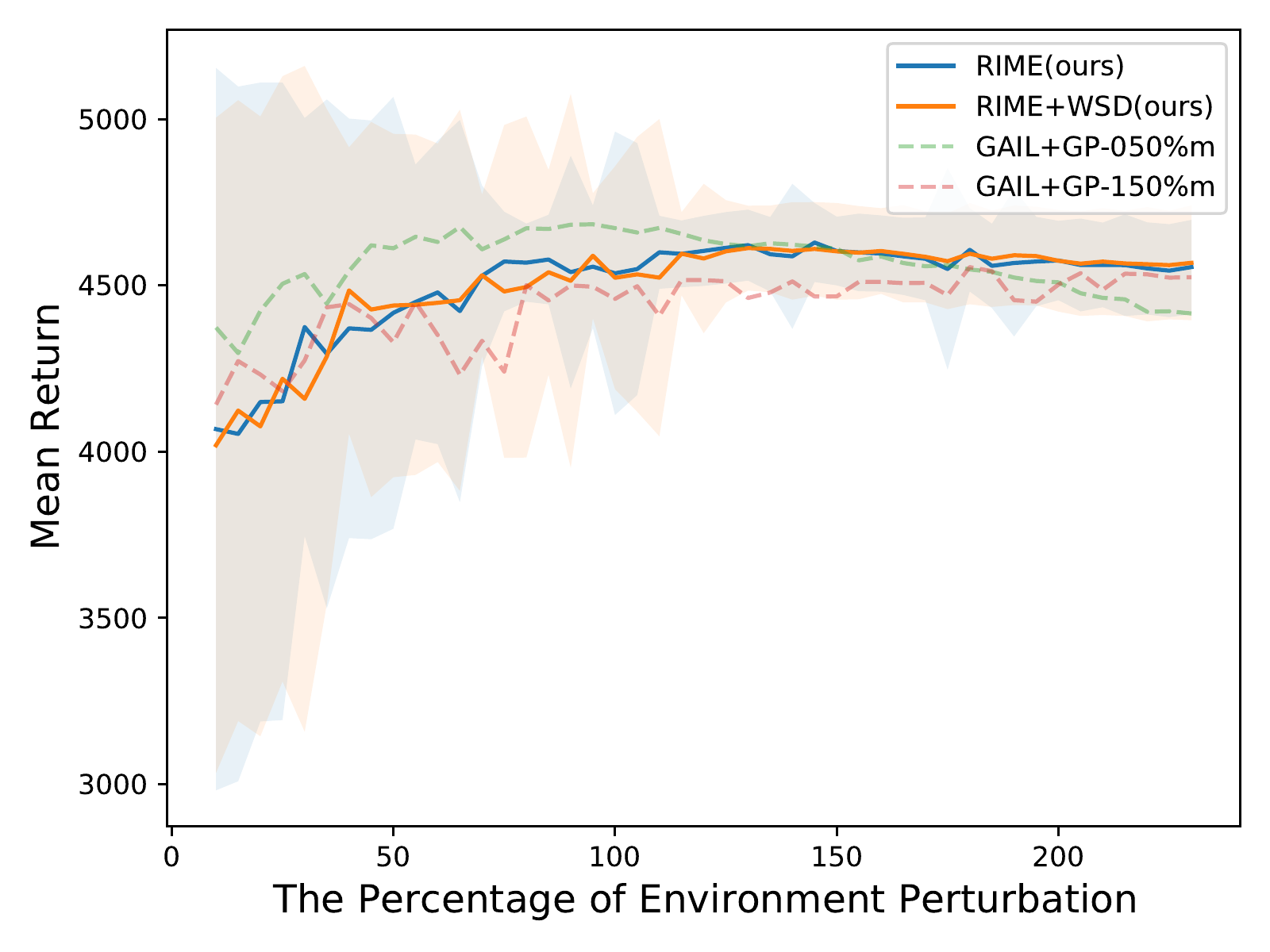}
        \captionsetup{justification=centering}
        \caption{Ant+Mass:\\performance}
    \end{subfigure}
    \begin{subfigure}[b]{0.24\textwidth}
        \centering
        \includegraphics[width=\textwidth]{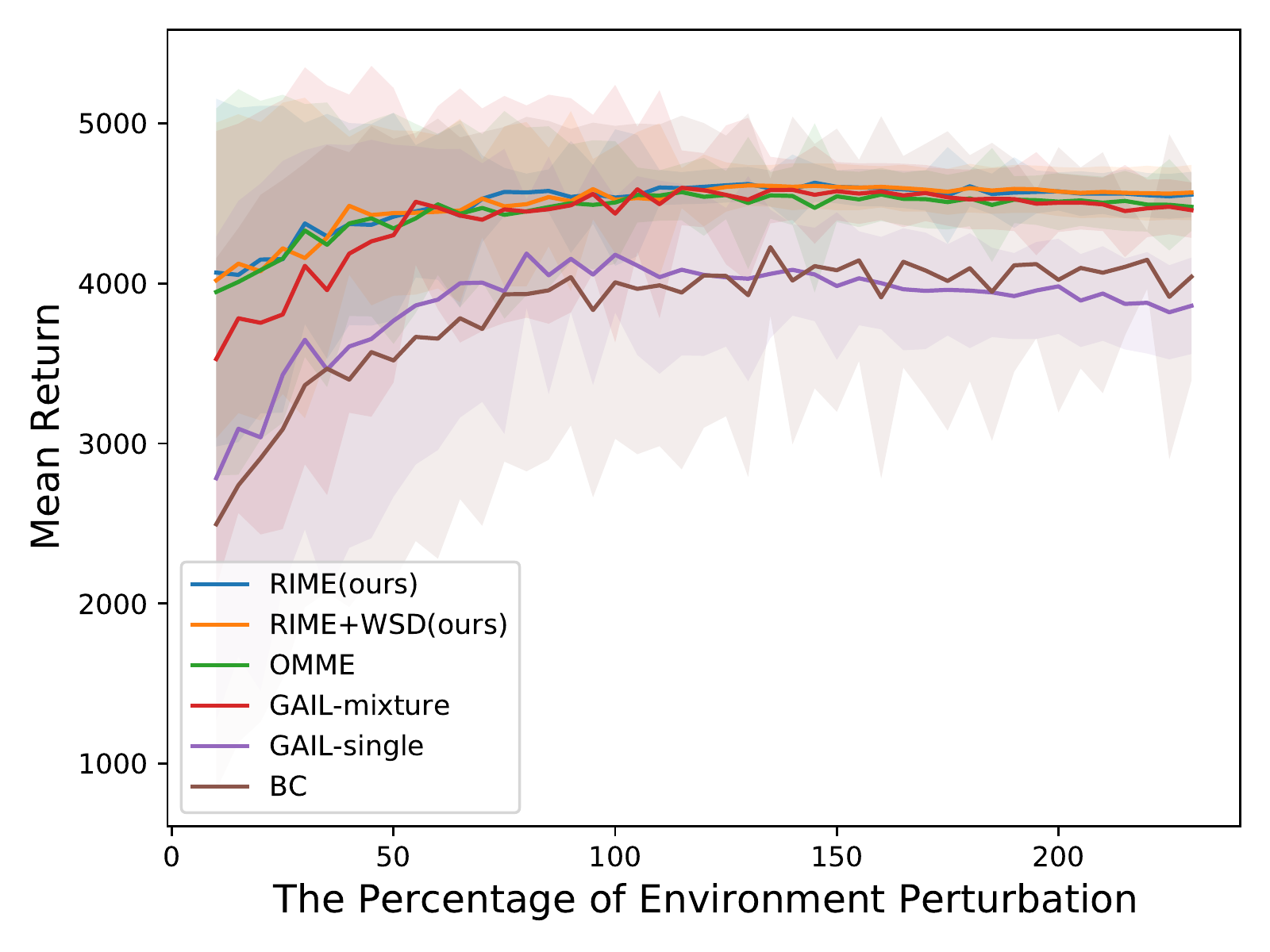}
        \captionsetup{justification=centering}
        \caption{Ant+Mass:\\comparisons}
    \end{subfigure}
    \caption{All experimental results in the $N=2$ sampled environment setting ($50\%\zeta_0, ~150\%\zeta_0$).\label{appendix:figures:2learningenvironments}}
\end{figure}

\newpage

\subsection{Results in the $N=3$ Sampled Environment Setting ($050\%\zeta_0,~100\%\zeta_0,~150\%\zeta_0$)}
\label{Appendix:results_in_3_learning_environmnets}

Here we provide all result plots in the 3 sampled environment setting for our algorithm and the baseline algorithms.

\begin{figure}[ht]
    \begin{subfigure}[b]{0.24\textwidth}
        \centering
        \includegraphics[width=\textwidth]{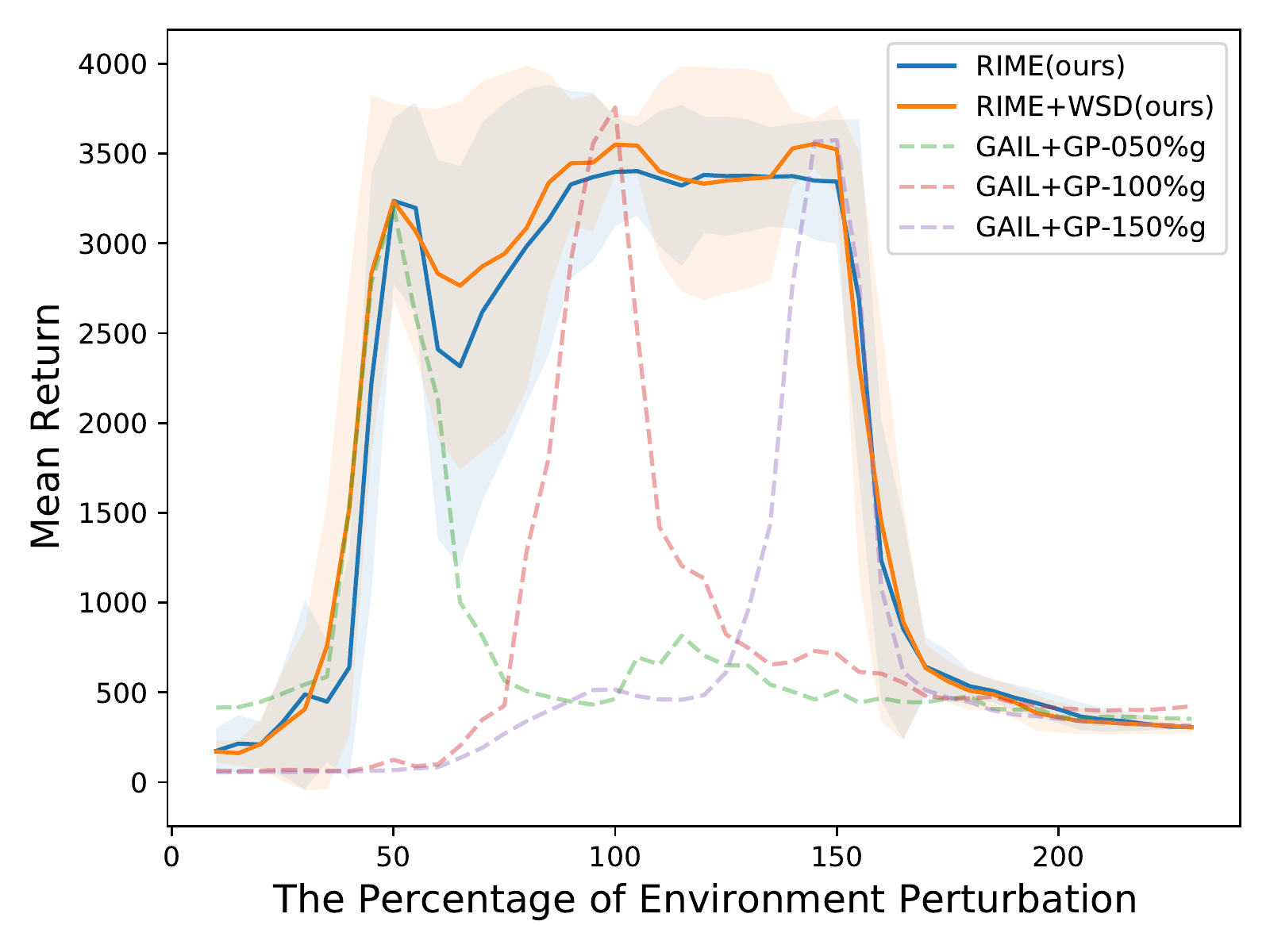}
        \captionsetup{justification=centering}
        \caption{Hopper+Gravity:\\performance}
    \end{subfigure}
    \begin{subfigure}[b]{0.24\textwidth}
        \centering
        \includegraphics[width=\textwidth]{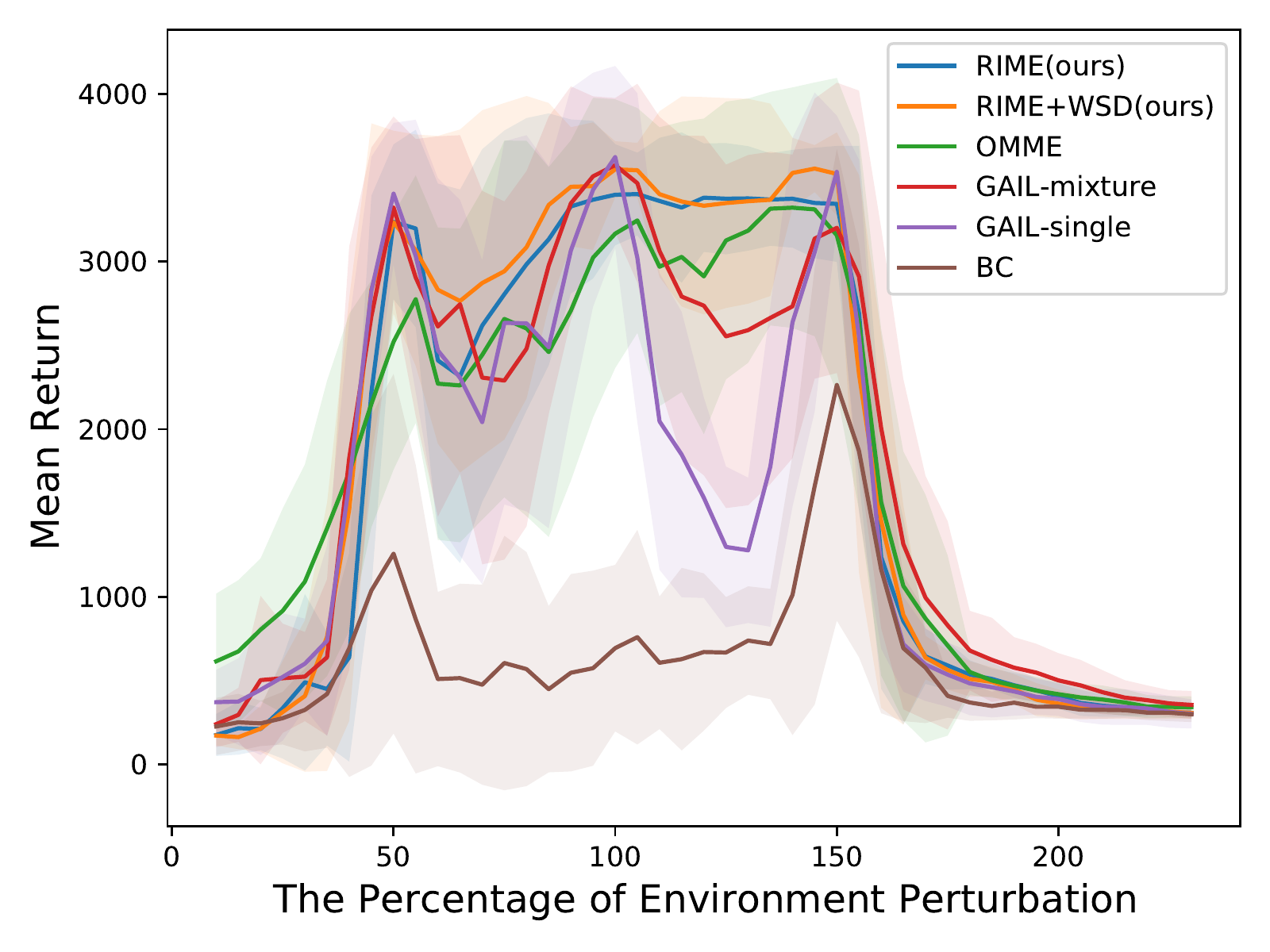}
        \captionsetup{justification=centering}
        \caption{Hopper+Gravity:\\comparisons}
    \end{subfigure}
    \begin{subfigure}[b]{0.24\textwidth}
        \centering
        \includegraphics[width=\textwidth]{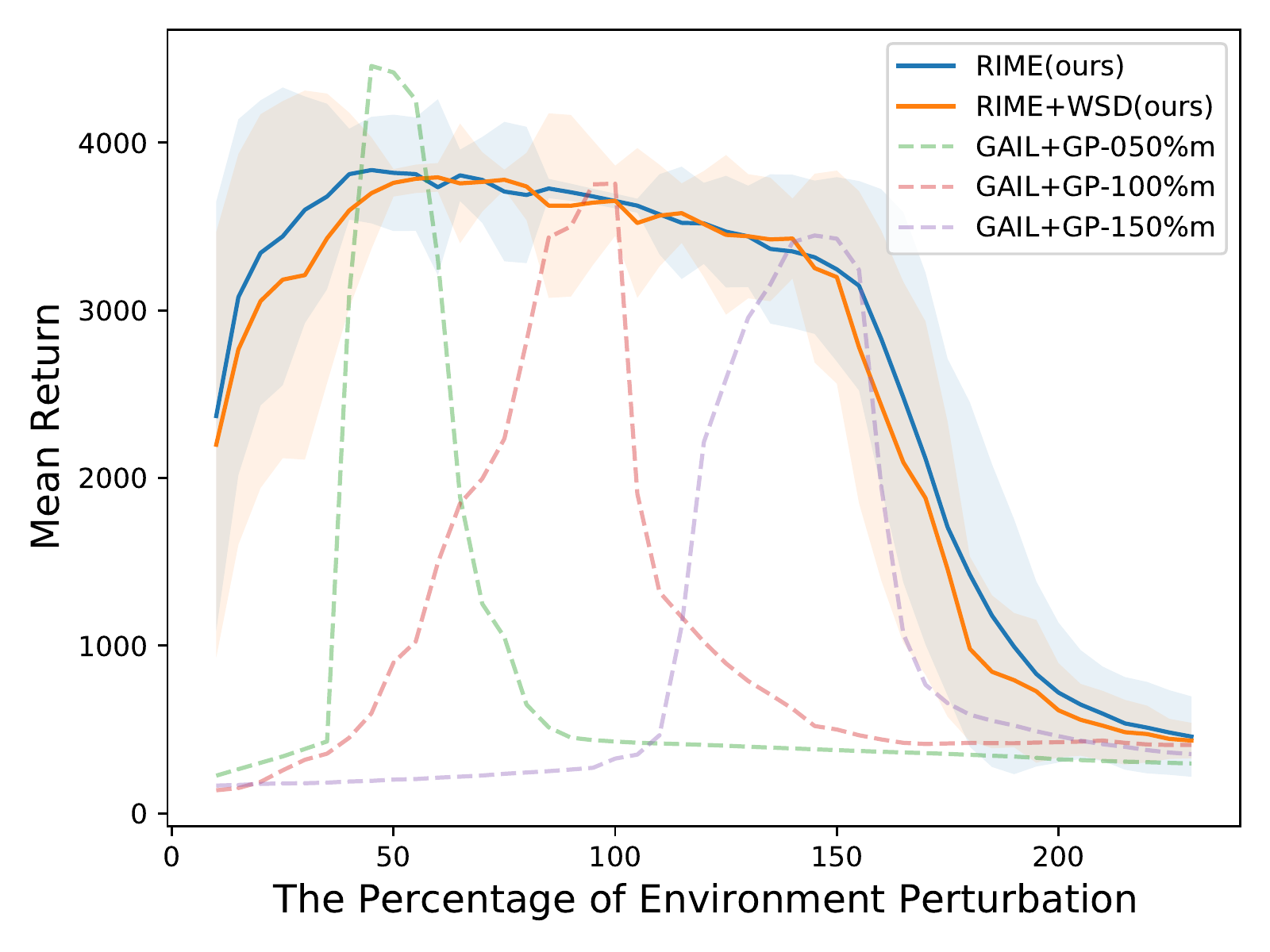}
        \captionsetup{justification=centering}
        \caption{Hopper+Mass:\\performance}
    \end{subfigure}
    \begin{subfigure}[b]{0.24\textwidth}
        \centering
        \includegraphics[width=\textwidth]{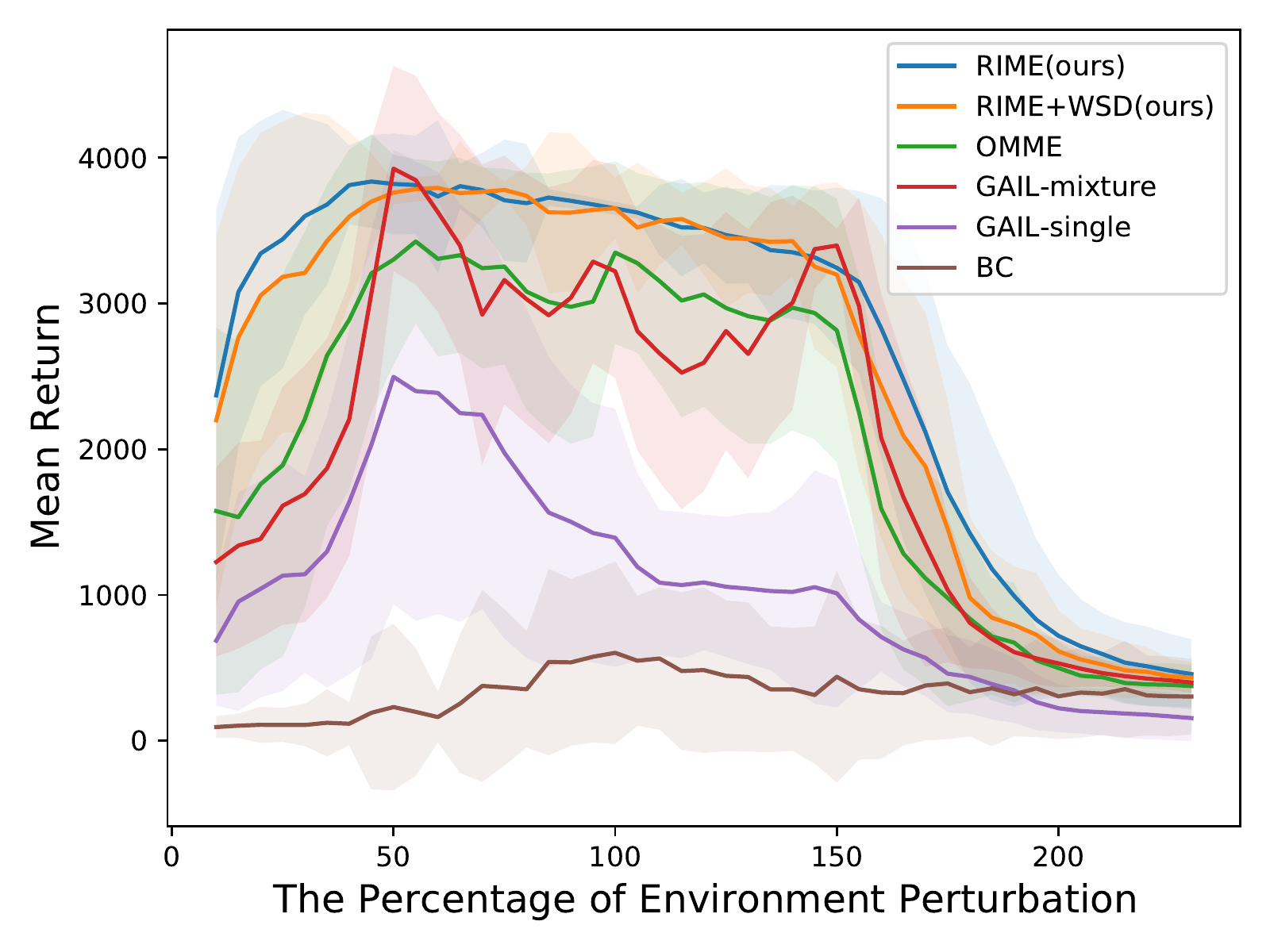}
        \captionsetup{justification=centering}
        \caption{Hopper+Mass:\\comparisons}
    \end{subfigure}
    
    \begin{subfigure}[b]{0.24\textwidth}
        \centering
        \includegraphics[width=\textwidth]{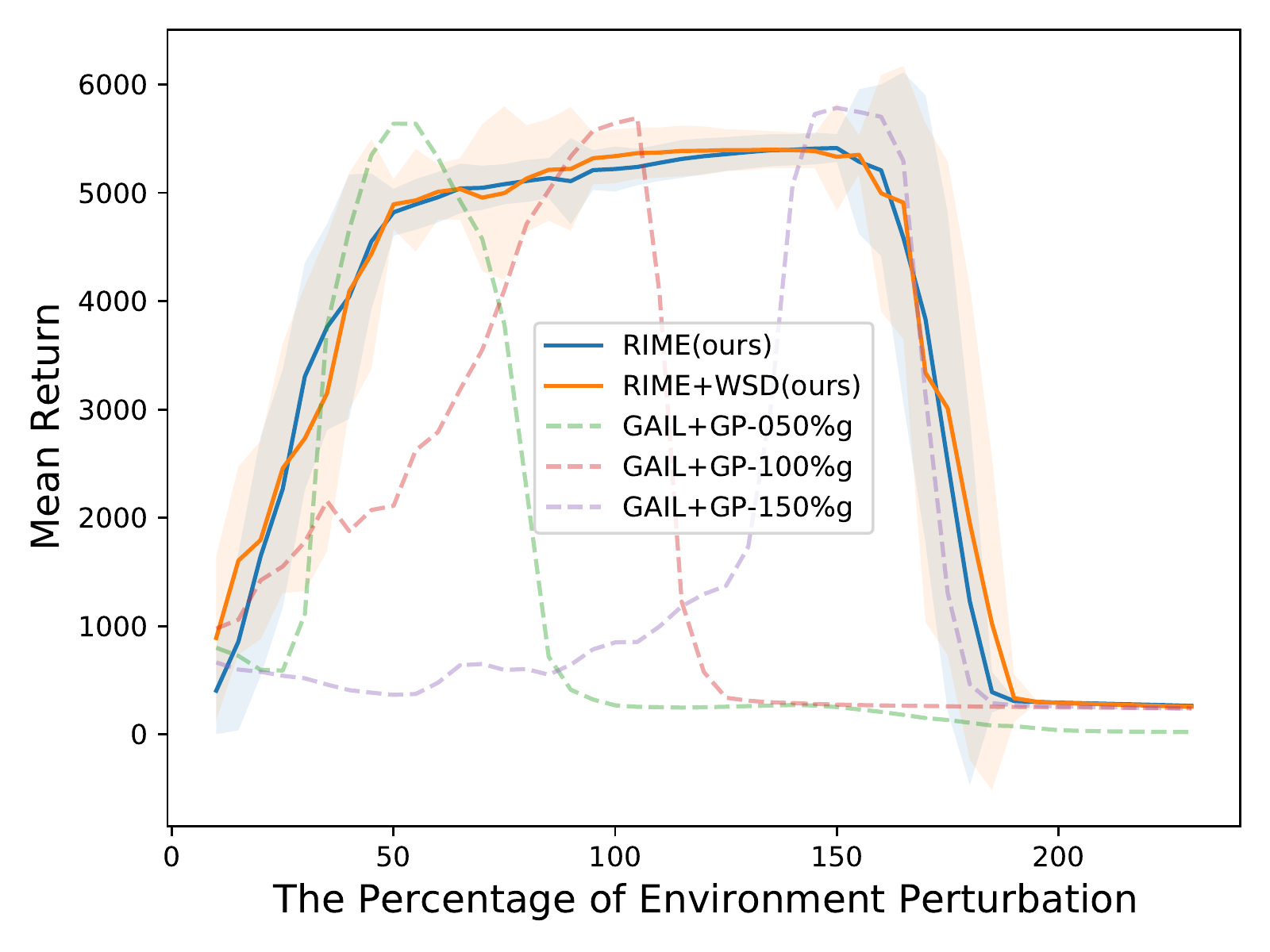}
        \captionsetup{justification=centering}
        \caption{Walker2d+Gravity:\\performance}
    \end{subfigure}
    \begin{subfigure}[b]{0.24\textwidth}
        \centering
        \includegraphics[width=\textwidth]{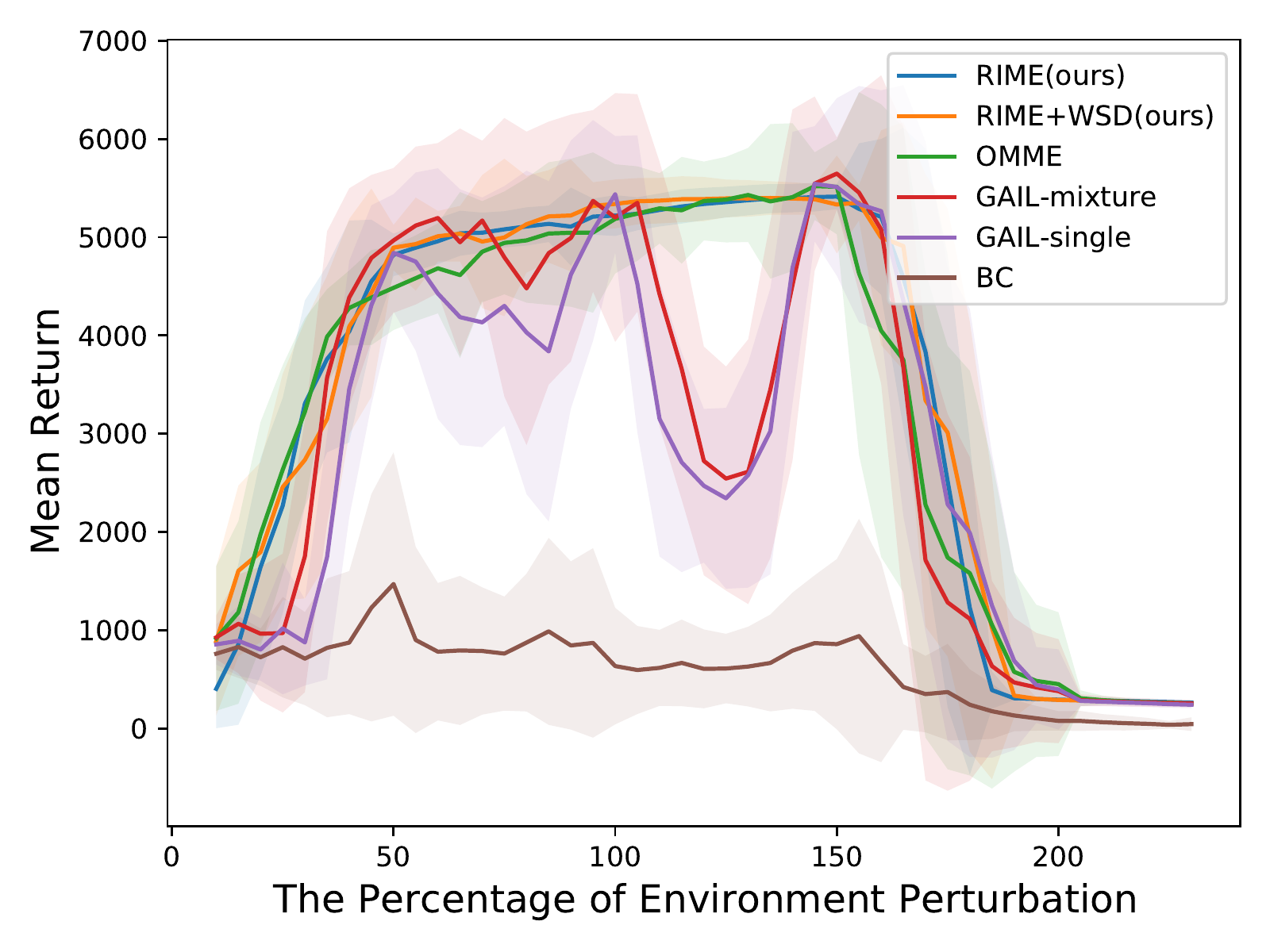}
        \captionsetup{justification=centering}
        \caption{Walker2d+Gravity:\\comparisons}
    \end{subfigure}
    \begin{subfigure}[b]{0.24\textwidth}
        \centering
        \includegraphics[width=\textwidth]{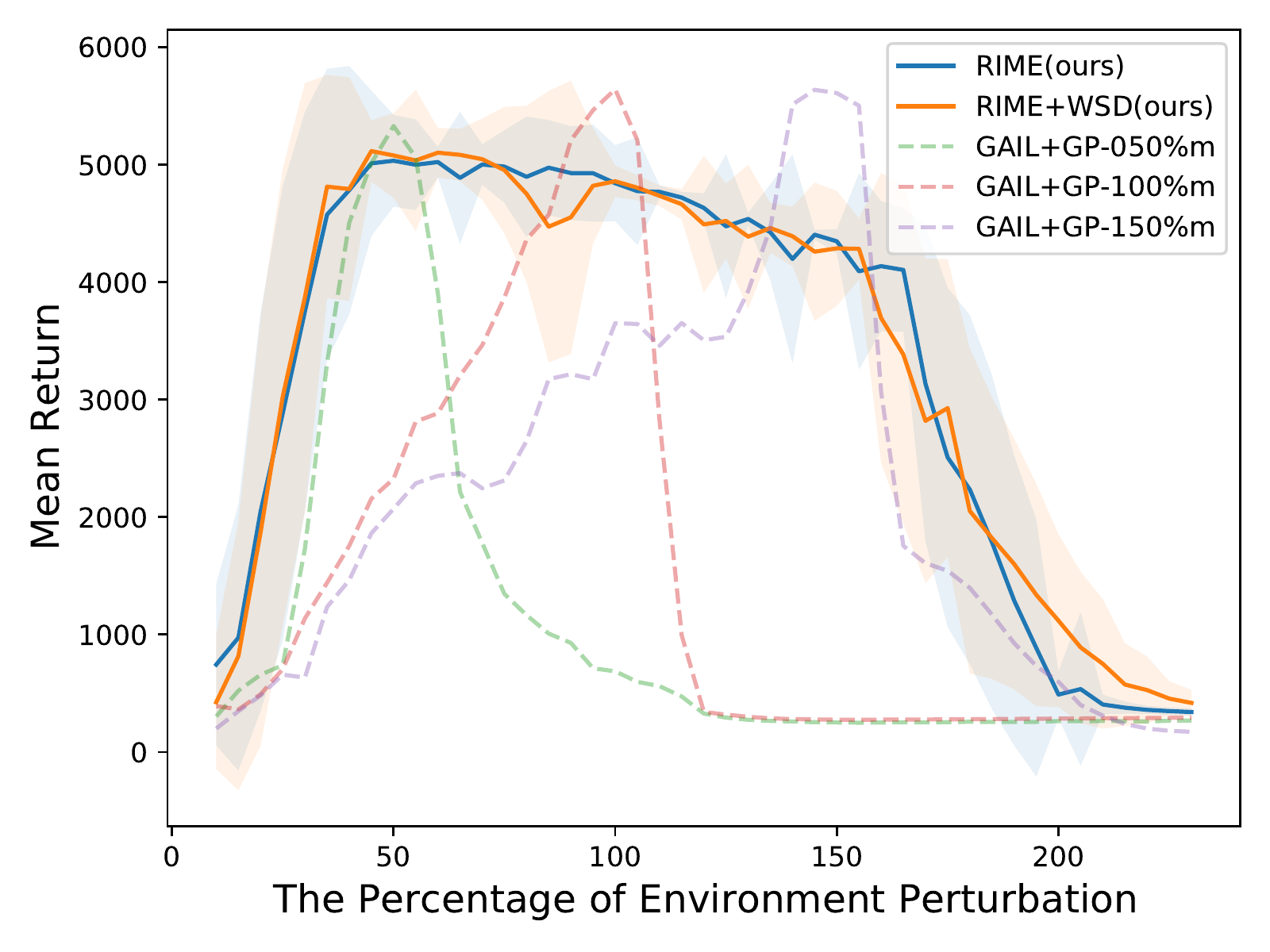}
        \captionsetup{justification=centering}
        \caption{Walker2d+Mass:\\performance}
    \end{subfigure}
    \begin{subfigure}[b]{0.24\textwidth}
        \centering
        \includegraphics[width=\textwidth]{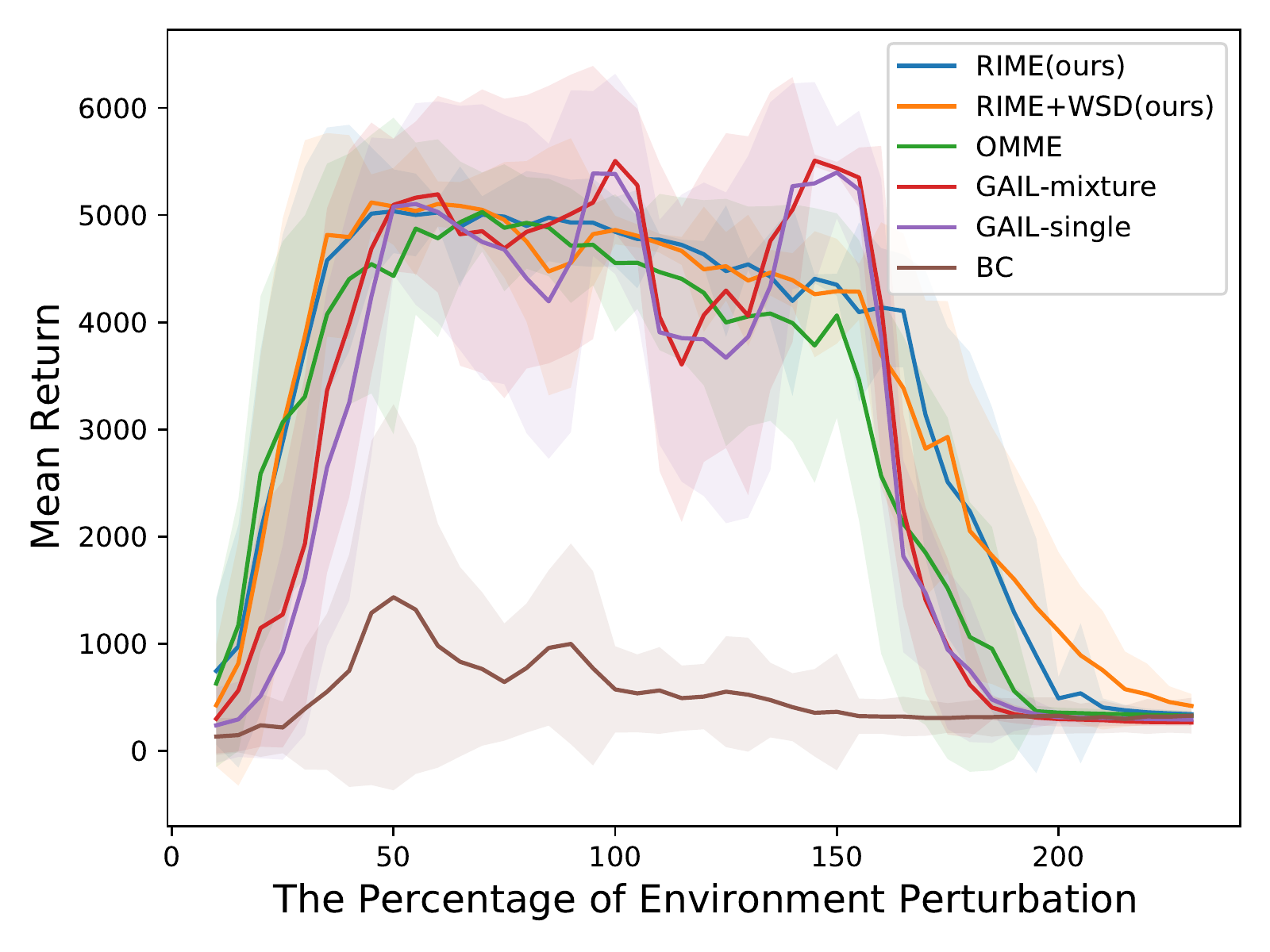}
        \captionsetup{justification=centering}
        \caption{Walker2d+Mass:\\comparisons}
    \end{subfigure}
    
    \begin{subfigure}[b]{0.24\textwidth}
        \centering
        \includegraphics[width=\textwidth]{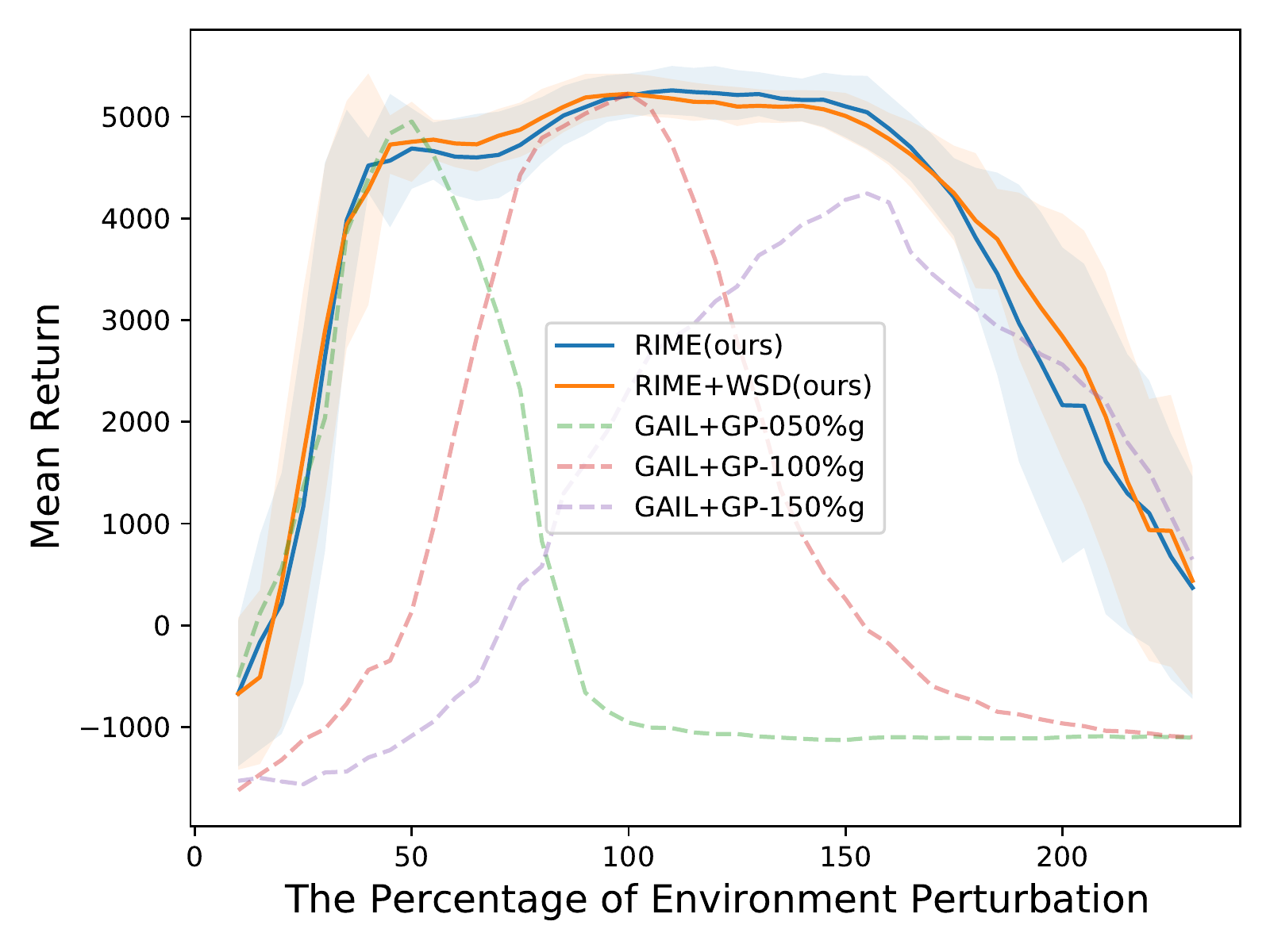}
        \captionsetup{justification=centering}
        \caption{HalfCheetah+Gravity:\\performance}
    \end{subfigure}
    \begin{subfigure}[b]{0.24\textwidth}
        \centering
        \includegraphics[width=\textwidth]{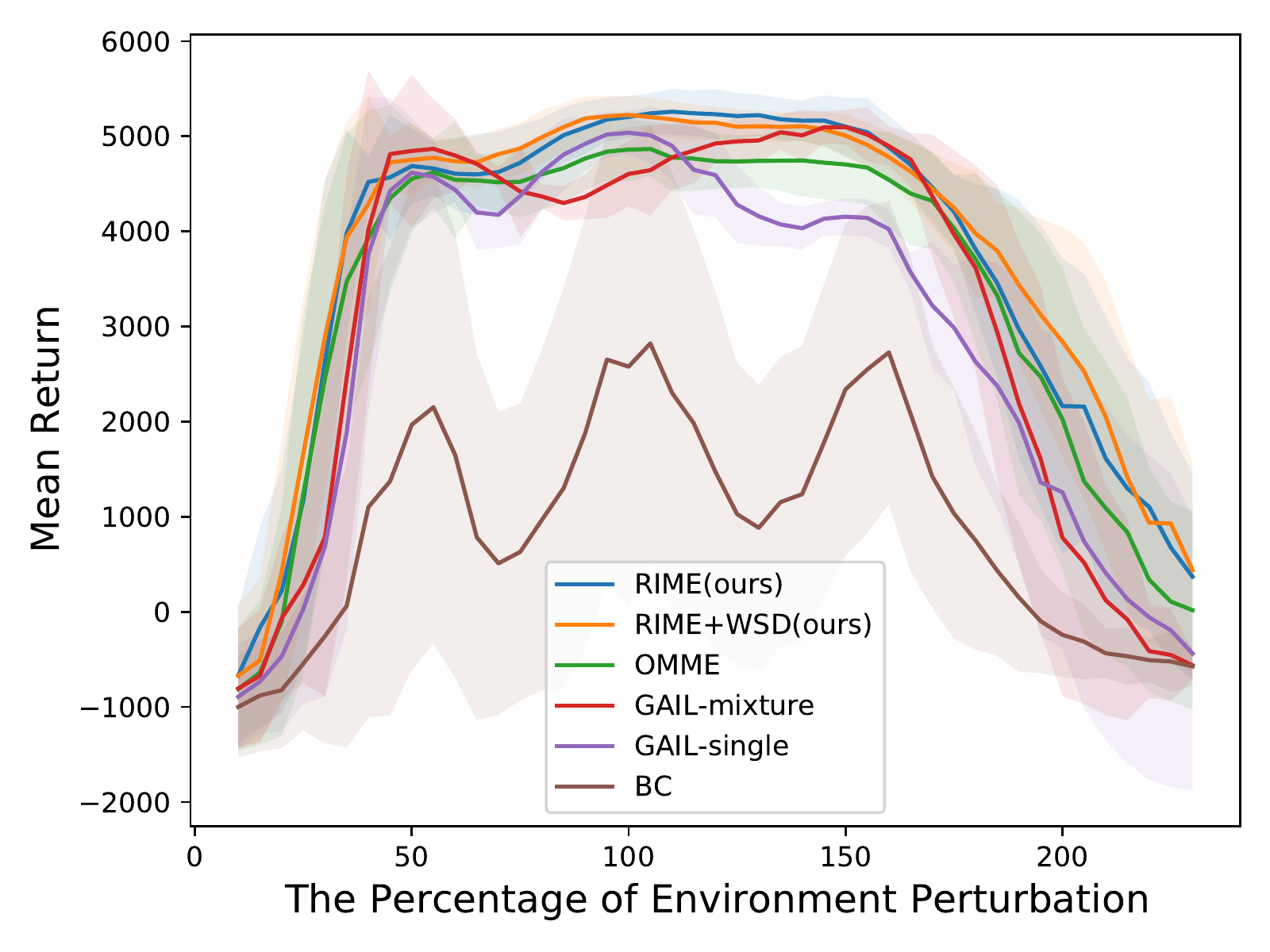}
        \captionsetup{justification=centering}
        \caption{HalfCheetah+Gravity:\\comparisons}
    \end{subfigure}
    \begin{subfigure}[b]{0.24\textwidth}
        \centering
        \includegraphics[width=\textwidth]{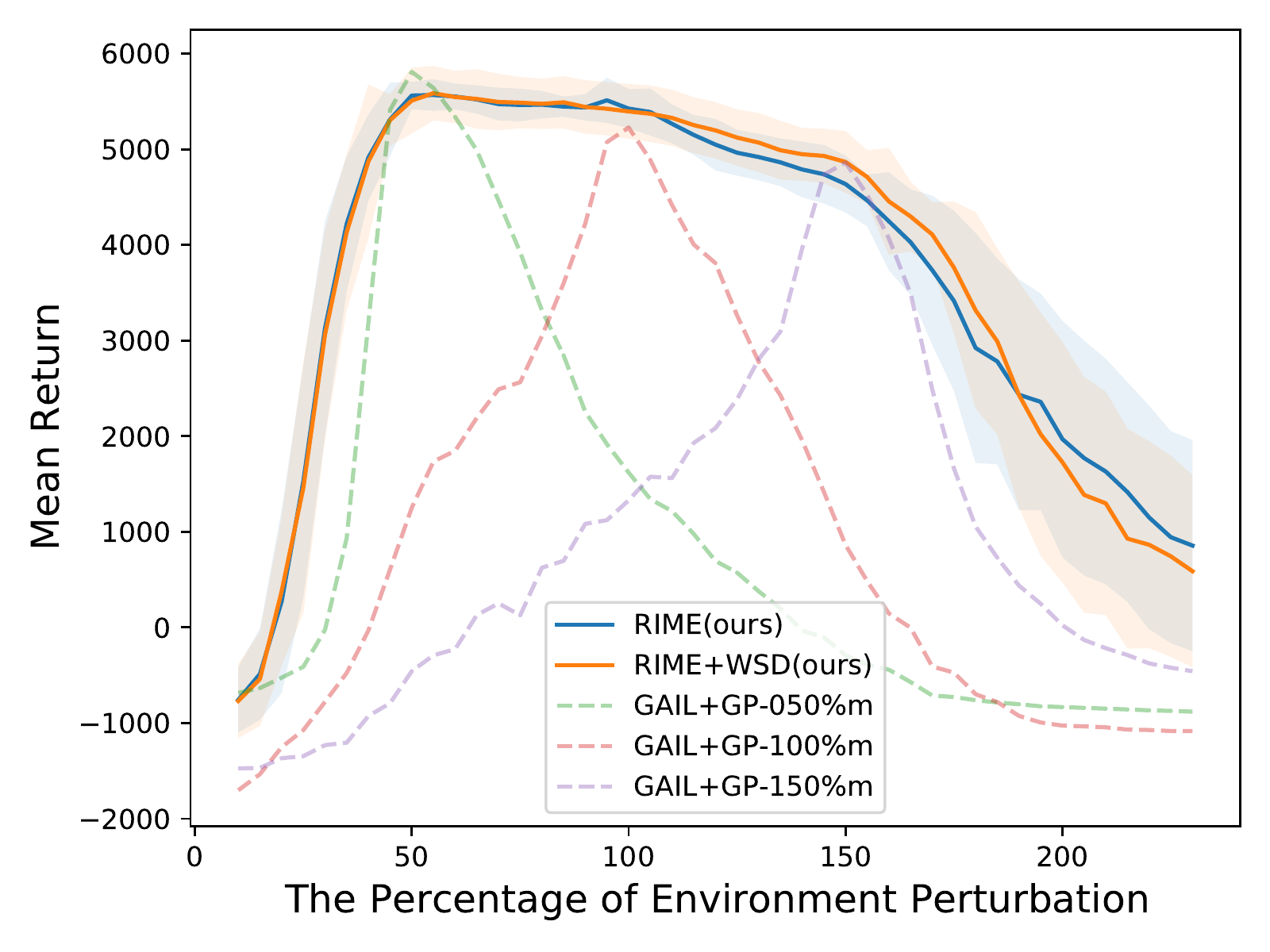}
        \captionsetup{justification=centering}
        \caption{HalfCheetah+Mass:\\performance}
    \end{subfigure}
    \begin{subfigure}[b]{0.24\textwidth}
        \centering
        \includegraphics[width=\textwidth]{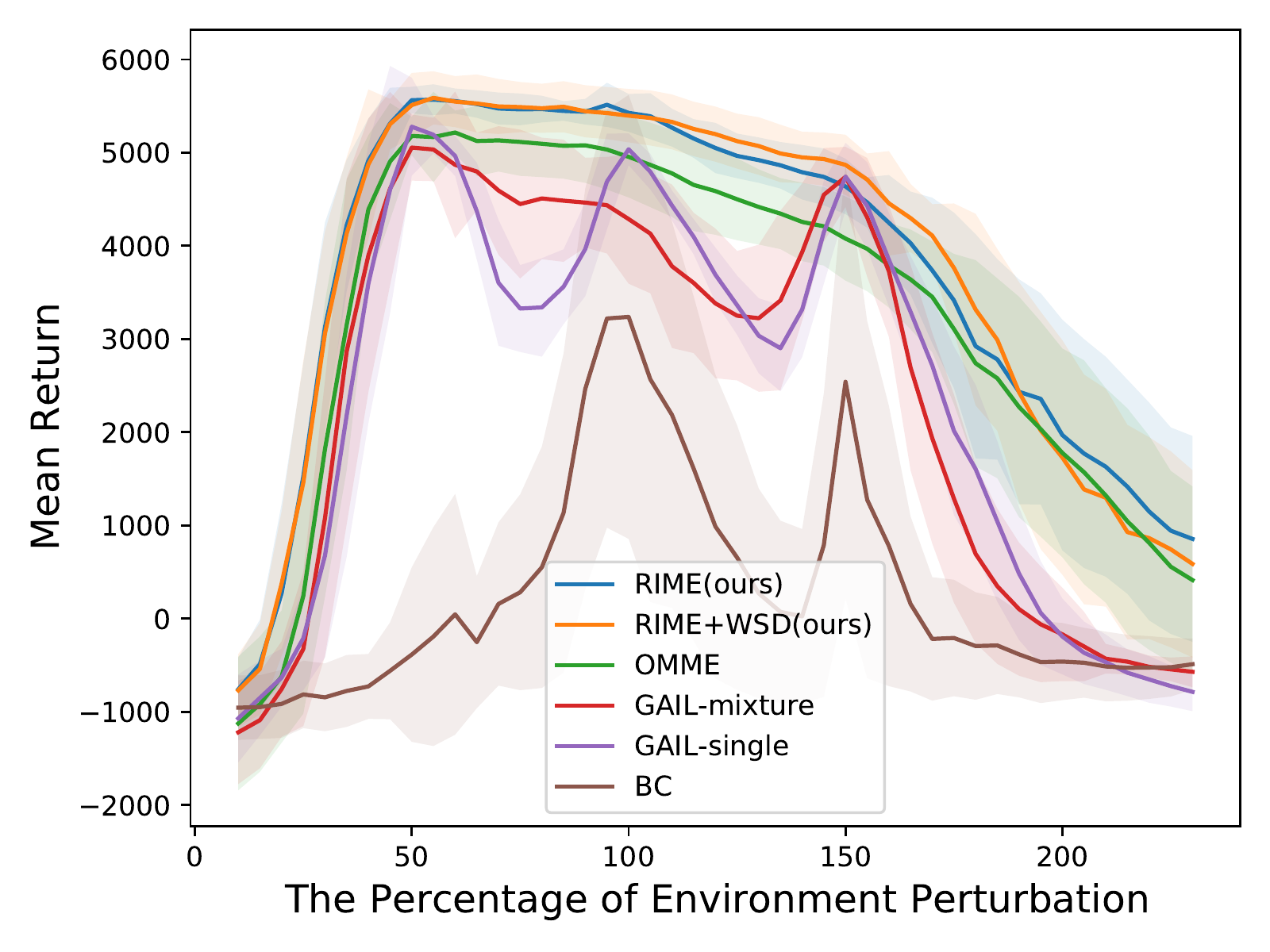}
        \captionsetup{justification=centering}
        \caption{HalfCheetah+Mass:\\comparisons}
    \end{subfigure}
    
    \begin{subfigure}[b]{0.24\textwidth}
        \centering
        \includegraphics[width=\textwidth]{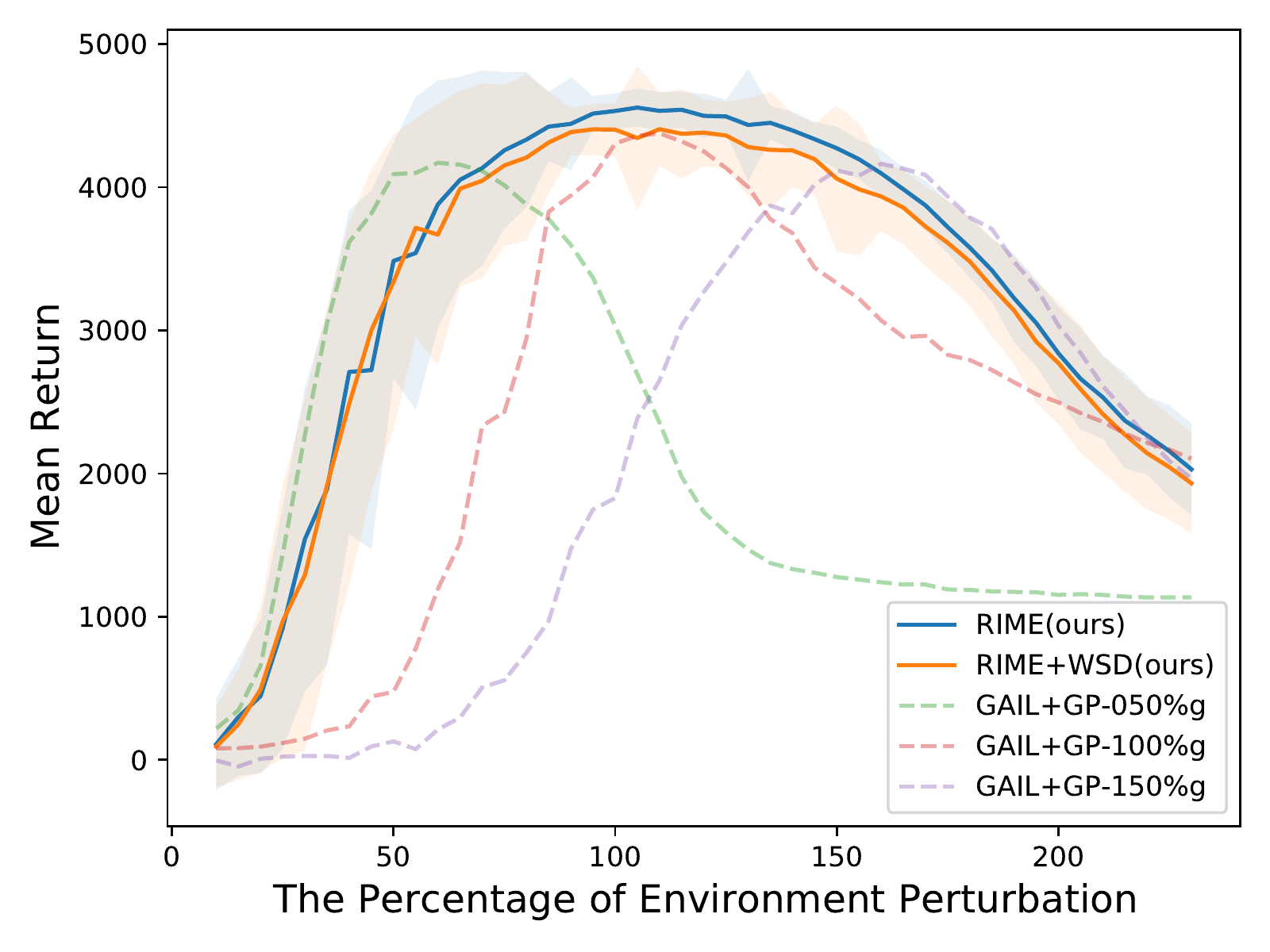}
        \captionsetup{justification=centering}
        \caption{Ant+Gravity:\\performance}
    \end{subfigure}
    \begin{subfigure}[b]{0.24\textwidth}
        \centering
        \includegraphics[width=\textwidth]{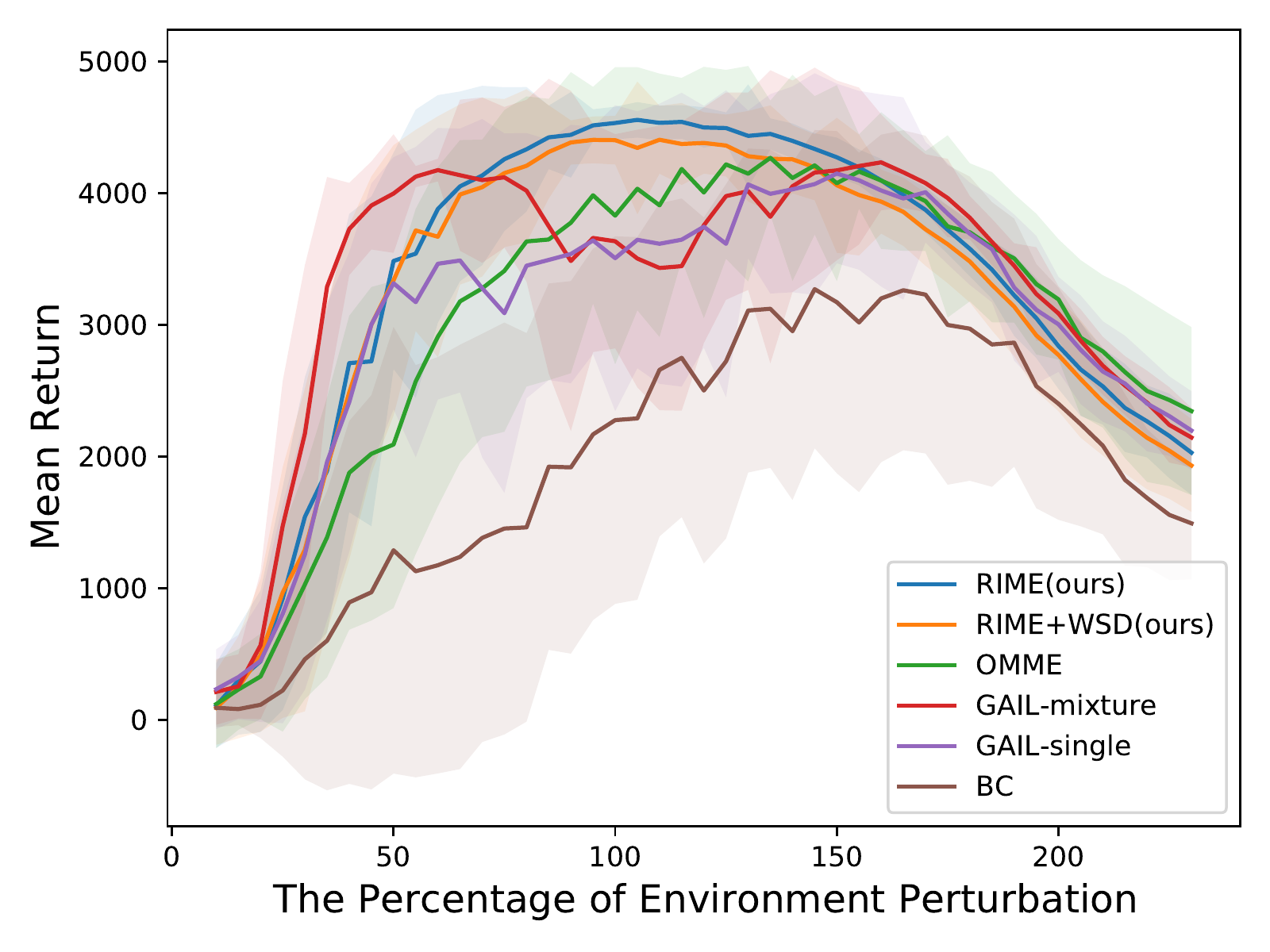}
        \captionsetup{justification=centering}
        \caption{Ant+Gravity:\\comparisons}
    \end{subfigure}
    \begin{subfigure}[b]{0.24\textwidth}
        \centering
        \includegraphics[width=\textwidth]{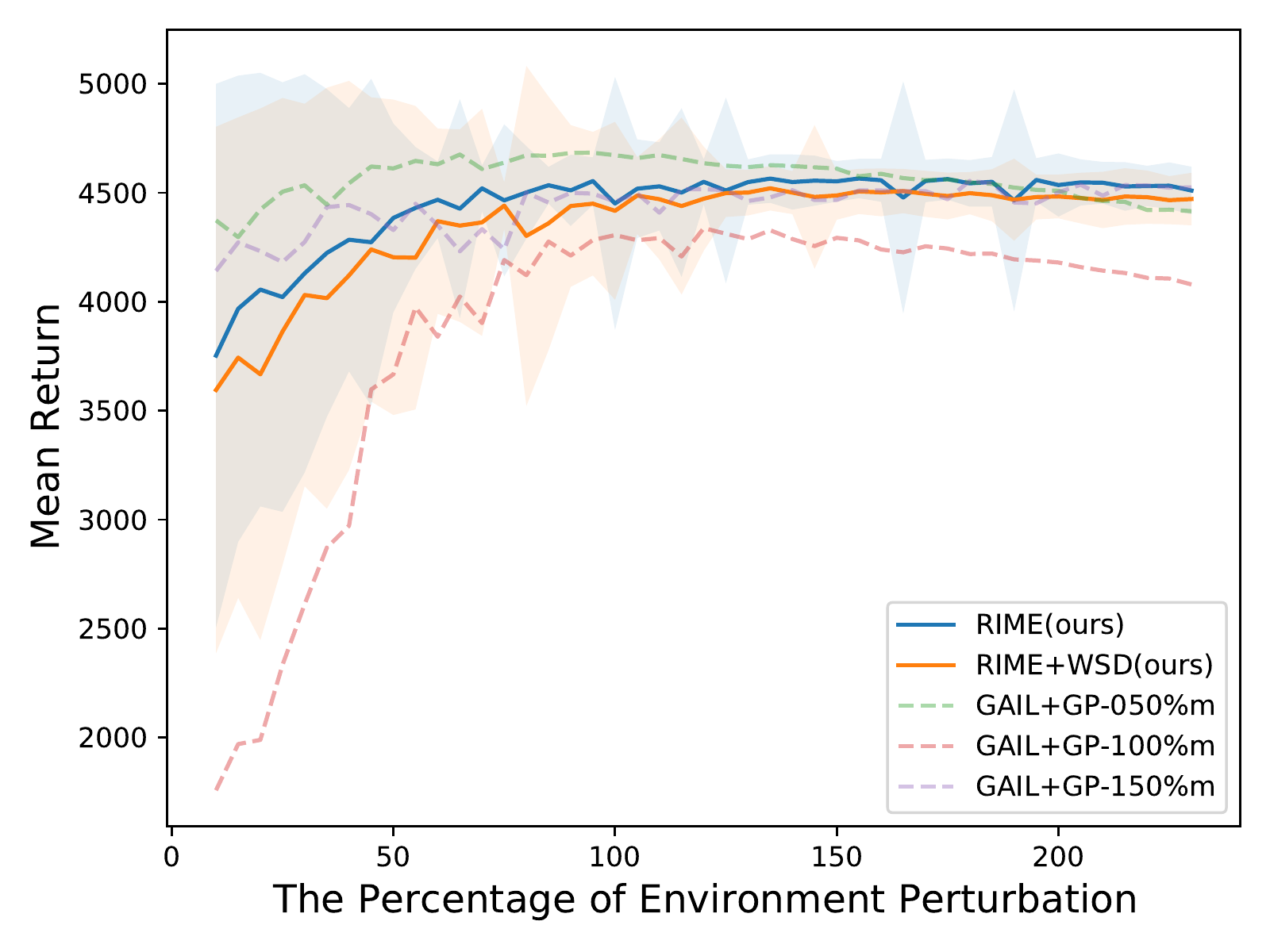}
        \captionsetup{justification=centering}
        \caption{Ant+Mass:\\performance}
    \end{subfigure}
    \begin{subfigure}[b]{0.24\textwidth}
        \centering
        \includegraphics[width=\textwidth]{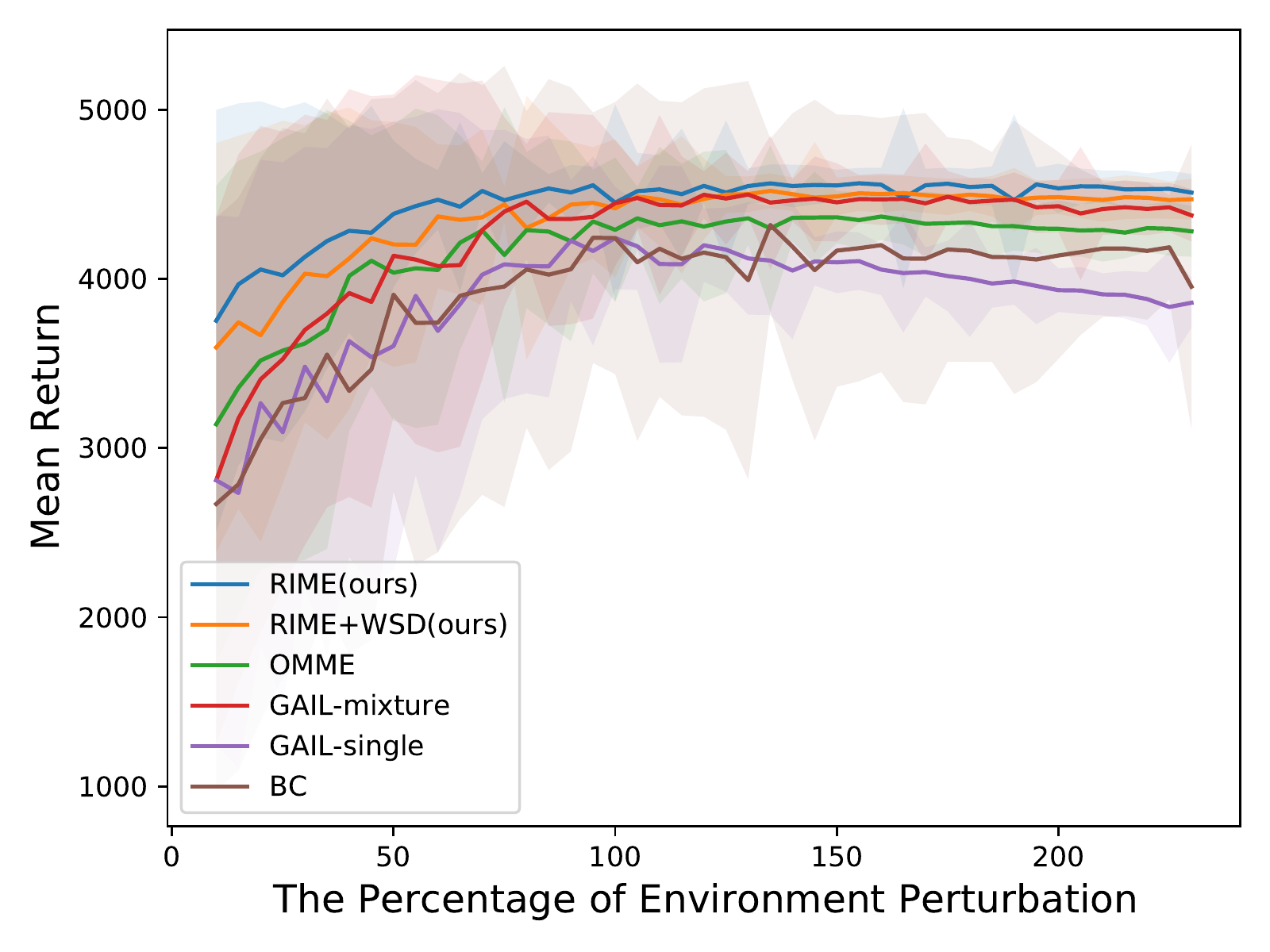}
        \captionsetup{justification=centering}
        \caption{Ant+Mass:\\comparisons}
    \end{subfigure}
    \caption{All experimental results in the $N=3$ sampled environment setting ($50\%\zeta_0,~100\%\zeta_0,~150\%\zeta_0$)\label{appendix:figures:3learningenvironments}}
\end{figure}

\newpage

\subsection{Results in the 2-D Perturbation Case}
\label{Appendix:results_in_4_learning_environmnets}

Here we provide all result plots in the 2-D perturbation case for our algorithm and the baselines.

\begin{figure*}[h]
    \centering
    \begin{subfigure}[b]{\textwidth}
        \centering
        \includegraphics[width=\textwidth]{RIME_figures/env_2par_4env_hopper.pdf}
        \captionsetup{justification=centering}
        \caption{Hopper+2-dim(G\&M)}
    \end{subfigure}
    \begin{subfigure}[b]{\textwidth}
        \centering
        \includegraphics[width=\textwidth]{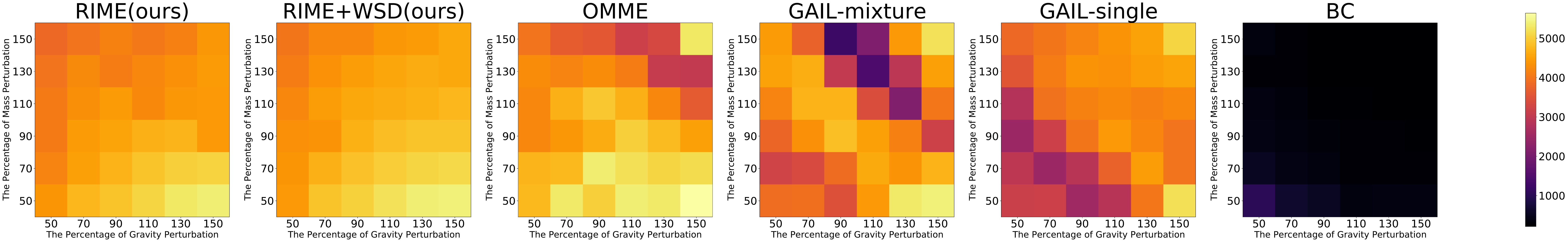}
        \captionsetup{justification=centering}
        \caption{Walker2d+2-dim(G\&M)}
    \end{subfigure}
    \begin{subfigure}[b]{\textwidth}
        \centering
        \includegraphics[width=\textwidth]{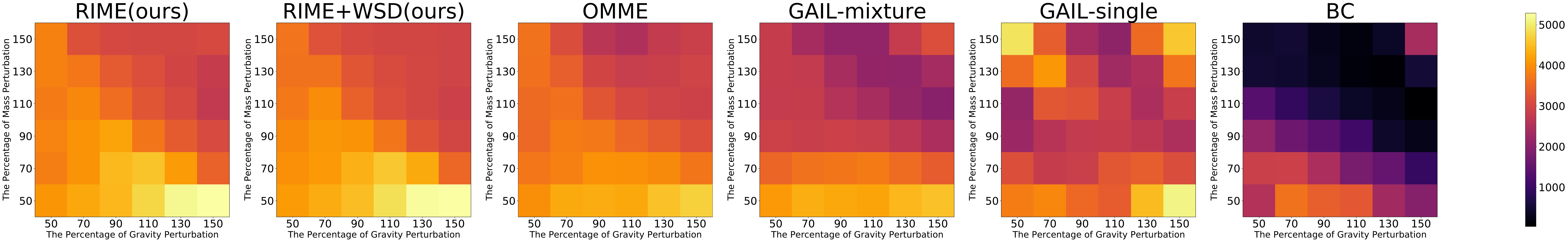}
        \captionsetup{justification=centering}
        \caption{HalfCheetah+2-dim(G\&M)}
    \end{subfigure}
    \begin{subfigure}[b]{\textwidth}
        \centering
        \includegraphics[width=\textwidth]{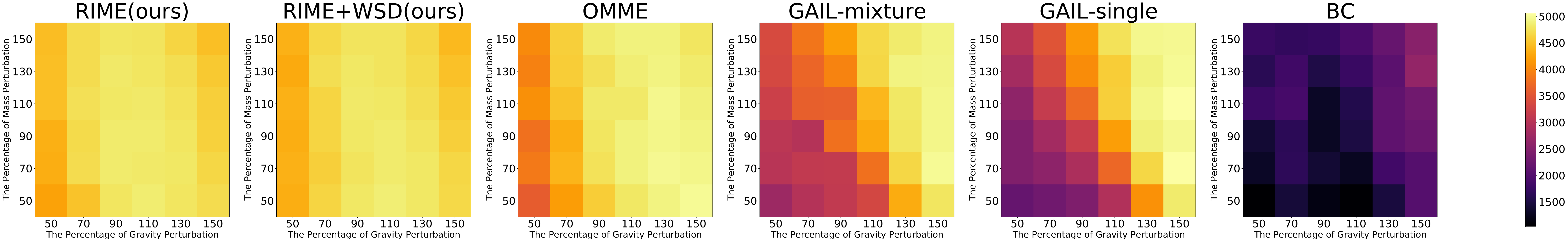}
        \captionsetup{justification=centering}
        \caption{Ant+2-dim(G\&M)}
    \end{subfigure}
    \caption{Experimental results in the $N=4$ sampled environments case with 2-dim perturbation parameters (Gravity+Mass)}
\end{figure*}

\newpage

\section{Experimental Information}
\label{appendix:experimental_information}

\subsection{Detailed Description of Other IL Baselines}
\label{Appendix:explanation_baselines}

We considered the following 4 IL baselines in \cref{section:experiments}. Here is the detailed description of these IL baselines.
\begin{enumerate}
    \item Behavior Cloning (BC): All expert demonstrations are split out $70\%$ training dataset and $30\%$ validation dataset. The policy is trained by supervised learning  until validation errors of each expert demonstration stop decreasing.
    
    \item GAIL-mixture: It is a variant of GAIL+GP applied directly to the multiple interaction environment setting. There is a single discriminator, and this discriminator distinguishes between all  $\bar{\rho}_{\pi}^i$'s and all  $\bar{\rho}_E^j$'s. The objective function of this algorithm is \eqref{appendixF:eq:GAIL-mixture}, and the objective function without GP term can be reduced to $\min_{\pi}\mathcal{D}_{JS}(\sum_i\bar{\rho}_{\pi}^i/N,\sum_j\bar{\rho}_{E}^j/N)$. 
    It minimizes the divergence between the mixture of the normalized occupancy distributions of the policy and the experts so that the mixtures are close. Thus, we call this algorithm GAIL-mixture.
    {\small
    \begin{align}
        \label{appendixF:eq:GAIL-mixture}
        \min_{\pi}\max_{D}\sum_{i=1}^{N}\Big\{\mathbb{E}_{\rho_{\pi}^i}\left[\log(1-D(s,a))\right]+\mathbb{E}_{\rho_E^i}\left[\log(D(s,a))\right]+\kappa\mathbb{E}_{\hat{x}}\left(\|\nabla_{\hat{x}} D(\hat{x})\|_2-1\right)^2\Big\}
    \end{align}
    }
    
    \item GAIL-single: It is another variant of GAIL+GP applied directly to the multiple interaction environment setting. Unlike GAIL-mixture, there are multiple discriminators. The objective function of this algorithm is \eqref{appendixF:eq:GAIL-single}, and the objective function without GP term can be reduced to $\min_{\pi}\sum_i\mathcal{D}_{JS}(\bar{\rho}_{\pi}^i,\bar{\rho}_E^i)$. 
    It minimizes the divergence between $\bar{\rho}_{\pi}^i$ and $\bar{\rho}_E^i$, which makes $\rho_{\pi}^i$ close to $\rho_E^i$, for each $i$. Thus, we call it GAIL-single.
    {\small
    \begin{align}
        \label{appendixF:eq:GAIL-single}
        \min_{\pi}\sum_{i=1}^{N}\max_{D_i}\Big\{\mathbb{E}_{\rho_{\pi}^i}\left[\log(1-D_i(s,a))\right]+\mathbb{E}_{\rho_E^i}\left[\log(D_i(s,a))\right]+\kappa\mathbb{E}_{\hat{x}}\left(\|\nabla_{\hat{x}} D_{i}(\hat{x})\|_2-1\right)^2\Big\}
    \end{align}
    }
    
    \item Occupancy measures Matching in Multiple Environments (OMME): This algorithm is a method obtained by matching occupancy measures in a different way from GAIL-mixture and GAIL-single. As mentioned in \cref{subsection:comparison_amom} \& \cref{appendix:description_for_a_matching_occupancy_measures}, if we match occupancy measures, the objective function for the policy would be $\min_{\pi}\sum_i\max_j\mathbb{E}_{\rho_{\pi}^i}[\log(1-D_{ij})]$, not $\min_{\pi}\sum_i\mathbb{E}_{\rho_{\pi}^i}[\max_j\log(1-D_{ij})]$. Except the objective function for the policy, this algorithm is the same as our algorithm.
\end{enumerate}

\subsection{Model Architecture}
\label{Appendix:model_architecture}

We developed our code based on \cite{code:pytorchppo}. In our experiments, 
we used MLP that consists of two layers with 64 cells in each layer, and this network is used for the policy. For the discriminators, we used MLP that consists of two layers with 100 cells in each layer.
We used PPO as the algorithm for updating the policy. The batch size is set to 2048, the number of update epochs for the policy at one iteration is set to 4, and the number of update epochs for the discriminator at one iteration is set to 5. Finally, the coefficient of the GP term is set to 10, and the coefficient of entropy for PPO is 0. The rest of the hyper-parameters are the same as those in \cite{RL7:PPO,RL8:GAE}.

\newpage

\subsection{Environments \& Experts}
\label{Appendix:environments_n_experts}
\begin{table}[h]
  \caption{Interaction Environments \& Expert Demonstrations}
  \label{expert_table1}
  \centering
  {\small
  \begin{tabular}{l|c|c|c|c}
    \toprule
    Task & Observation Space & Action Space & Environment Perturbation & Expert Performance \\
    \midrule
    \multirow{5}{*}{Hopper-v2} & \multirow{5}{*}{11 (Continuous)} & \multirow{5}{*}{3 (Continuous)} & 100\%g \& 100\%m & 3817.7 $\pm$ 21.9 \\
     & & & 50\%g (Gravity) & 3717.0 $\pm$ 72.1 \\
     & & & 150\%g & 3620.1 $\pm$ 4.3 \\
     & & & 50\%m (Mass) & 4415.7 $\pm$ 16.6 \\
     & & & 150\%m & 3442.0 $\pm$ 2.5 \\
    \midrule
    \multirow{5}{*}{Walker2d-v2} & \multirow{5}{*}{17 (Continuous)} & \multirow{5}{*}{6 (Continuous)} & 100\%g \& 100\%m & 5617.6 $\pm$ 16.3 \\
     & & & 50\%g & 5612.4 $\pm$ 22.1 \\
     & & & 150\%g & 5791.8 $\pm$ 46.1 \\
     & & & 50\%m & 5359.0 $\pm$ 118.9 \\
     & & & 150\%m & 5616.7 $\pm$ 14.6 \\
    \midrule
    \multirow{5}{*}{HalfCheetah-v2} & \multirow{5}{*}{17 (Continuous)} & \multirow{5}{*}{6 (Continuous)} & 100\%g \& 100\%m & 6106.3 $\pm$ 44.9 \\
     & & & 50\%g & 5396.7 $\pm$ 55.0 \\
     & & & 150\%g & 6171.4 $\pm$ 20.7 \\
     & & & 50\%m & 6159.7 $\pm$ 43.3 \\
     & & & 150\%m & 5889.9 $\pm$ 54.5 \\
    \midrule
    \multirow{5}{*}{Ant-v2} & \multirow{5}{*}{111 (Continuous)} & \multirow{5}{*}{8 (Continuous)} & 100\%g \& 100\%m & 4136.5 $\pm$ 66.1 \\
     & & & 50\%g & 3618.8 $\pm$ 102.0 \\
     & & & 150\%g & 4182.6 $\pm$ 114.7 \\
     & & & 50\%m & 4255.3 $\pm$ 97.0 \\
     & & & 150\%m & 4399.8 $\pm$ 38.3 \\
    \midrule\midrule
    Task & Observation Space & Action Space & Environment Perturbation & Expert Performance \\
    \midrule
    \multirow{4}{*}{Hopper-v2} & \multirow{4}{*}{11 (Continuous)} & \multirow{4}{*}{3 (Continuous)} & 50\%g+50\%m & 4042.7 $\pm$ 49.6 \\
     & & & 50\%g+150\%m & 3473.4 $\pm$ 41.4 \\
     & & & 150\%g+50\%m & 3789.8 $\pm$ 6.9 \\
     & & & 150\%g+150\%m & 3386.8 $\pm$ 1.6 \\
    \midrule
    \multirow{4}{*}{Walker2d-v2} & \multirow{4}{*}{17 (Continuous)} & \multirow{4}{*}{6 (Continuous)} & 50\%g+50\%m & 6478.3 $\pm$ 84.2 \\
     & & & 50\%g+150\%m & 4526.3 $\pm$ 33.0 \\
     & & & 150\%g+50\%m & 5841.6 $\pm$ 49.9 \\
     & & & 150\%g+150\%m & 5580.6 $\pm$ 12.5 \\
    \midrule
    \multirow{4}{*}{HalfCheetah-v2} & \multirow{4}{*}{17 (Continuous)} & \multirow{4}{*}{6 (Continuous)} & 50\%g+50\%m & 5015.4 $\pm$ 63.0 \\
     & & & 50\%g+150\%m & 5466.5 $\pm$ 34.5 \\
     & & & 150\%g+50\%m & 6051.8 $\pm$ 53.7 \\
     & & & 150\%g+150\%m & 5843.9 $\pm$ 63.3 \\
    \midrule
    \multirow{4}{*}{Ant-v2} & \multirow{4}{*}{111 (Continuous)} & \multirow{4}{*}{8 (Continuous)} & 50\%g+50\%m & 3818.2 $\pm$ 97.4 \\
     & & & 50\%g+150\%m & 3846.5 $\pm$ 106.6 \\
     & & & 150\%g+50\%m & 4316.7 $\pm$ 114.9 \\
     & & & 150\%g+150\%m & 4516.0 $\pm$ 77.3 \\
     \bottomrule
  \end{tabular}
  }
\end{table}


\end{document}